%% file: main_arxiv.tex
\pdfoutput=1
\documentclass{article}
\usepackage[margin=1in]{geometry}
\usepackage[utf8]{inputenc}
\usepackage[T1]{fontenc}
\usepackage{microtype}
\usepackage{graphicx}
\usepackage{adjustbox}
\usepackage{enumitem}
\usepackage{booktabs} %
\usepackage{makecell}
\usepackage{fullpage}
\usepackage{hyperref}
\usepackage{natbib}
\usepackage{authblk}
\usepackage{xcolor}
\usepackage{algorithm}
\usepackage{algpseudocode}
\usepackage{xspace}
\usepackage{multirow} %
\usepackage{subcaption}
\usepackage{placeins}

\usepackage[suppress]{color-edits}
\addauthor{na}{purple!60!black}
\addauthor{mj}{red}

\definecolor{Darkblue}{rgb}{0,0,0.4}
\hypersetup{colorlinks=true,
            citebordercolor={.6 .6 .6},
            linkbordercolor={.6 .6 .6},
            citecolor=Darkblue,
            urlcolor=blue,
            linkcolor=black}

\usepackage{amsmath}
\usepackage{amssymb}
\usepackage{mathtools}
\usepackage{amsthm}
\usepackage{thm-restate}  %

\newcommand{\intagg}{\calA_{\text{intersect}}}
\newcommand{\addagg}{\calA_{\text{add}}}
\newcommand{\addaggfun}{\addagg(\bx^{(1)}, \ldots, \bx^{(K)}; \bw)}

\newcommand{\intaggfun}{\intagg(\bx^{(1)}, \ldots, \bx^{(K)})}

\newcommand{\aggop}{\bx^{(1)}, \ldots, \bx^{(K)} \rightarrow \bx^{(A)}}

\usepackage[capitalize, noabbrev]{cleveref}

\theoremstyle{plain}
\newtheorem{theorem}{Theorem}[section]
\newtheorem{proposition}[theorem]{Proposition}
\newtheorem{lemma}[theorem]{Lemma}
\newtheorem{corollary}[theorem]{Corollary}
\newtheorem{example}[theorem]{Example}
\theoremstyle{definition}
\newtheorem{definition}[theorem]{Definition}
\newtheorem{assumption}[theorem]{Assumption}
\theoremstyle{remark}

\usepackage{nicefrac}
\usepackage{dsfont}

\usepackage{minitoc}

\definecolor{Darkblue}{rgb}{0,0,0.4}

\input{common}

\begin{document}
\doparttoc
\faketableofcontents

\title{Power and Limitations of Aggregation in \\ Compound AI Systems}
\date{}
\author[1]{Nivasini Ananthakrishnan}
\author[2]{Meena Jagadeesan}

\affil[1]{UC Berkeley
}
\affil[2]{Stanford University 
}

\maketitle
\begin{abstract}
When designing compound AI systems, a common approach is to query multiple copies of the same model and aggregate the responses to produce a synthesized output. Given the homogeneity of these models, this raises the question of whether aggregation unlocks access to a greater set of outputs than querying a single model. In this work, we investigate the power and limitations of aggregation within a stylized principal-agent framework. This framework  models how the system designer can partially steer each agent's output through its reward function specification, but still faces limitations due to prompt engineering ability and model capabilities. Our analysis uncovers three natural mechanisms---\textit{feasibility expansion}, \textit{support expansion}, and \textit{binding set contraction}---through which aggregation expands the set of outputs that are elicitable by the system designer. We prove that any aggregation operation must implement one of these mechanisms in order to be elicitability-expanding, and that strengthened versions of these mechanisms provide necessary and sufficient conditions that fully characterize elicitability-expansion. Finally, we provide an empirical illustration of our findings for LLMs deployed in a toy reference-generation task. Altogether, our results take a step towards characterizing when compound AI systems can overcome limitations in model capabilities and in prompt engineering.
\end{abstract}

\input{arxiv_sections/introduction_arxiv}

\input{arxiv_sections/model_arxiv}

\input{arxiv_sections/merged_arxiv}

\input{arxiv_sections/empirical}

\input{arxiv_sections/discussion_arxiv}

\section{Acknowledgments}
We would like to thank Kate Donahue, Nika Haghtalab, Tatsu Hashimoto, Michael I. Jordan, and Manish Raghavan for useful feedback and discussions on the paper.
This work was partially supported by a Stanford AI Lab postdoctoral fellowship. 

\bibliographystyle{plainnat}
\bibliography{ref_arxiv.bib}

\newpage
\appendix
\part{Appendix}
\parttoc
\newpage
\input{arxiv_sections/appendix_arxiv}

\input{arxiv_sections/appendix_empirical_arxiv}

\end{document}

%% file: common.tex
\newcommand{\R}{\mathbb{R}}

\newcommand{\calA}{\mathcal{A}}
\newcommand{\calB}{\mathcal{B}}

\newcommand{\calS}{\mathcal{S}}

\newcommand{\calV}{\mathcal{V}}

\newcommand{\balpha}{\boldsymbol{\alpha}}
\newcommand{\bconstraints}{\boldsymbol{C}}
\newcommand{\bfeatures}{\boldsymbol{F}}

\newcommand{\blambda}{\boldsymbol{\lambda}}

\newcommand{\bgamma}{\boldsymbol{\gamma}}

\newcommand{\bd}{\boldsymbol{d}}

\newcommand{\bu}{\boldsymbol{u}}
\newcommand{\bv}{\boldsymbol{v}}
\newcommand{\bw}{\boldsymbol{w}}
\newcommand{\bx}{{\boldsymbol{x}}}
\newcommand{\by}{\boldsymbol{y}}

   \newcommand{\reals}{\mathbb{R}}
   \newcommand{\preals}{\mathbb{R}_{\ge 0}}

\newcommand{\Bcal}{{\mathcal B}}

\renewcommand{\hbar}{{\bar{h}}}

%% file: arxiv_sections/introduction_arxiv.tex
\section{Introduction}

Compound AI systems---which leverage multiple AI components, rather than a single model in isolation---present a powerful paradigm to tackle complex tasks \citep{BAIR2024CompoundAISystems}. In the context of large language models (LLMs), one common approach is to create many copies of the same model, give these models different prompts or access to different tools, and aggregate the outputs of these models at test-time. This approach has proven fruitful in multi-agent research systems \citep{Anthropic2025MultiAgentSystem} where a lead LLM agent delegates subtasks to different specialized agents and aggregates their outputs, in multi-agent debate protocols where different LLM agents seek consensus \citep{Du2024MultiagentDebate} or argue for different answers \citep{Khan2024PersuasiveDebate}, and in prompt ensembling approaches where the outputs from different prompts are combined \citep{Arora2023AMA}.

Given the empirical success of these compound LLM systems, this raises the question of when aggregating across multiple copies of the same model unlocks greater performance than querying a single model. At first glance, aggregation may seem redundant when the model copies are homogeneous. However, one source of improved performance is at the prompting level: complex prompt engineering approaches for a single model may be replaceable by simple but diverse prompting strategies for a set of models \citep{Arora2023AMA}, illustrating how aggregation across models can overcome limitations in prompt engineering ability. Another source of improved performance is at the output level: aggregating multiple LLM agents over multiple interactions can help correct errors such as hallucinations \citep{Du2024MultiagentDebate}, illustrating how aggregation can overcome limitations in model capabilities as well. This suggests that the extent to which aggregation overcomes these limitations in prompt engineering and model capabilities fundamentally impacts the power of compound AI systems. 

In this work, we study the power and limitations of aggregation from a theoretical perspective, building on a classical principal-agent framework \citep{Kleinberg2019Effort}. Our focus is on compound AI systems where a system designer passes reward functions (e.g., via prompts) to many copies of the same model and then aggregates their outputs. For intuition, consider a reference-generation task  where the goal is to produce a list of paper references; the system designer may benefit from prompting different copies of an LLM to focus on different topics, and then taking the union or intersection of the resulting reference lists.

In our stylized principal-agent framework (Section \ref{sec:model}), the system designer (i.e., the principal) designs reward functions to elicit $M$-dimensional outputs from each agent (i.e., each model), and aggregates these outputs to produce a synthesized output. Each agent generates an output in its feasible set that maximizes the reward function. The system designer co-designs the reward functions across models to try to produce a specific output. We capture prompt engineering limitations as the reward functions operating over a coarser $N$-dimensional feature space rather than directly over $M$-dimensional outputs, and model capability limitations as conic constraints on each agent's feasible set of outputs. 

\begin{figure*}[t]
    \centering
    \begin{subfigure}[t]{0.32\textwidth}
        \centering
        \includegraphics[width=\textwidth]{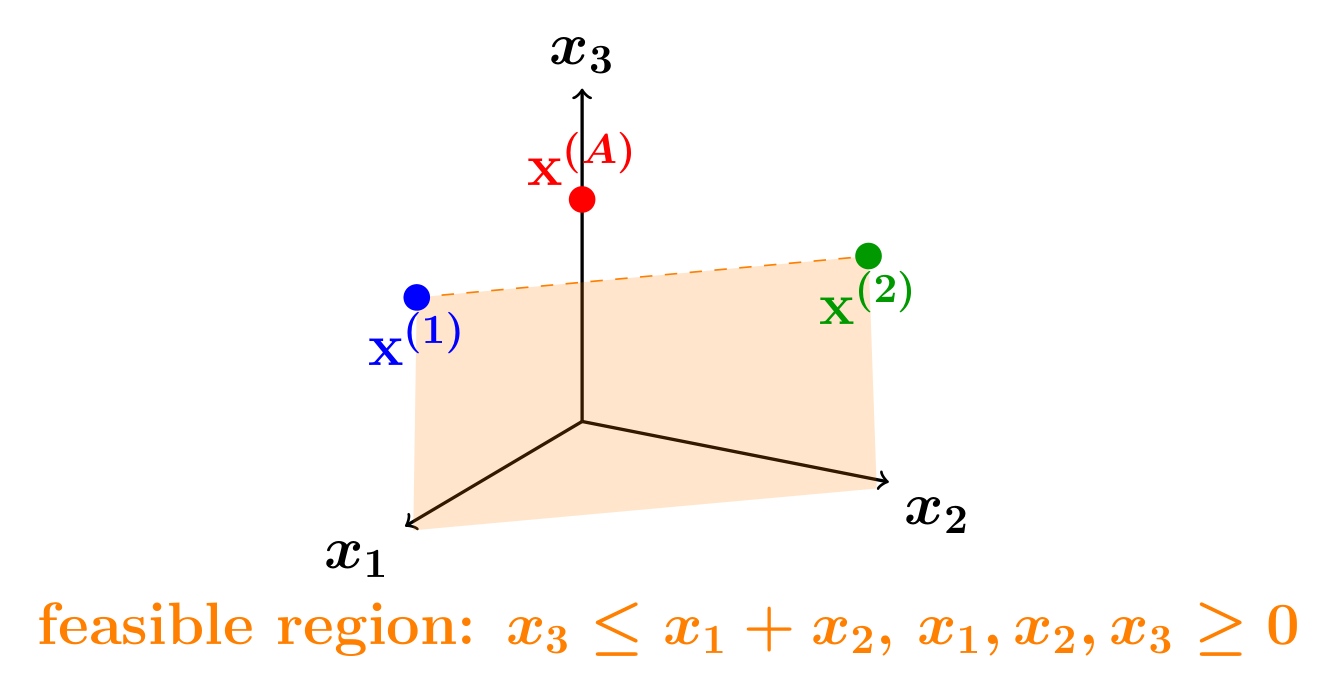}
        \caption{Feasibility expansion}
        \label{fig:feasibility-expansion}
    \end{subfigure}
    \hfill
    \begin{subfigure}[t]{0.32\textwidth}
        \centering
        \includegraphics[width=\textwidth]{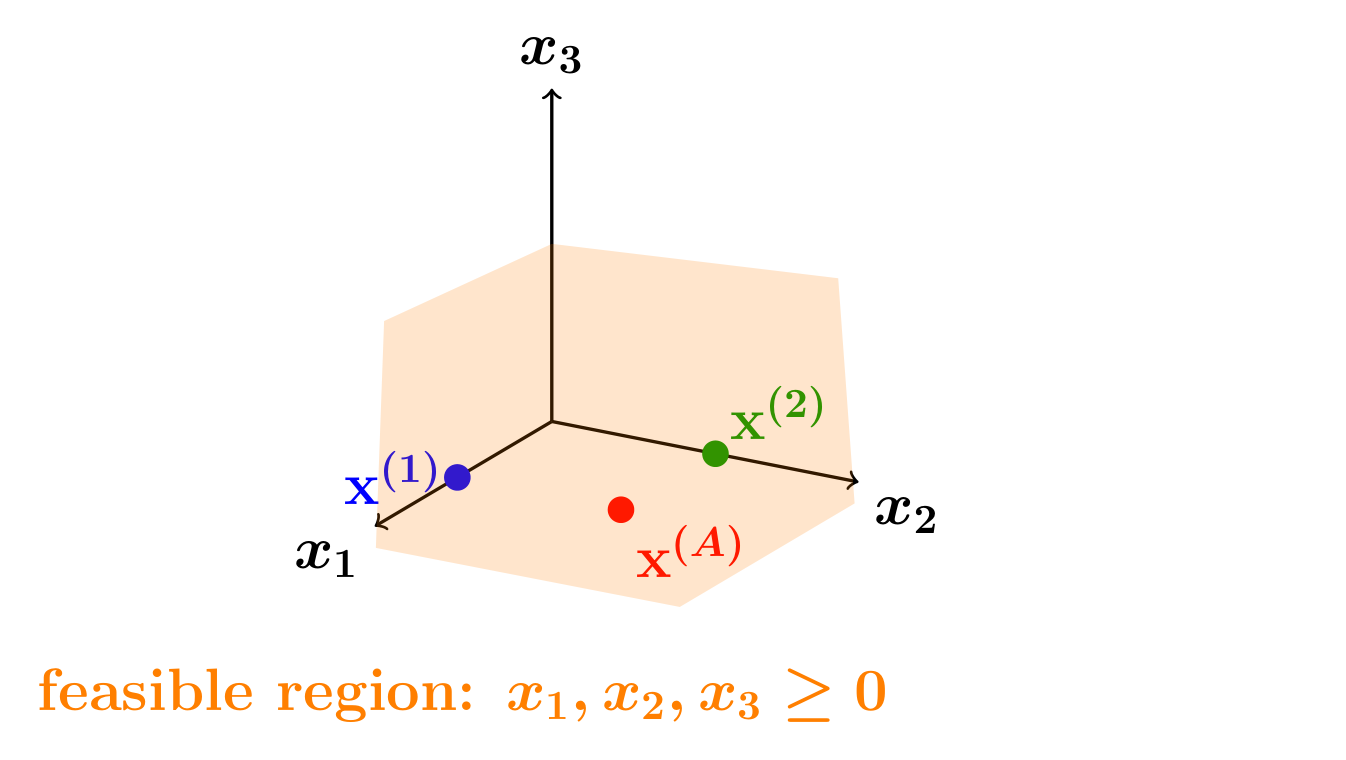}
        \caption{Support expansion}
        \label{fig:support-expansion}
    \end{subfigure}
    \hfill
    \begin{subfigure}[t]{0.32\textwidth}
        \centering
        \includegraphics[width=\textwidth]{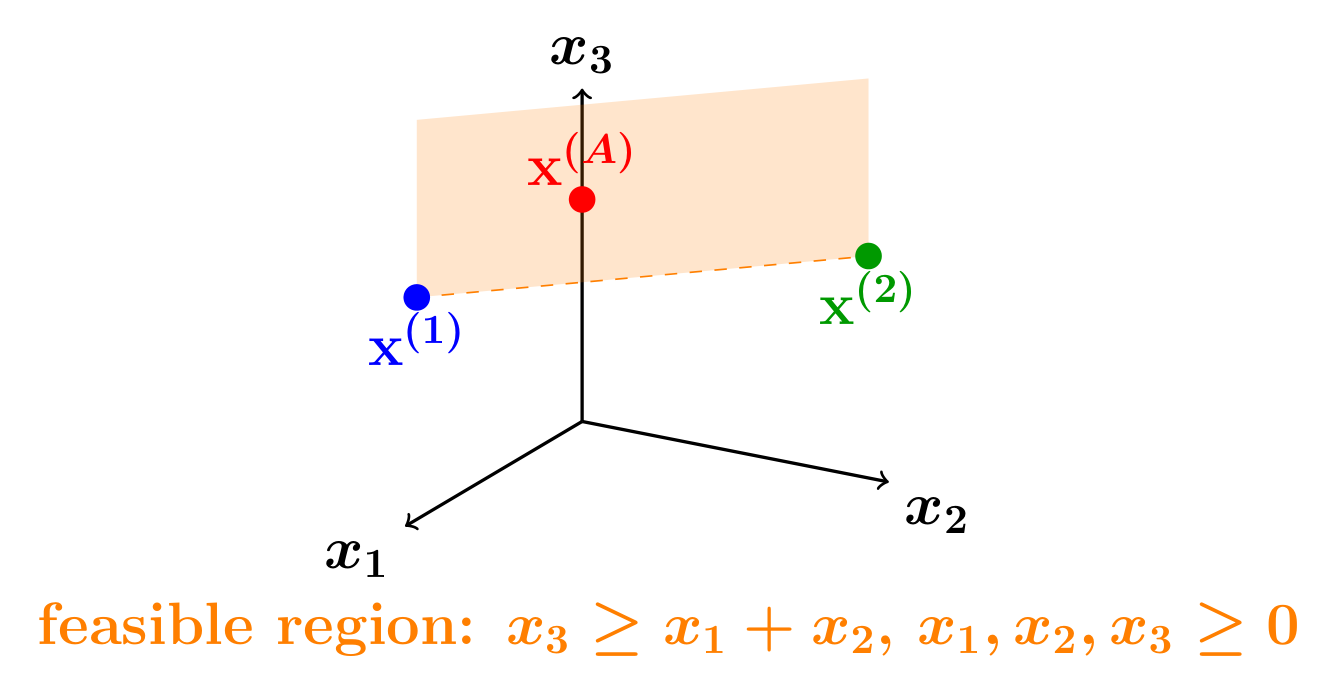}
        \caption{Binding set contraction}
        \label{fig:binding-set-contraction}
    \end{subfigure}

    \caption{Three mechanisms by which the aggregation operation $\bx^{(1)}, \bx^{(2)} \rightarrow \bx^{(A)}$ expands the set of outputs that the system designer can elicit. Feasibility expansion captures when two feasible vectors are aggregated into an infeasible vector (left; \Cref{def:feasibility_expand}). Support expansion captures when two vectors are aggregated into a vector with richer support (middle; \Cref{def:support_expand}). Binding set contraction captures when two vectors on the boundary of the feasible set are aggregated into a vector in the interior (right; \Cref{def:bind_contract}). Any aggregation operation must implement one of these mechanisms to offer power to the system designer (Theorem \ref{thm:weaker_necessary}), and strengthened versions of these mechanisms characterize when aggregation adds power (Theorem \ref{thm:necessary}, Theorem \ref{thm:sufficient}). See Figure \ref{fig:empirical-mechanisms} for an empirical illustration of these mechanisms for LLMs in a reference-generation task.}
    \label{fig:elicitability-modes}
\end{figure*}\label{fig:mechanisms_examples}

Using our theoretical framework, we formalize three natural mechanisms by which aggregating across multiple agents enables the system designer to elicit a greater set of outputs than relying on a single model (Figure~\ref{fig:mechanisms_examples}). The first mechanism is \textit{feasibility expansion}, where aggregation produces outputs outside of any agent's feasibility set. The second is \textit{support expansion}, where aggregation combines outputs with smaller supports into an output with a richer support. The third is  \textit{binding set contraction}, where aggregation combines outputs that are binding with respect to constraints into an output that falls within the interior. 

We then formally connect these mechanisms to elicitability-expansion. Specifically, we find that the power of aggregation fundamentally relies on at least one of these mechanisms being implemented: if none are implemented, then aggregation does not expand elicitability on any problem instance (Theorem \ref{thm:weaker_necessary}). This illustrates a strong form of limitation where the aggregation operation offers no power to the system designer regardless of the level of prompt engineering limitations. However, we also show that these mechanisms do not provide sufficient conditions for elicitability-expansion. 
To close this gap, we prove that strengthened versions of the mechanisms fully characterize elicitability-expansion (Theorem \ref{thm:sufficient} and Theorem \ref{thm:necessary}).

Finally, to connect our findings to LLM aggregation, we empirically illustrate the three mechanisms for LLMs (GPT-5.4, GPT-5-mini, GPT-4o-mini) deployed in a toy reference-generation task (Figure \ref{fig:empirical-mechanisms}; Table \ref{tab:empirical_results}). The system designer aims to generate a list that reflects a specific balance of topics, but can only prompt the model using higher-level topics. Within this task, we empirically construct aggregation operations that implement each mechanism (feasibility expansion, support expansion, and binding-set contraction), and that also expand the set of outputs that are elicitable to the system designer. This empirical analysis illustrates the robustness of our findings beyond the assumptions of our theoretical framework.

Altogether, our work uncovers key mechanisms that underpin the power and limitations of aggregation in compound AI systems. Our results fully characterize when aggregation expands the set of elicitable outputs. Our results offer conceptual insights about aggregation for system designers (Section \ref{subsec:conceptualinsights}), and our mechanisms connect to empirical phenomena observed for language models (Section \ref{sec:discussion}).
More broadly, our work takes a step towards understanding when aggregation of multiple copies of the same model provides benefits to system designers. 

\subsection{Related Work}

\paragraph{Aggregation across multiple models.} Aggregating outputs from multiple copies of an LLM is a common strategy for tackling complex tasks. One common approach is resampling the same model or reasoning trace and then selecting outputs via reward models \citep{Christiano2017RLHF}, self-consistency \citep{Wang2023SelfConsistency}, or synthesis \citep{Zhang2025CoTSynthesizer}; coverage is an important property for inference-time computations \citep{huang2025best}. Closest to our setting are systems with multiple copies of the same model under different reward function specifications, as in LLM debate \citep{Du2024MultiagentDebate}, consensus games between generators and discriminators \citep{Jacob2024ConsensusGame}, prompt ensembling \citep{Arora2023AMA}, and multi-agent research frameworks \citep{Anthropic2025MultiAgentSystem}. Another related approach is to combine many different LLMs, for example by routing queries across different LLMs \citep{Chen2024FrugalGPT} or adversarially combining models to generate unsafe outcomes \citep{Jones2024AdversariesCombinations}.  Motivated by these systems, we study the fundamental conceptual question of when aggregating multiple models elicits strictly more outputs than querying a single model. 

The conceptual question that we study is inspired by a rich literature on aggregation in a variety of other domains. This includes ensembling \citep{Dietterich2000Ensembles}, voting \citep{Ladha1992CondorcetCorrelation}, distributed algorithms \citep{Lynch1996DistributedAlgorithms}, and multi-agent reinforcement learning \citep{Tan1993MARL}, among other domains. 
For example, some works (e.g., \citep{gentzkow2017bayesian, collina2025emergent, bansal2021does, donahue2022human}) demonstrate the benefits of aggregating heterogeneous agents in achieving alignment. However, these works focus on a prespecified set of distinct agents, whereas we focus on eliciting different behaviors from the same agent through designing different rewards. Turning to market-level aggregation coming from users choosing between different models, 
another line of work  studies when model-providers are incentivized to train heterogeneous models \citep{ben2019regression,jagadeesan2023improved,raghavan2024competition,
xu2025heterogeneous, einavR25, Donahue2026OutputAggregation}. As another example, a growing line of work on alignment under plurality uses principles from voting rules and social choice to design methods for aggregating diverse human preferences and training models to optimize this aggregated objective (e.g., \citep{conitzer2024social,mishra2023ai, dai2024mapping, chidambaram2025direct,shirali2025direct,golz2025distortion}).

\paragraph{Principal-Agent Models and Reward Design.}
Our model extends the principal-agent framework introduced by \citet{Kleinberg2019Effort}. This model was originally developed in a strategic classification setting where an agent takes strategic efforts to improve their classification outcome. Within this framework, they characterize the type of effort that classifiers induce in agents, identifying when an agent is incentivized to exert efforts that genuinely improve their outcomes rather than merely gaming the classifier. We extend their model in two ways to study how aggregating outputs from models can overcome prompting and output limitations. First, the principal interacts with multiple agents at once and aggregates the outputs produced by different agents. Second, agents face conic constraints on their outputs, which captures output limitations. Our goal also differs: we introduce and characterize a novel notion of elicitability-expansion to capture the power afforded by aggregation for some level of prompting limitation.\footnote{ The main technical lemma (\Cref{thm:fixed_feature_map}) underlying our characterization of elicitability-expansion extends Theorem 3 of their paper—which characterizes the elicitability of a given effort profile—to the setting with additional conic constraints on agent outputs. Their characterization reveals a single-agent limitation: effort profiles with larger supports are difficult to elicit (Lemma 10 of \cite{Kleinberg2019Effort}). This limitation underlies support expansion, one of the three mechanisms through which we show aggregation overcomes single-agent limitations.} We note that \citet{alon2020multiagent}  also generalizes \citet{Kleinberg2019Effort} to multiple agents, but their model and motivation differ significantly from ours. In their model, different agents have different mappings from features to outputs but share the same reward function specification, and they aim to design a single reward function specification that incentivizes agents to all choose an output vector with a specific structure.

These models fall under the broader principal-agent framework~\citep{holmstrom1979, grossman1983, laffont2002, bolton2005}, which captures the challenge of designing rewards based on imperfect proxies. \citet{zhuang2020consequences} use this framework to study misalignment between the AI agent's reward and the human's reward, focusing on the impact of underspecification. In addition to the effects of imperfect reward functions arising from limitations in reward design, principal-agent frameworks also capture agent limitations, often in the form of costs for actions. Similar to the interdependence of output dimensions induced by our conic constraints, multi-task principal-agent settings~\citep{holmstrommilgrom1991, slade1996, bondgomes2009} study the effects of cost dependencies between tasks, which lead to phenomena such as substitutability and complementarity. Substitutability occurs when effort on one task increases the marginal cost of effort on another, while complementarity occurs when effort on one task decreases it. These dependencies among tasks are similar to the interactions among output dimensions captured by conic constraints in our model. Principal–agent theory has also considered multiple agents~\citep{holmstrom1982teams, lazearrosen1981, dasarathaGolubShah2024}, focusing mainly on the joint design of rewards, but our work differs in considering the use of aggregation to synthesize new outputs. Finally, in a related but complementary literature on strategic classification, a handful of papers \citep{hossain2024strategic,liu2022strategic} study strategic interactions between multiple agents.

%% file: arxiv_sections/model_arxiv.tex
\section{Model}\label{sec:model}

We extend the principal-agent framework in  \cite{Kleinberg2019Effort} to capture a compound AI system with $K$ agents (who represent LLMs) and a single principal (the system designer) who aggregates the outputs of the agents. The outputs are represented in $M$-dimensions, and we capture model capability limitations as conic constraints $\bconstraints$ over the output space (Section \ref{subsec:outputs}). 
The system designer designs reward function specifications and budget levels in order to elicit outputs from each agent (Section \ref{subsec:reward}). The reward function operates over a coarser feature space defined by $\balpha$, which captures prompt engineering limitations. Within this framework, we formalize when aggregation is elicitability-expanding (i.e., when it expands the set of outputs that the system designer can elicit), which captures the power of aggregation (Section \ref{subsec:elicitability}). 
As a case study, we instantiate our framework in a reference-generation task (Section \ref{subsec:casestudy}) which we will revisit in our empirical analysis in Section \ref{sec:empirical}. 
We defer a discussion of the  limitations of our framework to Section \ref{sec:discussion}.

\subsection{Output space}\label{subsec:outputs}

We embed outputs of agents into $M$-dimensional vectors with non-negative coordinates. We view each output dimension as capturing a different characteristic of the output. The vector representation $\bx$ quantifies the degree to which the output captures each characteristic. We note that some dimensions may capture undesirable characteristics (e.g., hallucinations). The system designer seeks a specific output $\bx^{(A)} \in \mathbb{R}_{\ge 0}^M$.

\paragraph{Model capability limitations as conic constraints.}
Our framework can capture restrictions on the outputs agents produce, for example due to capability limitations. We study restrictions requiring output vectors to satisfy conic constraints. These conic constraints capture the types of outputs that the agent can produce: for example, some agents may not be able to avoid producing hallucinations without facing capability degradation along other characteristics. 

We let $L$ denote the number of conic constraints, and we let $\bconstraints \in \mathbb{R}^{L \times M}$ denote the conic constraints themselves. Let $\bconstraints_i \in \mathbb{R}^M$ denote the $i$th row of $\bconstraints$ for $i \in [L]$, and let $\bconstraints_V \in \mathbb{R}^{|V| \times M}$ denote the set of rows corresponding to indices $V \subseteq [L]$. We denote by $\bconstraints_{\emptyset}$ the zero-vector, to capture how  $\{\bd: C_\emptyset \bd \le 0, \bd \ge 0 \} = \preals^M$. This restriction defines a feasible set $\mathcal{B}^{\mathrm{feasible}}(\bconstraints)$ of output vectors:
\[ \mathcal{B}^{\mathrm{feasible}}(\bconstraints) := \left\{\bx \in \mathbb{R}^M_{\ge 0} \mid \bconstraints \bx \le \mathbf{0} \right\}. \]

We assume that membership in $\mathcal{B}^{\mathrm{feasible}}(\bconstraints)$ does not implicitly require any output dimension to always be zero. This assumption on $\bconstraints$ is stated below.  
\begin{assumption}\label{ass:non-zero-feasible}
    We assume that given any output dimension $i \in [M]$, there exists an output vector $\bx \in \mathbb{R}^M_{\ge 0}$ satisfying $x_i > 0$ and $\bconstraints \bx \le 0$.
\end{assumption}

\subsection{Reward and Budget Specification}\label{subsec:reward}

The system designer designs a reward function specification $R^{(k)}$ and a budget level $E^{(k)}$ for each agent $k \in [K]$. The reward function specification represents the reward function implicit in the prompt given to the agent, and the budget level represents the level of test-time compute that the agent is allowed to use. 

\paragraph{Prompt engineering limitations as coarser features.}
To capture prompt engineering limitations, we model the reward function specification as operating over a coarser $N$-dimensional feature space than the outputs. We adopt the same form of how output vectors map to features as in~\cite{Kleinberg2019Effort}. Here, the features $\bfeatures(\bx) = [F_1(\bx), \ldots, F_N(\bx)]$ take the form 
\[F_j(\bx) = f_j \left (\sum_{i=1}^M \alpha_{ji} \bx_i \right ).\]

We call the $\alpha_{ji}$'s \emph{feature weights}. We will denote by $\balpha \in \mathbb{R}_{\ge 0}^{N \times M}$ the matrix with entries $\alpha_{ji}$ and call this the \emph{feature weights matrix}. We borrow the following assumptions on the feature mapping functions $F_j$, $j \in [N]$ from~\cite{Kleinberg2019Effort}. 

\begin{assumption}[Assumptions on feature mapping]\label{ass:feature-maps}
    We assume that each $f_j(\cdot)$ for $j \in [N]$ is strictly increasing, nonnegative, smooth, and weakly concave (i.e., diminishing returns from increasing quality on this dimension). We also assume each   $\alpha_{ji} \ge 0$, for $i \in [M], j \in [N]$. Finally, we assume that $\balpha$ has no zero rows, which rules out trivial features. 
\end{assumption}

\paragraph{Reward functions.}
We consider reward functions $R^{(1)}, \ldots, R^{(K)}: \mathbb{R}^N \rightarrow \mathbb{R}$ which operate on the features $(F_j)_{j=1}^N$. Following prior work \citep{Kleinberg2019Effort}, we restrict to \emph{monotone} reward functions $R$ satisfying the notion of monotonicity stated below.

\begin{assumption}[Monotonicity of reward functions]
    We assume reward functions $R$ are monotone. That is, $R$ does not decrease if all features are weakly increased i.e., if $F_j(\bx') \ge F_j(\bx)$ for every $j \in [N]$, then $R(x') \ge R(x)$.  Additionally, there exists a feature $F_j$ such that increasing the value of $F_j$, keeping all other features fixed, strictly increases the value of $R$. 
\end{assumption}

\paragraph{Agent optimization program.}
Given a monotone reward function $R^{(k)}$ and a positive budget level $E^{(k)} > 0$, each agent $k$ produces an output that maximizes its reward $R^{(k)}$, meets its budget constraint, and is within the feasible set $\mathcal{B}^{\text{feasible}}(\bconstraints)$:
that is,
\[ \smash{\bx \in \mathbf{X}^*(R^{(k)}, E^{(k)}; \bconstraints, \bfeatures) := \text{argmax}_{\bx \in \mathcal{B}^{\text{feasible}}(\bconstraints), \|\bx\|_1 \le E^{(k)}} R^{(k)}( \bfeatures(\bx)).} \]
 This captures how even though agents are homogeneous and solve the same optimization program, they can be given different reward functions and thus produce different outputs.

\subsection{Elicitability}\label{subsec:elicitability}

Given constraints $\bconstraints$ and features $\bfeatures$, we say that an output $\bx$ is elicitable by a single agent if there exist a monotone reward function $R$ and a positive budget level $E$ such that $\bx \in \mathbf{X}^*(R, E; \bconstraints, \bfeatures)$. 
As shown in prior work \citep{Kleinberg2019Effort} and illustrated in Section \ref{subsec:examples}, some output vectors $\bx \in \mathcal{B}^{\mathrm{feasible}}(\bconstraints)$ are not elicitable by any reward function $R$ and budget level $E$. We note that the reward function $R$ determines whether the direction $\bx / \|\bx\|_1$ is elicitable, and the specified budget $E$ determines the $\ell_1$ norm $\|\bx\|_1$ of elicitable outputs. Specifically, increasing the budget scales up the $\ell_1$ norm of the elicitable outputs, but it does not change the set of elicitable directions (\Cref{lem:only_directions}).

As we will show later in our analysis, the condition for whether $\bx$ is elicitable only depends on $\bx$ through the following sufficient statistic $(\calS(\bx),\calV(\bx))$.  The first component $\calS(\bx) = \{i \in [M]: x_i > 0\}$ denotes the support of $x$. The second component $\calV(\bx) = \{l \in [L]: C_l \bx = 0\}$ denotes the set of indices of conic constraints that are binding at $\bx$.

\paragraph{Elicitability-expansion.}
When the system designer can aggregate the outputs of different agents, this may expand the set of elicitable outputs. That is, outputs that are not directly elicitable by specifying reward functions and budgets may be obtained by aggregating a set of outputs that are directly elicitable. The following definition captures when aggregation expands the set of elicitable outputs. 
\begin{definition}[Elicitability-expansion]
We call $\bx^{(1)} \ldots, \bx^{(K)} \rightarrow \bx^{(A)}$ an \textbf{elicitability-expanding operation} \emph{relative to constraints $\bconstraints$ and features $\bfeatures$} if
\begin{itemize}
    \item There exist monotone reward functions $R^{(1)}, \ldots, R^{(K)}$ and positive budget levels $E^{(1)}, \ldots, E^{(K)}$ such that $\bx^{(k)} \in X^*(R^{(k)}, E^{(k)}; \bconstraints, \bfeatures)$ for all $k \in [K]$.  
    \item There does not exist a monotone  reward function $R$ and budget level $E > 0$ such that $x^{(A)} \in X^*(R, E; \bconstraints, \bfeatures)$.
\end{itemize}

We say that $\aggop$ is an elicitability-expanding operation relative to constraints $\bconstraints$ if there \emph{exist} features $\bfeatures$ such that $\aggop$ is elicitability-expanding relative to $\bconstraints$ and $\bfeatures$.
\end{definition}

We only consider aggregation operations $\aggop$ where each $\bx^{(k)}$, for $k \in [K]$ is feasible. Other operations are clearly not useful to the system designer. Intuitively, if an aggregation operation is elicitability-expanding, then there is a prompt engineering limitation under which this operation offers power. Namely, this operation allows the system designer to query the model using multiple distinct reward and budget specifications and put together the resulting output vectors to obtain a new output that is not directly elicitable by specifying a single reward and budget to the model. When an aggregation operation is not elicitability-expanding it is not useful for \emph{any} form of prompt engineering limitation. 
Our results (\Cref{thm:fixed_feature_map}) show that the elicitable set under features $\bfeatures$ depends on $\bfeatures$ only through the feature weights matrix $\balpha$. So we will denote the elicitability set and conditions for elicitability-expansion in terms of $\balpha$ instead of features $\bfeatures$.

\paragraph{Aggregation rules.} An aggregation rule is a mapping from a list of output vectors $(\bx^{(1)}, \ldots, \bx^{(K)})$ to an aggregated output vector $\bx^{(A)}$. There are two natural  aggregation rules we will often consider in our work. Although our results apply to more general aggregation rules, we will often use these natural aggregation rules to provide examples. 

The first is \textit{intersection aggregation}, which is defined to be the coordinate-wise minimum of the vectors: 
\begin{equation}
\label{eq:intersectionaggregation}
\smash{\intaggfun = \bx^{(1)} \wedge \ldots \wedge \bx^{(K)}}.
\end{equation}
This aggregation rule combines outputs based on commonality among different output vectors, which is conceptually similar to debate protocols \citep{Du2024MultiagentDebate} that aim to create agreement or inference scaling methods that aim to filter out incorrect information \citep{Zhang2025CoTSynthesizer}. The second is \textit{addition aggregation}, which takes a weighted sum of the vectors. For a weight vector $\bw \in \preals^K$, the rule is given by
\begin{equation}
\label{eq:additionaggregation}
\addaggfun = \sum_{k=1}^K \bw_k \bx^{(k)}.
\end{equation}
Addition aggregation interpolates among different output directions. This rule conceptually captures how system designers synthesize multiple outputs to 
delegate specialized subtasks to each agent and synthesize the outputs of these subtasks \citep{BAIR2024CompoundAISystems, Anthropic2025MultiAgentSystem}.

\input{arxiv_sections/examples}

%% file: arxiv_sections/examples.tex
\subsection{Illustrative Example: Reference-Generation Task}\label{subsec:casestudy}

To illustrate how our framework captures LLM aggregation, we consider a natural task: generating a list of papers on a given topic ~\citep{wang2023scibench, press2024citeme}. We will also revisit this task in our empirical analysis in Section \ref{sec:empirical}. At a high-level, the system designer may benefit from asking many copies of the same LLM to produce a reference list, and then aggregating these lists into a final list. For example, to increase the breadth of the final list, the system designer may prompt each LLM to focus on topics from a specific subarea and then take a union of the lists. Alternatively, to reduce hallucinations or remove irrelevant references, the system designer may prompt each LLM to focus on different measures of paper quality, and then take an intersection of the lists. 

Below, we show how different instantiations of our framework capture different aspects of this task. 
\begin{figure}[t]
    \centering
    
    \begin{subfigure}[b]{0.32\textwidth}
        \centering
        \includegraphics[width=\linewidth, height=0.4\textheight, keepaspectratio]{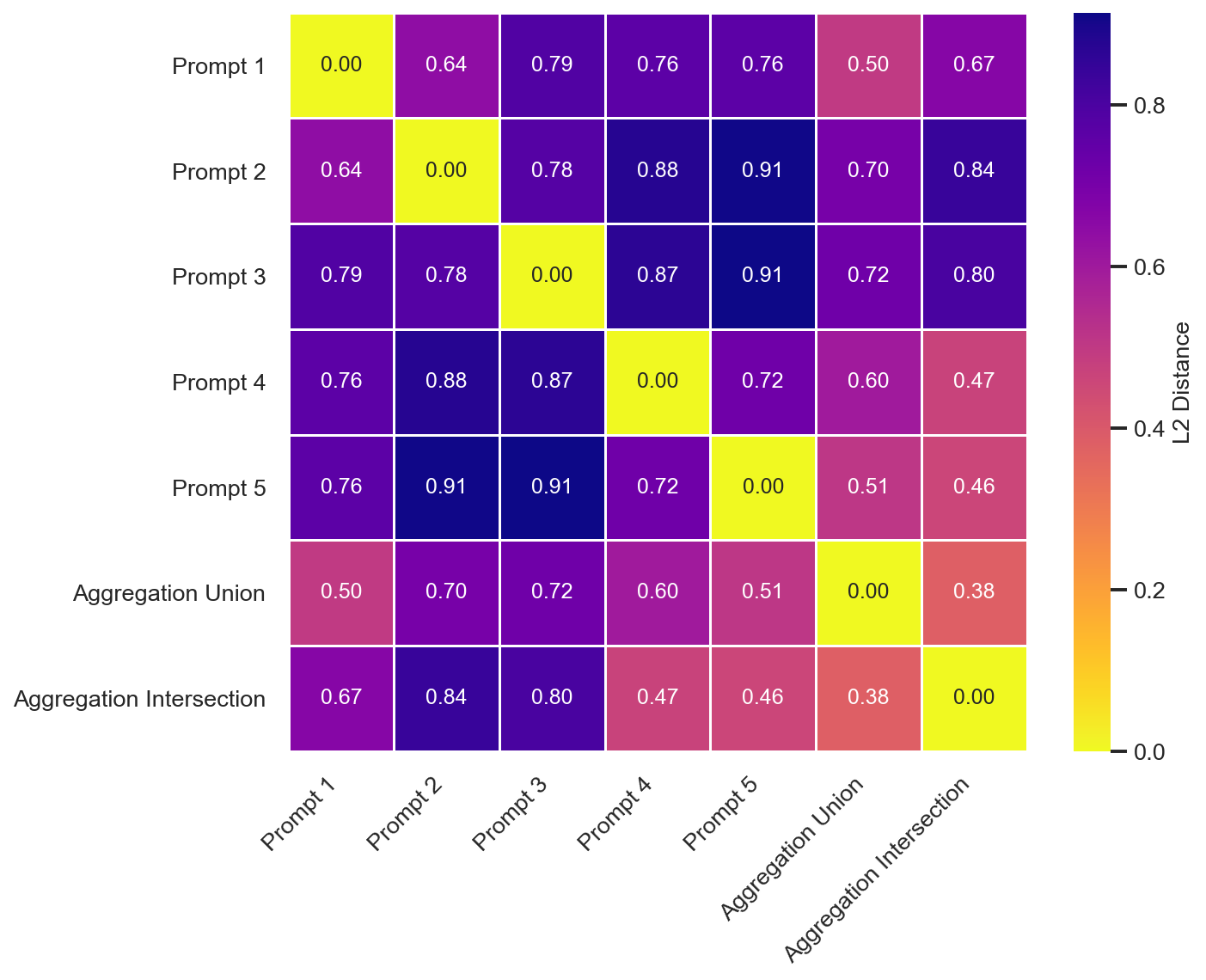}
        \caption{$\ell_2$ distances}
        \label{fig:cosine-sim}
    \end{subfigure}%
    \hfill
     \begin{subfigure}[b]{0.32\textwidth}
        \centering
        \includegraphics[width=\linewidth, height=0.4\textheight, keepaspectratio]{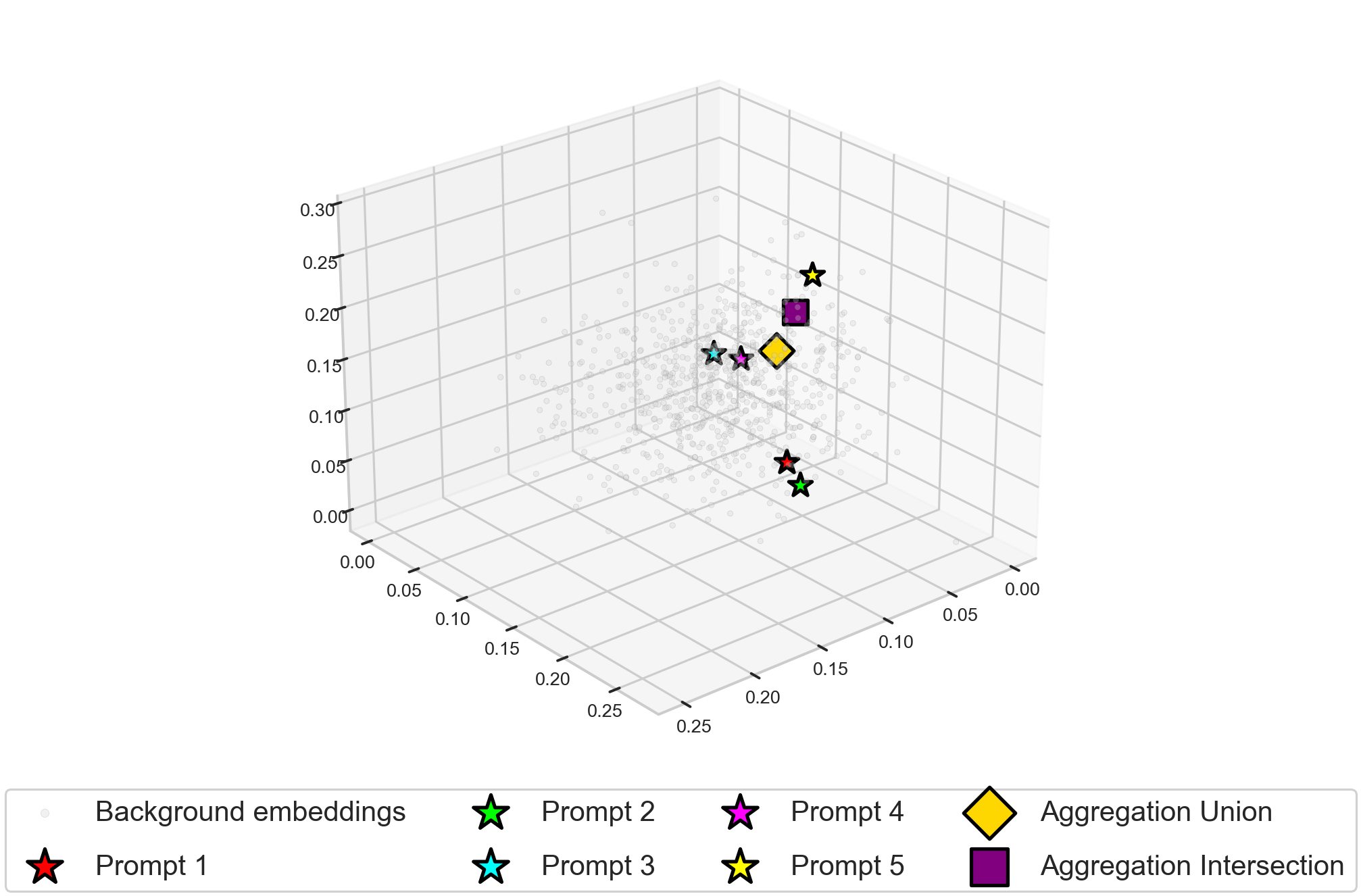}
        \caption{Projection onto top 3 dims}
        \label{fig:top3-task}
    \end{subfigure}%
    \hfill
    \begin{subfigure}[b]{0.32\textwidth}
        \centering
        \includegraphics[width=\linewidth, height=0.4\textheight, keepaspectratio]{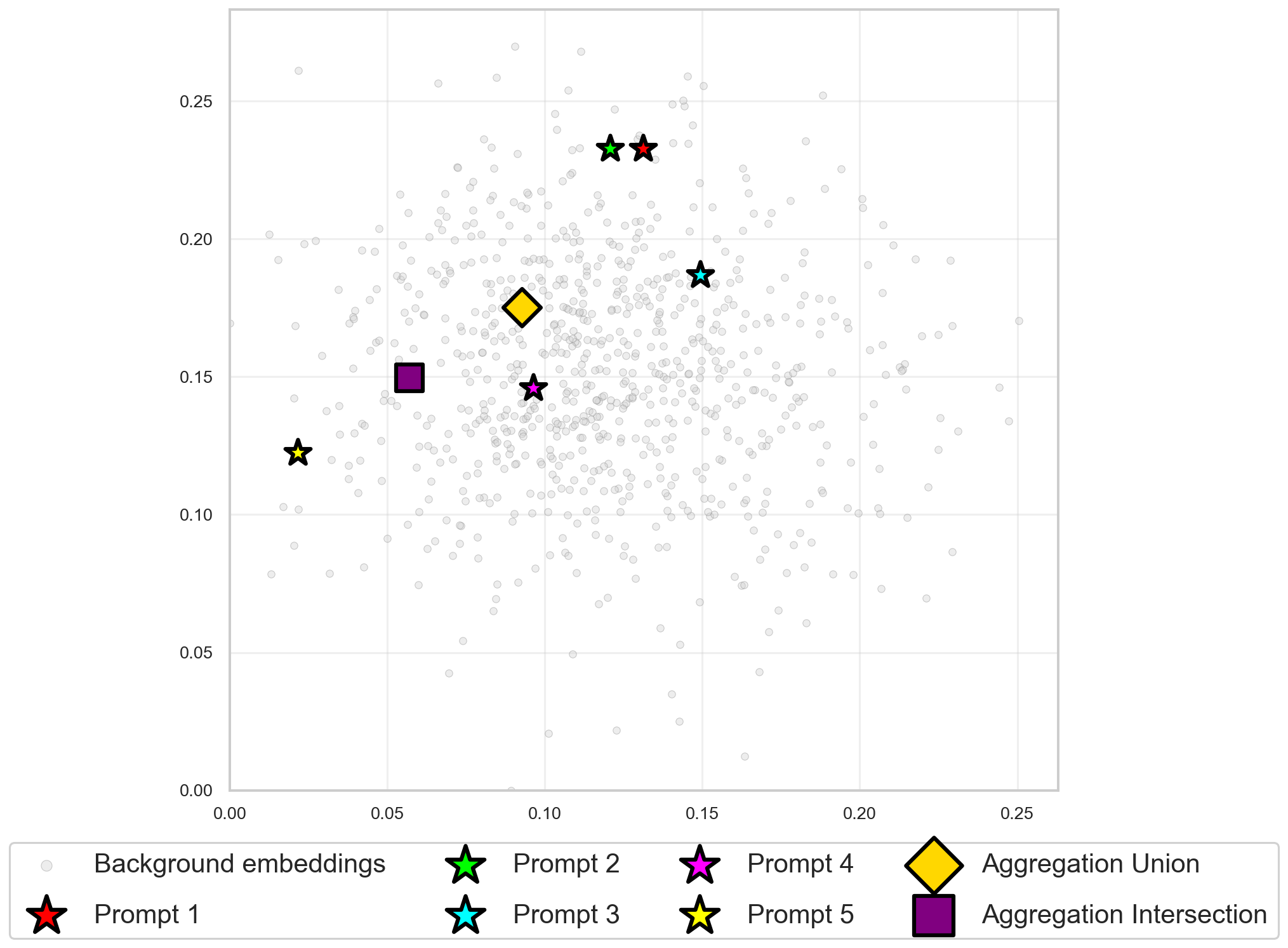}
        \caption{Projection onto top 2 dims}
        \label{fig:top2-task}
    \end{subfigure}

    \caption{{Visualization of output vectors for a reference-generation task (Section \ref{subsec:casestudy}). Output vectors are computed using the $M=768$-dimensional embeddings from all-mpnet-base-v2, shifted to be in the nonnegative orthant. Embeddings are shown for GPT-4o-mini outputs from five different prompts, and as well as two different aggregated outputs based on additional-style and intersection-style aggregation rules. The $\ell_2$-distances (left) and projections onto the top 3 highest-variance (middle) and top 2 highest-variance dimensions (right), are shown. The plots show that the five prompts produce semantically different outputs, and each aggregation operation results in a combination of the five outputs that does not resemble any output in isolation.}}
    \label{fig:embedding-grid}
\end{figure}

\paragraph{Output vector space.} 
We specify two different instantiations of the output vector space $\mathbb{R}_{\ge 0}^M$  in our framework, depending on the level of specificity of the output which the system designer aims to elicit. 
\begin{enumerate}
    \item Suppose that the system designer aims to elicit a list of papers that overall reflects a specific balance of different topics (e.g., machine learning, economics, etc.). To capture this, let each dimension $i \in [M]$ of the output capture a different topic, so the value $x_i$ captures the fraction of papers on the list that reflect the corresponding topic. We build on this instantiation in our empirical analysis in Section \ref{sec:empirical}.
\item Suppose that the system designer aims to elicit a specific list of references. To capture this, we take the output vector to be high-dimensional embedding coming from a text embedding model. We design a simple experiment using LLMs to illustrate this: we prompt GPT-4o-mini in different ways, perform aggregation also using GPT-4o-mini, and use a sentence-transformers model to produce 768-dimensional output vectors  (see \Cref{sec:visualizations} for details of the empirical setup).\footnote{Code is available at \url{https://github.com/nivasini/aggregation_compound_ai}.} 
Figure \ref{fig:embedding-grid} shows the embeddings for outputs to the five prompts, as well as the outputs produced by the intersection-style and addition-style aggregation rules. The five prompt outputs vary substantially, and the aggregated outputs differ markedly from both the originals and from each other, demonstrating how different prompts and aggregation rules can shape the embedding-space representation.
\end{enumerate}

\paragraph{Reward function specification limitations.} 
In our framework, recall that the reward function operates over coarsenings of the output dimensions (as captured by the feature weights matrix $\balpha$) rather than directly on the output dimensions.
This captures two types of prompt engineering limitations  that we describe below.  
\begin{enumerate}
    \item First, the system designer may struggle to precisely express what they truly want in the prompt, leading to underspecified prompts omitting some of the system designer's requirements \citep{Yang2025PromptUnderspecification}. For example, the system designer may prompt the model to focus on combinations of high-level topics, rather than fully specifying the fine-grained topic that they wish to see. We build on this instantiation in our empirical analysis in Section \ref{sec:empirical}. 
    \item Second, the model may not correctly interpret the system designer's prompt. For example, the model might map two dissimilar words in the prompt to the same word \citep{jones2025uncovering}. In the citation task, this could surface as the model interpreting ``papers with high attribute `X''' similarly for many different attributes ``X''. 
\end{enumerate}

%% file: arxiv_sections/merged_arxiv.tex
\section{Natural Mechanisms for Elicitability-Expansion}\label{sec:concrete}

In this section, we formalize natural mechanisms by which aggregation expands elicitability (Figure \ref{fig:mechanisms_examples}). First, we define the mechanisms, and we show how they expand elicitability via examples (Section \ref{subsec:examples}). Then, we show that these mechanisms are necessary for elicitability-expansion in general (Section \ref{sec:special_cases}). While these mechanisms are not sufficient for elicitability-expansion in general, strengthened versions of these mechanisms turn out to fully characterize elicitability-expansion, as we will show in \Cref{sec:general}.

\subsection{Formalizing the Mechanisms and Motivating Examples}\label{subsec:examples}

We formalize three natural mechanisms and demonstrate their ability to expand elicitability through examples. While our mechanisms are general, our examples focus on a 3-dimensional output space ($M = 3$) with 2-dimensional features ($N = 2$).  We focus on feature weights matrices $\balpha$ of the form 
$\balpha(q) := 
\begin{bmatrix}
    1 & 0 & q \\
    0 & 1 & q 
\end{bmatrix}$. 
Each of the output dimensions $x_1, x_2$ specialize to features $F_1, F_2$, respectively. That is, increasing the first output dimension $x_1$ only increases the first feature $F_1$, and increasing the second output dimension $x_2$ only increases the second feature $F_2$. Increasing the third output dimension $x_3$ increases both features, though the contribution is weighted by a factor of $q$. The parameter $q$ captures the extent to which it is possible to simultaneously maximize both features.

Our examples use the intersection and addition aggregation rules that we previously introduced.

Let us now formalize the mechanisms and demonstrate their ability to expand elicitability.

\paragraph{Mechanism 1: Feasibility Expansion.} 
Aggregation can help overcome the output limitations due to the conic constraints faced by each agent, producing outputs that are outside of the feasible set $\calB^{\text{feasible}}(\bconstraints)$. We formalize this through the following mechanism.  
\begin{definition}[Feasibility Expansion]\label{def:feasibility_expand}
    Given a constraint matrix $\bconstraints$, an aggregation operation $\aggop$ implements \textbf{feasibility expansion} if $\bx^{(A)}$ is infeasible i.e. $\bconstraints \bx^{(A)} \not \le 0$. 
\end{definition}
The following example, depicted in Figure \ref{fig:feasibility-expansion}, illustrates how aggregation operations which implement feasibility expansion can in turn expand elicitability. 
\begin{example}\label{ex:power_feasibility} Let the feature map be $\balpha = \balpha(2)$, so that increasing the third output dimension contributes significantly to both features.  We view the first two output dimensions as corresponding to two types of “undesirable” behavior, while dimension $3$ corresponds to “desirable” behavior. Let $\bconstraints$ be a single constraint of the form $x_3 \le x_1 + x_2$. The constraint captures how the model cannot produce the desirable dimension without also producing some of the undesirable dimension(s). 

Consider the aggregation operation $\bx^{(1)} = [1, 0, 1], \bx^{(2)} = [0, 1, 1]  \rightarrow \intagg(\bx^{(1)}, \bx^{(2)}) = [0, 0, 1]$, which implements feasibility expansion.  
The output $[0, 0, 1]$ is outside the feasibility set since it has only desirable dimensions and hence is not elicitable with any reward function specification and budget level. The outputs $[1, 0, 1]$ and $[0, 1, 1]$ turn out to be elicitable, so the aggregation operation is elicitability-expanding (Proposition \ref{prop:power_feasibility} in Appendix \ref{app:power_feasibility}). 

\end{example}

\paragraph{Mechanisms 2-3: Overcoming Reward Function Specification Limitations.} Even when an output is in the feasible set, the limitations of reward function specification still restrict which outputs are elicitable.  Aggregation can overcome the reward function specification limitations faced by the system designer, as  the next two mechanisms formalize.

\paragraph{Mechanism 2: Support Expansion.}
Reward function specification limitations can make it impossible to elicit outputs with a large support, as our results will later show.\footnote{\citet{Kleinberg2019Effort} showed this in single-agent environments without constraints.} The following mechanism formalizes how 
aggregation can combine outputs with smaller supports to produce an output with a richer support. 
\begin{definition}[Support expansion]\label{def:support_expand}
    An aggregation operation $\aggop$ implements \textbf{support expansion relative to $k$} if $\calS(\bx^{(A)}) \not \subseteq \calS(\bx^{(k)})$.
\end{definition}
Aggregation operations which implement support expansion can in turn expand elicitability, by producing outputs with richer supports than that are elicitable by a single agent. We illustrate this in the following example, which is depicted in Figure \ref{fig:support-expansion}.
\begin{example}\label{ex:power_support}
Let the feature map be $\balpha = \balpha({0.6})$. Suppose that there are no conic constraints $\bconstraints = \emptyset$, so elicitability challenges entirely stem from reward function specification limitations. We will think of the first two dimensions as two aspects we would like our output to simultaneously capture. 

Consider the aggregation operation $\bx^{(1)} = [1, 0, 0], \bx^{(2)} = [0, 1, 0]  \rightarrow \addagg(\bx^{(1)}, \bx^{(2)}; [1/2, 1/2]) = [1/2, 1/2, 0]$ which implements support expansion. The output $[1/2, 1/2,0]$ cannot be elicited directly, because reward functions that equally reward both features make dimension $3$ strictly preferred over the combination of dimensions $1$ and $2$. In contrast, 
outputs supported on just one of these two dimensions $j \in \left\{1,2\right\}$ can be elicited by only rewarding feature $F_j$.
As a result, the aggregation operation is elicitability-expanding (Prop~\ref{prop:power_support} in Appendix \ref{app:power_support}). 
\end{example}

\paragraph{Mechanism 3: Binding Set Contraction.} This next mechanism overcomes reward function specification limitations by taking advantage of the output limitations of the agent. Perhaps counterintuitively, the constraints on the output space can make it easier to elicit an output through a single reward. When a constraint is binding for an output vector, some reward-increasing directions become inaccessible to the agent, as these directions will lead to violation of the binding constraint. There are thus fewer directions of change that need to be disincentivized. Aggregation can combine outputs with binding constraints into an output with fewer binding constraints. 
\begin{definition}[Binding set contraction]\label{def:bind_contract}
 An aggregation operation $\aggop$ implements \textbf{binding set contraction relative to $k$} if $\calV(\bx^{(A)}) \not \supseteq \calV(\bx^{(k)})$.
\end{definition}

Aggregation operations which implement binding set contraction can expand elicitability. We illustrate this in the following example, which is depicted in Figure \ref{fig:binding-set-contraction}.
\begin{example}\label{ex:power_other} 
Let the feature map be $\balpha = \balpha(0.2)$. As in the first example, we will think of $x_3$ to be a ``desirable'' dimension and $x_1, x_2$ to be ``undesirable'' dimensions. Let $\mathbf{C}$ be a single constraint of the form $x_1 + x_2 \le x_3$. This constraint captures how the model cannot produce the bad dimension(s) without also producing some of the good dimension.

Consider the aggregation operation $\bx^{(1)} = [1, 0, 1], \bx^{(2)} = [0, 1, 1] \rightarrow \intagg(\bx^{(1)}, \bx^{(2)}) = [0,0,1]$, which implements binding-set contraction. The value of $q = 0.2$ is small, leading to dimension $3$ being inelicitable without the constraint. The constraint allows us to elicit a vector with some amount of $x_3$, but not a vector that has only $x_3$. The aggregation operation can allow us to get dimension $x_3$ without the additional dimensions $x_1$ or $x_2$, and is thus elicitability-expanding (Prop~\ref{prop:power_other} in Appendix \ref{app:power_other}).
\end{example}

\subsection{Connections between Elicitability-Expansion and Mechanisms} \label{sec:special_cases}
Moving beyond the examples in Section \ref{subsec:examples}, we more generally study how these mechanisms connect to elicitability-expansion. We will show that implementing one of these mechanisms is necessary for elicitability-expansion but is not sufficient. 

\paragraph{Necessity of these mechanisms.}
First, we show that if an aggregation operation expands elicitability, it must implement at least one of the three mechanisms. 
Specifically, Theorem \ref{thm:weaker_necessary} shows that either the operation must implement feasibility-expansion or it must implement at least one of support expansion or binding set contraction for every output $x^{(k)}$.  
\begin{theorem}
\label{thm:weaker_necessary}
Fix conic constraints $\bconstraints$, and any aggregation operation $\aggop$ where each $\bx^{(k)}$ is feasible (i.e., $\bconstraints \bx^{(k)} \le 0$, for every $k \in [K]$). 
If $\aggop$ is elicitability-expanding relative to constraints $\bconstraints$, then at least one of the following conditions holds:
\begin{itemize}
    \item $\aggop$ is feasibility-expanding relative to $\bconstraints$ (\Cref{def:feasibility_expand}).
    \item  For each $k \in [K]$, $\aggop$ is  either support-expanding relative to $k$ (\Cref{def:support_expand}) or binding set-contracting relative to $k$ (\Cref{def:bind_contract}).
\end{itemize}
\end{theorem}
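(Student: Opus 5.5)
We argue by contrapositive. Suppose $\bx^{(A)}$ is feasible and there is some index $k$ with $\calS(\bx^{(A)}) \subseteq \calS(\bx^{(k)})$ and $\calV(\bx^{(A)}) \supseteq \calV(\bx^{(k)})$. We will show that for every feature weights matrix $\balpha$ under which $\bx^{(k)}$ is elicitable, $\bx^{(A)}$ is also elicitable. Then the operation cannot be elicitability-expanding relative to $\bconstraints$ and any $\balpha$.

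The tool is the characterization of single-agent elicitability under conic constraints, \Cref{thm:fixed_feature_map}, which the paper says depends on $\bx$ only through the pair $(\calS(\bx),\calV(\bx))$. I expect it to take a Kleinberg--Raghavan-style form. An output $\bx$ is elicitable iff there is a nonnegative, nonzero weight vector $\blambda$ on features such that, with $\bu = \balpha^\top \blambda$, a dual condition holds. The condition should say that every coordinate $i \in \calS(\bx)$ attains the maximal ``constrained marginal value'' $u_i - \mu^\top \bconstraints_{\calV(\bx), i}$, for some multipliers $\mu \ge 0$ supported on the binding set $\calV(\bx)$. Equivalently, $\bx$ solves a linear program over the cone $\{\bd \ge 0 : \bconstraints_{\calV(\bx)} \bd \le 0\}$ intersected with the simplex. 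The budget scaling is handled by \Cref{lem:only_directions}, so only directions matter.

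The key monotonicity step is the following. Take the certificate $(\blambda,\mu)$ for $\bx^{(k)}$. The optimality conditions at coordinates in $\calS(\bx^{(A)}) \subseteq \calS(\bx^{(k)})$ are a subset of those at $\calS(\bx^{(k)})$, so they still hold. The multipliers $\mu$ are supported on $\calV(\bx^{(k)}) \subseteq \calV(\bx^{(A)})$, so they remain admissible, since complementary slackness only requires $\mu_l = 0$ off the binding set. A smaller support means fewer equalities to satisfy, and a larger binding set means more freedom in $\mu$. Hence the same $\blambda$, and the reward it induces (e.g.\ the linear or concave reward built in the proof of \Cref{thm:fixed_feature_map}), certifies $\bx^{(A)}$.

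A more primal way to see the same thing: the set of directions a locally optimal point can deviate to is $\{\bd : d_i \ge 0 \text{ for } i \notin \calS,\ \bconstraints_{\calV}\bd \le 0\}$. This set shrinks when $\calS$ shrinks and $\calV$ grows, so first-order optimality transfers, and concavity of the $f_j$ makes the first-order condition sufficient.

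The main obstacle is making this transfer rigorous against the exact statement of \Cref{thm:fixed_feature_map}. That statement may also involve non-binding constraints, or may require the gradient to be evaluated at $\bx$ itself, and the gradient of $R\circ\bfeatures$ differs between $\bx^{(k)}$ and $\bx^{(A)}$ when the $f_j$ are nonlinear. I would handle this by invoking the characterization's claim that elicitability depends only on $(\calS,\calV)$ and $\balpha$. The certificate is then a condition purely on $\balpha$, the support, and the binding rows, and the inclusion argument above applies verbatim. Feasibility of $\bx^{(A)}$ (no feasibility expansion) is needed so that $\bx^{(A)}$ lies in $\calB^{\mathrm{feasible}}(\bconstraints)$ and non-binding constraints are strict, which makes the characterization applicable.
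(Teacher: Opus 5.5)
Your argument is correct, but it takes a more direct route than the paper. The paper obtains \Cref{thm:weaker_necessary} as a corollary of \Cref{thm:necessary}. From $\calS(\bx^{(A)})\subseteq\calS(\bx^{(k)})$ and $\calV(\bx^{(k)})\subseteq\calV(\bx^{(A)})$, it notes that every $\bd\in\calB_{\calS(\bx^{(A)}),\calV(\bx^{(A)})}$ satisfies $\bconstraints_{\calV(\bx^{(k)})}\bd\le 0$ and $d_j\ge 0$ for $j\notin\calS(\bx^{(k)})$. Hence every nonnegative combination of these rows is nonpositive on $\bd$, and the margin inequality in the second case of \Cref{def:characterizing_condition} fails for that $k$. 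The full necessity theorem, which rests on the minimax equivalence in \Cref{prop:equivalence}, then finishes the proof.

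You bypass \Cref{thm:necessary} entirely. The same observation gives the inclusion $\calB_{\calS(\bx^{(A)}),\calV(\bx^{(A)})}\subseteq\calB_{\calS(\bx^{(k)}),\calV(\bx^{(k)})}$. Combined with \Cref{prop:single_output_characterization}, this shows that for every fixed $\balpha$, elicitability of $\bx^{(k)}$ implies elicitability of the feasible vector $\bx^{(A)}$. Your route is more elementary and needs no minimax argument. The paper's route instead presents the three mechanisms uniformly, as a relaxation of the exact characterization (\Cref{lem:stronger_implies_weaker}).

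The ``obstacle'' you raise disappears if you lead with the primal inclusion, since the characterization is exactly feasibility plus emptiness of $\calB_{\calS(\bx),\calV(\bx)}\cap\{\bd:\balpha\bd\ge\mathbf{0}\}$. In your dual version the multipliers do transfer once extended by zero. The reward itself must be rebuilt at $\bx^{(A)}$, however: the paper's construction divides each feature multiplier by $f_j'$ evaluated at the output being certified and uses budget $\|\bx^{(A)}\|_1$. So ``the same reward'' is not literally right.

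One caveat applies to your proof and the paper's alike: both tacitly need $\bx^{(A)}\neq\mathbf{0}$. The zero vector is feasible, has empty support, and binds every constraint, so none of the three mechanisms is implemented. Yet by \Cref{lem:full_budget} it is never elicitable with a positive budget. Correspondingly, \Cref{prop:single_output_characterization} is stated only for $\bx\gneq 0$, and its sufficiency half fails at $\mathbf{0}$, where $\calB_{\calS(\bx),\calV(\bx)}$ is empty. For instance, with no conic constraints and $\balpha$ the $2\times 2$ identity, the intersection operation $[1,0],[0,1]\rightarrow[0,0]$ is elicitability-expanding. You should add $\bx^{(A)}\neq\mathbf{0}$ as an explicit hypothesis; this is a defect of the statement itself, not of your route.
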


Theorem \ref{thm:weaker_necessary} reveals a strong form of limitation for aggregation operations that do not implement at least one of the mechanisms (Definition \ref{def:feasibility_expand}, \ref{def:support_expand}, and  \ref{def:bind_contract}). Specifically, the result illustrates that if an operation does not implement the mechanisms according to the conditions in Theorem \ref{thm:weaker_necessary}, then aggregation is not elicitability-expanding regardless of the feature weights matrix $\balpha$. This result illustrates conditions under which aggregation adds no power to compound AI systems regardless of the level of prompt engineering limitations. 

\paragraph{Proof ideas.}
The full proof of \Cref{thm:weaker_necessary} is presented in \Cref{proof:weaker_necessary}. The proof builds on the technical tools we develop in Section \ref{sec:general} (i.e., Theorem \ref{thm:necessary}). The main idea is that when the condition of \Cref{thm:weaker_necessary} is violated by not implementing any of the natural mechanisms, the aggregation operation does not expand the set of viable directions of change. The notion of viable directions is formalized in \Cref{def:feasible_budget_reducing_directions}. At a high level, these are directions in which we can move a small amount from an output vector while retaining non-negative values on all coordinates and continuing to satisfy the conic feasibility constraints and budget constraints. A vector with a larger set of viable directions is harder to elicit since this vector must be preferable compared to a larger set of feasible vectors. An aggregation operation that expands the set of viable directions produces vectors that are hard to elicit through direct prompting. 

\paragraph{These mechanisms are not sufficient.} While implementing one of these three natural mechanisms (implemented in the way provided by conditions of \Cref{thm:weaker_necessary}) is necessary for aggregation to have power, it is not necessarily sufficient to guarantee power.  
Even if an aggregation  operation implements support expansion for every $k \in [K]$, the aggregation operation still may not be elicitability-expanding as shown in \Cref{prop:insufficient_support} in \Cref{sec:insufficiency}. This is a strong limitation since it means that the aggregation operation offers no power regardless of the feature weights matrix. \Cref{prop:insufficient_binding} in \Cref{sec:insufficiency} shows a similar insufficiency for binding set contraction. On the other hand, feasibility expansion on its own guarantees elicitability-expansion as shown in \Cref{prop:feasibilityexpansion} in \Cref{sec:insufficiency}.

\paragraph{Summary.} While the conditions based on the three natural mechanisms stated in \Cref{thm:weaker_necessary} are necessary for aggregation to have power, they are not sufficient for aggregation to have power. In the next section (Section \ref{sec:general}), we will show that implementation of a strengthened version of these mechanisms serves as a necessary and sufficient conditions for power of aggregation, more precisely capturing the power and limitations of aggregation.

\section{Characterizing Elicitability-Expansion}\label{sec:general}

In this section, we will provide a characterizing condition that is both necessary and sufficient for an aggregation operation to have power (i.e., expand elicitability under some feature map). This characterizing condition is based on \emph{strengthened} versions of the natural mechanisms we introduced in \Cref{subsec:examples}.

\subsection{Necessary and Sufficient conditions for elicitability-expansion}
Before stating our characterizing condition we will introduce an object that our characterizing condition depends on. This is the set of feasible and budget-reducing directions of the output vectors $\bx$ in the aggregation operation. We first define this set and how it relates to the elicitability of output vectors.

\begin{definition}[Feasible, budget-reducing directions]\label{def:feasible_budget_reducing_directions}
    The set of feasible, budget-reducing directions for an output vector $\bx$ depends on the sufficient statistics of the support and binding conic constraints indices $(\calS(\bx), \calV(\bx))$ of $\bx$. It is defined as 
    \[\mathcal{B}_{\calS(\bx),\calV(\bx)} = \underbrace{\{\bd \in \mathbb{R}^M: \bconstraints_{\calV(\bx)} \bd \le 0 \}}_{(1)} \cap \underbrace{\{ \bd \in \mathbb{R}^M : d_j \ge 0 \forall j \in (\calS(\bx))^c \}}_{(2)} \cap \underbrace{\{\mathds{1}^t \bd < 0\}}_{(3)}.\]
\end{definition}

The set $\mathcal{B}_{\calS(\bx),\calV(\bx)}$ captures the set of viable directions to move a small amount from $\bx$ while maintaining all constraints on output vectors (i.e., non-negative coordinates, feasibility according to conic constraints, and satisfying budget constraints). Note that the directions in $\mathcal{B}_{\calS(\bx),\calV(\bx)}$ are defined relative to only binding conic and non-negativity constraints rather than the set of all constraints. This is because we can move a small amount in a direction violating non-binding constraints while not violating those constraints. 

To determine if $\bx$ is elicitable with a specified reward function and budget, we search for reward-improving directions within $\mathcal{B}_{\calS(\bx),\calV(\bx)}$.  If such a reward-improving direction exists, this certifies that $\bx$ is not elicitable by providing a feasible output vector that the agent strictly prefers over $\bx$. On the other hand if no reward-improving direction exists within $\mathcal{B}_{\calS(\bx),\calV(\bx)}$, then $\bx$ is elicitable.

\paragraph{Power-characterizing condition.} The power-characterizing condition for an aggregation operation $\aggop$ is stated below based on directions in $\mathcal{B}_{\calS(\bx^{(A)}),\calV(\bx^{(A)})}$ and $\mathcal{B}_{\calS(\bx^{(k)}),\calV(\bx^{(k)})}$, $k \in [K]$.
\begin{definition}\label{def:characterizing_condition}[Power-characterizing condition]
Fix conic constraints $\bconstraints$.
We say that the \textbf{power-characterizing condition} is satisfied for $\bx^{(1)}, \ldots, \bx^{(K)} \rightarrow \bx^{(A)}$ if and only if one of the following two conditions hold:
\begin{enumerate}[leftmargin=*, nosep]
    \item  \emph{Feasibility expansion:} $\bx^{(1)}, \ldots, \bx^{(K)} \rightarrow \bx^{(A)}$ implements feasibility-expansion for $\bconstraints$ or

    \item \emph{Strengthened support expansion or strengthened binding set contraction:} There exists $\bd \in \mathcal{B}_{\calS(\bx^{(A)}), \calV(\bx^{(A)})}$ with $\bd \not \le 0$ such that for every $k \in [K]$, there exist $\bgamma^{(k)} \in \preals^{|\calV(\bx^{(k)})|}$ and $\blambda^{(k)} \in \preals^{|\calS(\bx^{(k)})^c|}$ such that
    the following inequality holds:
    \[\bw^{(k)} \bd \;+\; \bigl|\mathds{1}^\top \bd\bigr| \cdot \min_{j \in [M]} \min\!\bigl(\bw^{(k)}_j,\, 0\bigr) \;>\; 0\]
    where $\bw^{(k)} := \bigl(\bgamma^{(k)}\bigr)^{\!\top} C_{\calV(\bx^{(k)})} \;-\; \bigl(\blambda^{(k)}\bigr)^{\!\top} I_{\calS(\bx^{(k)})^c}$.
\end{enumerate}
\end{definition}

This condition requires the aggregation operation is either feasibility-expanding or for every $k \in [K]$, it either implements a strengthened version of support expansion or a strengthened version of binding set contraction. We discuss this in more detail in \Cref{sec:interpret_cond}, but briefly discuss the intuition here. The second condition implies that either $(\bgamma^{(k)})^{\!\top} C_{\calV(\bx^{(k)})} \bd > 0$ or $(\blambda^{(k)})^{\!\top} I_{\calS(\bx^{(k)})^c} < 0$. The first case turns out to imply binding-set contraction, while the second case turns out to imply support expansion. Moreover, the condition strengthens the mechanisms in two ways. First, it requires the mechanisms to jointly be implemented across all of the agents, rather than implemented for each agent individually. Second, it requires violation of (weighted) conic and non-negativity constraints by a minimum margin, whereas the original mechanisms have no margin requirements.

\paragraph{Characterization results.}
 Our main theorems state that the power-characterizing condition provides a full characterization of when an aggregation operation can expand elicitability under some feature map.  
For the first part of this characterization, we show that the power-characterization is sufficient for elicitability-expansion for some feature weights matrix. 

\begin{theorem}[Sufficient]
\label{thm:sufficient}
Fix constraints $\bconstraints$ and aggregation operation $\bx^{(1)}, \ldots, \bx^{(K)} \rightarrow \bx^{(A)}$ where each $\bx^{(k)}$ is feasible (i.e., $\bconstraints \bx^{(k)} \le 0$, for every $k \in [K]$). 
If the power-characterizing condition is satisfied for $\bx^{(1)}, \ldots, \bx^{(K)} \rightarrow \bx^{(A)}$, then  $\bx^{(1)}, \ldots, \bx^{(K)} \rightarrow \bx^{(A)}$ is elicitability-expanding relative to $\bconstraints$. 
\end{theorem}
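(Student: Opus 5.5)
The plan is to construct an explicit feature weights matrix $\balpha$ and then check elicitability of each $\bx^{(k)}$ and inelicitability of $\bx^{(A)}$ using the characterization of the elicitable set in \Cref{thm:fixed_feature_map}. That characterization says that, given $\balpha$, an output $\bx$ is elicitable if and only if some monotone (e.g.\ linear-in-features) reward has no strictly improving direction inside $\mathcal{B}_{\calS(\bx),\calV(\bx)}$. Concavity of the $f_j$ and convexity of the feasible region should let this local (first-order) condition certify global optimality. I would split into the two cases of \Cref{def:characterizing_condition}.

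\emph{Case 1: feasibility expansion.} Here $\bx^{(A)}$ is infeasible, so it lies in no set $\mathbf{X}^*(R,E;\bconstraints,\bfeatures)$, whatever $\balpha$ is. It remains to exhibit one $\balpha$ making all $\bx^{(k)}$ elicitable. Take $N=1$ with the single row $\alpha_1=\mathds{1}^\top$ and $R=F_1$. Every feasible $\bx$ with $\|\bx\|_1=E$ then attains the maximum of $f_1(\mathds{1}^\top\bx)$ over the feasible budget set. Hence each nonzero feasible $\bx^{(k)}$ is elicitable with $E^{(k)}=\|\bx^{(k)}\|_1$. A zero output would need separate treatment via the same directional criterion.

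\emph{Case 2: strengthened support expansion or binding set contraction.} Fix $\bd$ and the multipliers $\bgamma^{(k)},\blambda^{(k)}$ given by the condition, with $\bw^{(k)}$ as in \Cref{def:characterizing_condition}. Set $m_k:=\min_j\min(\bw^{(k)}_j,0)\le 0$. Take $\balpha$ to have $K$ rows
\[
\mathbf{r}_k:=\bw^{(k)}-m_k\mathds{1}^\top\ \ge 0 .
\]
This is nonnegative by construction. Using $\mathds{1}^\top\bd<0$, we get
\[
\mathbf{r}_k\bd=\bw^{(k)}\bd+m_k|\mathds{1}^\top\bd|>0 .
\]
In particular $\mathbf{r}_k\neq 0$, so \Cref{ass:feature-maps} holds.

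\emph{Inelicitability of $\bx^{(A)}$.} Since $\bd\in\mathcal{B}_{\calS(\bx^{(A)}),\calV(\bx^{(A)})}$, moving from $\bx^{(A)}$ by $\epsilon\bd$ stays feasible and within budget for small $\epsilon>0$. This move strictly increases every feature simultaneously. Any monotone $R$ then weakly prefers the new point, and the strict-increase clause of monotonicity makes the improvement strict. So $\bx^{(A)}$ is never optimal. I would phrase this through \Cref{thm:fixed_feature_map}, whose condition for $\balpha$ is violated by such a $\bd$.

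\emph{Elicitability of $\bx^{(k)}$.} Use the reward $R^{(k)}=F_k$ and budget $E^{(k)}=\|\bx^{(k)}\|_1$. For any $\bd'\in\mathcal{B}_{\calS(\bx^{(k)}),\calV(\bx^{(k)})}$,
\[
\mathbf{r}_k\bd'=(\bgamma^{(k)})^\top C_{\calV(\bx^{(k)})}\bd'-(\blambda^{(k)})^\top\bd'_{\calS(\bx^{(k)})^c}-m_k\mathds{1}^\top\bd'\le 0 .
\]
Each term is nonpositive: the first because $C_{\calV}\bd'\le 0$, the second because $\bd'_{S^c}\ge0$, and the third because $m_k\le0$ and $\mathds{1}^\top\bd'<0$. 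Budget-preserving and budget-increasing directions must also be handled. These are excluded from $\mathcal{B}$ by the strict inequality, so I would use \Cref{lem:only_directions}, or the closure of $\mathcal{B}$ together with tightness $\|\bx^{(k)}\|_1=E^{(k)}$. Then no feasible direction increases the linear form $\mathbf{r}_k\bx$. Convexity of the feasible cone intersected with the $\ell_1$ ball, plus monotonicity of $f_k$, gives global optimality of $\bx^{(k)}$.

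The main obstacle is this last step: matching the directional argument exactly to the form of \Cref{thm:fixed_feature_map}. That means handling boundary directions with $\mathds{1}^\top\bd'=0$, the degenerate case $\bx^{(k)}=0$, and whether the hypothesis $\bd\not\le 0$ is needed there, presumably to rule out trivially inelicitable directions or to make the strict-improvement argument for $\bx^{(A)}$ compatible with the theorem's multiplier form. I would first try to verify the Farkas-type dual certificate that \Cref{thm:fixed_feature_map} presumably provides. That certificate would have the form $\beta^\top\balpha=c\mathds{1}^\top+\gamma^\top C_{\calV}-\lambda^\top I_{S^c}$, and it is witnessed here directly by $\beta=e_k$, $c=-m_k$.
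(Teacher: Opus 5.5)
Your proof is correct, and it takes a different route from the paper's; your Case 1 is the same as the paper's. In Case 2, the paper first invokes \Cref{prop:equivalence}, a Sion minimax argument, to pass to the alternate condition of \Cref{def:alt_condition}. It then builds a feature matrix that depends only on $\bd^{(A)}$: unit features $e_p$ for each positive coordinate $p$ of $\bd^{(A)}$, and pairwise features $\bd^{(A)}_p e_q-\bd^{(A)}_q e_p$ for each positive $p$ and nonpositive $q$. These are chosen so that $\{\bd:\balpha\bd\ge 0\}\subseteq\{\bu+\lambda\bd^{(A)}:\bu\ge 0,\lambda\ge 0\}$, and elicitability of each $\bx^{(k)}$ then follows from cone avoidance plus \Cref{thm:fixed_feature_map}.

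You instead use the multipliers directly, with one feature per agent, $\mathbf{r}_k=\bw^{(k)}-m_k\mathds{1}^\top$. This makes visible that the margin term in \Cref{def:characterizing_condition} is exactly the cost of shifting $\bw^{(k)}$ into the nonnegative orthant along the budget direction. Your route avoids both the minimax theorem and the KKT-based \Cref{lem:single_output_elicitation_sufficient}. It uses $K$ features instead of up to order $M^2$, and it gives explicit rewards $R^{(k)}=F_k$. The paper's route buys a single agent-independent feature map that mirrors the geometry of the necessity proof. It also reuses \Cref{prop:equivalence}, which the paper needs for \Cref{thm:necessary} anyway.

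Your own computation already resolves the obstacles you flag at the end.
\begin{itemize}
\item \textbf{Boundary directions.} Take any feasible $\by$ with $\|\by\|_1\le E^{(k)}=\|\bx^{(k)}\|_1$. Then, globally, $\bconstraints_{\calV(\bx^{(k)})}(\by-\bx^{(k)})\le 0$, $(\by-\bx^{(k)})_j\ge 0$ for $j\notin\calS(\bx^{(k)})$, and $\mathds{1}^\top(\by-\bx^{(k)})\le 0$. Each of your three terms is nonpositive on this closed cone, so $\mathbf{r}_k\by\le\mathbf{r}_k\bx^{(k)}$ directly. Directions with $\mathds{1}^\top\bd'=0$ need no separate handling, and you need neither a local-to-global step nor \Cref{thm:fixed_feature_map}.
\item \textbf{The hypothesis $\bd\not\le 0$.} It is implied by the inequality, since $\mathbf{r}_k\ge 0$ and $\mathbf{r}_k\bd>0$.
\item \textbf{Routing through the lemma anyway.} \Cref{ass:non-zero-feasible} forces $m_k<0$. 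Otherwise $\bw^{(k)}\ge 0$, while $\bw^{(k)}\by\le 0$ for every feasible $\by$, which gives $\bw^{(k)}=0$ and contradicts the condition. Hence $\mathbf{r}_k\bd'<0$ on $\calB_{\calS(\bx^{(k)}),\calV(\bx^{(k)})}$.
\item \textbf{The case $\bx^{(k)}=\mathbf{0}$.} It cannot receive ``separate treatment.'' By \Cref{lem:full_budget} it is never elicitable with a positive budget, so the theorem implicitly assumes nonzero inputs, as the paper's proof also does silently.
\end{itemize}
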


To complete the characterization, our next main theorem states that power-characterizing condition is also necessary for an aggregation to have power. That is, if this condition does not hold, the aggregation operation faces a strong limitation of not offering elicitability-expansion for any feature weights matrix.

\begin{theorem}[Necessary]
\label{thm:necessary}
Fix constraints $\bconstraints$ and aggregation operation $\bx^{(1)}, \ldots, \bx^{(K)} \rightarrow \bx^{(A)}$ where each $\bx^{(k)}$ is feasible (i.e., $\bconstraints \bx^{(k)} \le 0$, for every $k \in [K]$). If the power-characterizing condition is not satisfied, then $\bx^{(1)}, \ldots, \bx^{(K)} \rightarrow \bx^{(A)}$ is not elicitability-expanding relative to $\bconstraints$.
\end{theorem}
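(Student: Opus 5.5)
We argue by contrapositive. Assume the power-characterizing condition fails for $\aggop$: the operation is not feasibility-expanding, so $\bx^{(A)}$ is feasible, and no direction $\bd$ as in condition 2 of \Cref{def:characterizing_condition} exists. Fix an arbitrary feature map, which by \Cref{thm:fixed_feature_map} matters only through $\balpha$. Suppose every $\bx^{(k)}$ is elicitable. We will show that $\bx^{(A)}$ is elicitable as well, so the operation is not elicitability-expanding for any $\balpha$.

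The main tool is the elicitability characterization of \Cref{thm:fixed_feature_map}, which extends Theorem 3 of \citet{Kleinberg2019Effort} to conic constraints. I plan to use it in a dual (Farkas) form. A feasible $\bx$ is elicitable iff there is a nonzero $\boldsymbol{\beta}\in\preals^N$ such that no direction $\bd\in\mathcal{B}_{\calS(\bx),\calV(\bx)}$ has $\boldsymbol{\beta}^\top\balpha\bd>0$. Equivalently, the output-space gradient $\bg:=\balpha^\top\boldsymbol{\beta}$ lies in the normal cone:
\[
\bg^\top=\mu\mathds{1}^\top+\bgamma^\top \bconstraints_{\calV(\bx)}-\blambda^\top I_{\calS(\bx)^c}
\]
for some $\mu\ge 0$, $\bgamma\ge 0$, $\blambda\ge 0$. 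If \Cref{thm:fixed_feature_map} is stated only in the primal form, I will derive this from it via Farkas' lemma applied to the polyhedral cone defining $\mathcal{B}_{\calS(\bx),\calV(\bx)}$. For each $k$, elicitability of $\bx^{(k)}$ then gives $\boldsymbol{\beta}_k$, $\mu_k$, $\bgamma^{(k)}$, $\blambda^{(k)}$, and hence $\bw^{(k)}$ as in \Cref{def:characterizing_condition}, with $\balpha^\top\boldsymbol{\beta}_k=\mu_k\mathds{1}+(\bw^{(k)})^\top$.

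Now suppose, for contradiction, that $\bx^{(A)}$ is not elicitable. Consider the convex cone $Q=\{\balpha\bd:\bd\in\mathcal{B}_{\calS(\bx^{(A)}),\calV(\bx^{(A)})}\}\subseteq\R^N$. Suppose $Q$ did not meet the open positive orthant. A separating hyperplane argument would then give a nonzero $\boldsymbol{\beta}\ge 0$ with $\boldsymbol{\beta}^\top\balpha\bd\le 0$ for every such $\bd$, and by the characterization above $\bx^{(A)}$ would be elicitable. Hence there exists a single direction $\bd\in\mathcal{B}_{\calS(\bx^{(A)}),\calV(\bx^{(A)})}$ with $\balpha\bd>0$ in every coordinate. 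Since $\balpha\ge 0$, this forces $\bd\not\le 0$. I expect this step to be the main obstacle: non-elicitability only says that each $\boldsymbol{\beta}$ has its own improving direction, and we need one $\bd$ that works for all $k$ simultaneously. I also need to handle the strict versus non-strict inequalities carefully (the openness of the constraint $\mathds{1}^\top\bd<0$, and the zero vector) when invoking the separation theorem.

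With this $\bd$ in hand, the rest is direct. Since $\boldsymbol{\beta}_k\ge 0$ is nonzero and $\balpha\bd>0$, we get $\boldsymbol{\beta}_k^\top\balpha\bd>0$, that is,
\[
\bw^{(k)}\bd>-\mu_k\mathds{1}^\top\bd=\mu_k|\mathds{1}^\top\bd|.
\]
Next, $\balpha^\top\boldsymbol{\beta}_k\ge 0$ coordinatewise, so $\mu_k+\bw^{(k)}_j\ge 0$ for all $j$. This gives $\mu_k\ge-\min_j\min(\bw^{(k)}_j,0)$, and therefore
\[
\bw^{(k)}\bd+|\mathds{1}^\top\bd|\min_j\min(\bw^{(k)}_j,0)>0
\]
for every $k$. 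Thus $\bd$ satisfies condition 2 of \Cref{def:characterizing_condition}, contradicting the assumption that the condition fails. Hence $\bx^{(A)}$ is elicitable for every $\balpha$, and the theorem follows.
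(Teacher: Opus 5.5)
Your proof is correct, and it takes a genuinely different route from the paper's. The paper never checks the margin inequality directly. It invokes \Cref{prop:equivalence} to trade the power-characterizing condition for the alternate condition of \Cref{def:alt_condition}, after which necessity is a two-line remark: if $\balpha\bd^{(A)}\ge 0$, then every $\bu+\lambda\bd^{(A)}$ with $\bu\ge 0$, $\lambda\ge 0$ is weakly feature-improving because $\balpha\ge 0$. Failure of the alternate condition therefore places a feature-improving direction in some $\calB_{\calS(\bx^{(k)}),\calV(\bx^{(k)})}$.

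You instead pair the per-agent dual certificates $\balpha^\top\boldsymbol{\beta}_k=\mu_k\mathds{1}+(\bw^{(k)})^\top$ (the Motzkin/KKT certificates of \Cref{lem:single_output_elicitation_sufficient}) with one strictly improving direction for $\bx^{(A)}$. The margin term then falls out of $\balpha^\top\boldsymbol{\beta}_k\ge 0$, which also explains where the margin comes from. The step you flagged goes through. $Q$ is a nonempty convex cone (it contains $-\balpha\bx^{(A)}$), so separating it from the open orthant gives a nonzero $\boldsymbol{\beta}\ge 0$ with zero offset. The converse half of your dual characterization then follows from the paper's KKT construction, which does not need $\mu>0$.

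Your route buys more than elegance. For a fixed $\bd$, the paper's necessity proof needs the implication that an empty intersection of the translation cone with $\calB_{\calS(\bx^{(k)}),\calV(\bx^{(k)})}$ yields a margin certificate. This is the hard half of the per-$k$ equivalence in the proof of \Cref{prop:equivalence}, and it is false on the boundary. Take no constraints, $M=3$, $\calS(\bx^{(k)})=\{2,3\}$ and $\bd=(-1,\tfrac12,-\tfrac12)$. Any $\bu+\lambda\bd$ with nonnegative first coordinate has nonnegative coordinate sum, so the intersection is empty. Yet every certificate $\bw^{(k)}=(-t,0,0)$, $t\ge 0$, gives margin exactly $0$. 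The culprit is replacing $\mathds{1}^\top\bv<1$ by $\mathds{1}^\top\bv\le 1$.

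Your argument never uses this equivalence. Your strict direction $\balpha\bd>0$ also gives $\bd\not\le 0$ for free. By contrast, the paper's assertion that its weakly improving $\bd^{(A)}$ satisfies $\bd^{(A)}\not\le 0$ is unjustified when $\balpha$ has a zero column (e.g.\ $\bd^{(A)}=-e_j$). The paper's route, for its part, reuses the valid easy half of the equivalence for \Cref{thm:sufficient}, whereas yours gives only necessity.

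One caveat is shared by both proofs and by the statement itself: you need $\bx^{(A)}\neq\mathbf 0$. For $\bx^{(A)}=\mathbf 0$ the set $\calB_{\calS(\bx^{(A)}),\calV(\bx^{(A)})}$ is empty, so your separation step is vacuous. Your characterization would then call $\mathbf 0$ elicitable, which \Cref{lem:full_budget} forbids. Indeed, $e_1,e_2\to e_1\wedge e_2=\mathbf 0$ with no constraints and $\balpha=I$ is elicitability-expanding even though the power-characterizing condition fails.
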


\paragraph{Key proof ideas.} Our main technical tool (\Cref{thm:fixed_feature_map} in \Cref{subsec:power}) provides a geometric characterization of elicitability-expansion under a fixed feature map, extending~\cite{Kleinberg2019Effort} to handle conic feasibility constraints and aggregated outputs. This lemma reduces the problem to checking whether two sets have empty intersection: the set of feasible, budget-reducing directions for vectors in the aggregation, and the cone of \textit{feature-improving directions} $\{\bd \in \mathbb{R}^M \mid \balpha \bd \ge \mathbf{0}\}$.

To move from elicitability-expansion under a \emph{fixed} feature map to elicitability-expansion under \emph{any} feature map, we consider how to construct feature weights $\balpha$ that make a set $\bx^{(1)}, \ldots, \bx^{(K)}$ elicitable while rendering $\bx^{(A)}$ inelicitable. The key subtlety is that \textit{feature-improving directions} form a cone. So non-negative scaling and translations of improving directions remain improving. As a result, we cannot simply find a single separating direction for $\bx^{(A)}$---we need a robust geometric condition guaranteeing separation that survives all such transformations. We capture this robustness through a minimax argument, showing it is equivalent to the existence of a worst-case distribution over translations that certifies inelicitability. We elaborate on these ideas in \Cref{sec:proof_ideas}. 

\subsection{Connection of the power-characterizing condition to mechanisms}\label{sec:interpret_cond}
 The power-characterizing condition requires one of two cases to hold. The first is implementation of feasibility expansion. We can interpret the second case as unifying a strengthening of support expansion and a strengthening of binding set contraction into a single inequality. To demonstrate the connection between the power-characterizing condition and the mechanisms, we will first show how the power-characterizing condition implies implementation of one of the mechanisms. We will then show how the power-characterizing condition strengthens the mechanisms.

\paragraph{Power-characterization condition implies the mechanisms.} The following result shows that the power-characterizing condition implies that at least one of feasibility-expansion, support expansion, or binding set contraction is implemented. This result immediately implies Theorem \ref{thm:weaker_necessary} (i.e., that implementing at least one of these mechanisms is necessary for elicitability-expansion).
\begin{proposition}
\label{lem:stronger_implies_weaker}
Fix conic constraints $\bconstraints$, and any aggregation operation $\aggop$ where each $\bx^{(k)}$ is feasible i.e., $\bconstraints \bx^{(k)} \le 0$, for every $k \in [K]$.
If $\aggop$ satisfies the power-characterizing condition (\Cref{def:characterizing_condition}), then one of the following conditions holds:
\begin{itemize}[leftmargin=*]
    \item $\aggop$ is feasibility-expanding relative to $\bconstraints$ (\Cref{def:feasibility_expand}).
    \item  For each $k \in [K]$, $\aggop$ is  either support-expanding relative to $k$ (\Cref{def:support_expand}) or binding set-contracting relative to $k$ (\Cref{def:bind_contract}).
\end{itemize}
\end{proposition}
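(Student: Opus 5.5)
The plan is to argue by contraposition, one agent at a time. If the first case of the power-characterizing condition (\Cref{def:characterizing_condition}) holds, the operation is feasibility-expanding and there is nothing to prove. So assume the second case holds, with some $\bd \in \mathcal{B}_{\calS(\bx^{(A)}),\calV(\bx^{(A)})}$, $\bd \not\le 0$, and multipliers $\bgamma^{(k)}, \blambda^{(k)}$ for every $k$. Suppose, for contradiction, that for some agent $k$ the operation is neither support-expanding nor binding set-contracting relative to $k$. By \Cref{def:support_expand} and \Cref{def:bind_contract}, this means
\[ \calS(\bx^{(A)}) \subseteq \calS(\bx^{(k)}) \quad\text{and}\quad \calV(\bx^{(A)}) \supseteq \calV(\bx^{(k)}). \]
I will show that these inclusions force the left-hand side of the inequality in \Cref{def:characterizing_condition} to be nonpositive for this $k$, which contradicts the assumption.

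The key step is to transfer the constraints defining $\mathcal{B}_{\calS(\bx^{(A)}),\calV(\bx^{(A)})}$ to the index sets of agent $k$.
\begin{itemize}
\item Since $\bd$ lies in part (1) of \Cref{def:feasible_budget_reducing_directions} for $\bx^{(A)}$, we have $\bconstraints_l \bd \le 0$ for every $l \in \calV(\bx^{(A)})$. Because $\calV(\bx^{(k)}) \subseteq \calV(\bx^{(A)})$, this gives $\bconstraints_{\calV(\bx^{(k)})}\bd \le 0$. With $\bgamma^{(k)} \ge 0$, it follows that $(\bgamma^{(k)})^\top \bconstraints_{\calV(\bx^{(k)})}\bd \le 0$.
\item Since $\bd$ lies in part (2), we have $d_j \ge 0$ for every $j \in \calS(\bx^{(A)})^c$. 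Because $\calS(\bx^{(A)}) \subseteq \calS(\bx^{(k)})$, we get $\calS(\bx^{(k)})^c \subseteq \calS(\bx^{(A)})^c$, so $I_{\calS(\bx^{(k)})^c}\bd \ge 0$. With $\blambda^{(k)} \ge 0$, it follows that $-(\blambda^{(k)})^\top I_{\calS(\bx^{(k)})^c}\bd \le 0$.
\end{itemize}
Adding the two bullets gives $\bw^{(k)}\bd \le 0$.

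The remaining term is $\bigl|\mathds{1}^\top \bd\bigr| \cdot \min_{j} \min\bigl(\bw^{(k)}_j, 0\bigr)$. This is the product of a nonnegative number and a nonpositive number, so it is at most $0$. Hence the whole left-hand side is at most $0$ and cannot be strictly positive, which is the desired contradiction.

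Therefore, for every $k$, at least one of support expansion or binding set contraction relative to $k$ holds. This gives the proposition, and combined with \Cref{thm:necessary} it yields \Cref{thm:weaker_necessary}. There is no real obstacle here; the only point requiring care is the direction of the set inclusions, since complementing the supports reverses the inclusion while the binding sets are used directly. Note that the argument uses neither $\bd \not\le 0$ nor the budget-reducing condition (3).
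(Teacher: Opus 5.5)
Your proof is correct and is essentially the paper's argument. Both use the same two facts: the margin term is nonpositive, so $\bw^{(k)}\bd > 0$ is forced, and the inclusions $\calV(\bx^{(k)}) \subseteq \calV(\bx^{(A)})$ and $\calS(\bx^{(k)})^c \subseteq \calS(\bx^{(A)})^c$ then make that impossible. The paper argues directly, splitting on the sign of $(\bgamma^{(k)})^\top \bconstraints_{\calV(\bx^{(k)})}\bd$, while you argue by contraposition, which matches how the paper proves Theorem~\ref{thm:weaker_necessary} in the appendix.
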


\begin{proof}
The first case for the power-characterizing condition to hold is feasibility expansion. Suppose feasibility expansion does not hold, i.e., all vectors in the aggregation are feasible. Then the second case of \Cref{def:characterizing_condition} holds: there exists $\bd \in \mathcal{B}_{\calS(\bx^{(A)}), \calV(\bx^{(A)})}$ with $\bd \not \le 0$ such that for every $k \in [K]$, there exist $\bgamma^{(k)} \in \preals^{|\calV(\bx^{(k)})|}$ and $\blambda^{(k)} \in \preals^{|\calS(\bx^{(k)})^c|}$ with
\[\bw^{(k)} \bd \;+\; |\mathds{1}^\top \bd| \cdot \min_{j \in [M]} \min\!\bigl(\bw^{(k)}_j,\, 0\bigr) \;>\; 0,\]
where $\bw^{(k)} := \bigl(\bgamma^{(k)}\bigr)^{\!\top} \bconstraints_{\calV(\bx^{(k)})} - \bigl(\blambda^{(k)}\bigr)^{\!\top} I_{\calS(\bx^{(k)})^c}$. Since $\min_{j \in [M]} \min(\bw^{(k)}_j, 0) \le 0$ and $|\mathds{1}^\top \bd| \ge 0$, the second term on the left-hand side is non-positive, so the inequality implies $\bw^{(k)} \bd > 0$, i.e.,
\[\bigl(\bgamma^{(k)}\bigr)^{\!\top} \bconstraints_{\calV(\bx^{(k)})} \bd \;>\; \bigl(\blambda^{(k)}\bigr)^{\!\top} I_{\calS(\bx^{(k)})^c} \bd.\]
We will show that for each $k \in [K]$, this strict inequality implies either binding set contraction relative to $k$ or support expansion relative to $k$.

\emph{Case 1: $\bigl(\bgamma^{(k)}\bigr)^{\!\top} \bconstraints_{\calV(\bx^{(k)})} \bd > 0$ implies binding set contraction relative to $k$.} Since $\bgamma^{(k)} \ge 0$ and $\bigl(\bgamma^{(k)}\bigr)^{\!\top} \bconstraints_{\calV(\bx^{(k)})} \bd = \sum_{\ell \in \calV(\bx^{(k)})} \bgamma^{(k)}_\ell \bconstraints_\ell \bd > 0$, there exists $\ell \in \calV(\bx^{(k)})$ with $\bconstraints_\ell \bd > 0$. However, $\bd \in \mathcal{B}_{\calS(\bx^{(A)}), \calV(\bx^{(A)})}$ requires $\bconstraints_\ell \bd \le 0$ for every $\ell \in \calV(\bx^{(A)})$. So $\ell \in \calV(\bx^{(k)}) \setminus \calV(\bx^{(A)})$, which is evidence that $\calV(\bx^{(A)}) \not \supseteq \calV(\bx^{(k)})$, i.e., binding set contraction relative to $k$.

\emph{Case 2: $\bigl(\bgamma^{(k)}\bigr)^{\!\top} \bconstraints_{\calV(\bx^{(k)})} \bd \le 0$ implies support expansion relative to $k$.} In this case, the strict inequality forces $\bigl(\blambda^{(k)}\bigr)^{\!\top} I_{\calS(\bx^{(k)})^c} \bd < 0$, i.e., $\sum_{j \in \calS(\bx^{(k)})^c} \blambda^{(k)}_j \bd_j < 0$. So there exists $j \in \calS(\bx^{(k)})^c$ with $\bd_j < 0$. Since $\bd \in \mathcal{B}_{\calS(\bx^{(A)}), \calV(\bx^{(A)})}$ requires $\bd_j \ge 0$ for every $j \in \calS(\bx^{(A)})^c$, we have $j \notin \calS(\bx^{(A)})^c$, i.e., $j \in \calS(\bx^{(A)}) \setminus \calS(\bx^{(k)})$. This is evidence that $\calS(\bx^{(A)}) \not \subseteq \calS(\bx^{(k)})$, i.e., support expansion relative to $k$.
\end{proof}

\paragraph{How the power-characterizing condition strengthens the mechanisms.} While the power-characterizing condition implies the implementation of one of the mechanisms, it is a strictly stronger condition. This is evidenced by the fact that the power-characterizing condition is necessary and sufficient for elicitability-expansion while implementing one of the mechanisms is not sufficient for elicitability-expansion as shown by propositions in \Cref{sec:insufficiency}.

The power-characterizing condition, specifically the second case of \Cref{def:characterizing_condition}, strengthens the mechanisms in two ways.
\begin{itemize}
    \item First, we require a mechanism to jointly be implemented across all of the agents, rather than implemented for each agent individually. That is, we place a \emph{joint} requirement across all $k \in [K]$, requiring that the same $\bd \in \calB_{\calS(\bx^{(A)}), \calV(\bx^{(A)})}$ witnesses the violation of binding constraints for every $k \in [K]$.

 \item Second, we strengthen the mechanism for each agent individually, requiring there to be a sufficient gap between the original outputs $\bx^{(k)}$ and the aggregated output $\bx^{(A)}$.
Implementing support expansion or binding set contraction only requires that $\bd \in \calB_{\calS(\bx^{(A)}), \calV(\bx^{(A)})}$ violate some binding non-negativity or conic constraint of each $\bx^{(k)}$, i.e., that $\bw^{(k)} \bd > 0$ for some non-negative weighting $\bw^{(k)} = (\bgamma^{(k)})^{\!\top} \bconstraints_{\calV(\bx^{(k)})} - (\blambda^{(k)})^{\!\top} I_{\calS(\bx^{(k)})^c}$ of these binding constraints. Such a violation can hold with $\bx^{(A)}$ arbitrarily close to any of the $\bx^{(k)}$ vectors. In contrast, the power-characterizing condition requires this violation to hold by a minimum margin: rearranging the inequality in \Cref{def:characterizing_condition}, it asks that
\[\bw^{(k)} \bd \;>\; \bigl|\mathds{1}^\top \bd\bigr| \cdot \Bigl|\min_{j \in [M]} \min(\bw^{(k)}_j, 0)\Bigr|,\]
which is a strictly positive margin whenever $\bw^{(k)}$ has a negative coordinate. For example, taking $\bgamma^{(k)} = \mathbf{0}$ and $\blambda^{(k)}$ supported on a single index $j \in \calS(\bx^{(k)})^c$ gives $\bw^{(k)} \bd = -\bd_j$ and recovers the requirement $-\bd_j > |\mathds{1}^\top \bd|$, so $\bd$ must violate that binding non-negativity constraint by a margin of $|\mathds{1}^\top \bd|$. This margin forces $\bx^{(A)}$ to be sufficiently distinct from each $\bx^{(k)}$.
\end{itemize}

The power-characterizing condition is in general a strictly stronger condition than implementing one of the mechanisms, as reflected by counterexamples in~\Cref{sec:insufficiency}. However, in some special cases, the power-characterizing condition corresponds exactly to implementing one of the mechanisms, instead of implementing a strengthening. One special case is when no vector in the aggregation operation has any binding conic constraints. This holds when there are no conic constraints. In this special case, even the regular, non-strengthened form of binding set contraction cannot kick in. We can show that in this special case, the power-characterizing condition reduces to either feasibility-expansion or the usual, non-strengthened support expansion as long as a particular edge case does not occur (the proof is deferred to  \Cref{proof:alllargesubsets}).

\begin{corollary}
\label{prop:alllargesubsets}
Fix conic constraints $\bconstraints$, and any $\aggop$. Suppose that $\calV(x^{(A)}) = \calV(x^{(1)}) = \ldots = \calV(x^{(K)}) = \emptyset$.
Suppose also that either $\bx^{(A)}$ is not full support (i.e., $\calS(\bx^{(A)}) \neq [M]$) or there exists $j \in [M]$ such that no $\bx^{(k)}$ has support $[M] \setminus \{j\}$.
Then $\aggop$ is elicitability-expanding relative to $\bconstraints$ if and only if at least one of the following two conditions holds:
\begin{itemize}
    \item $\aggop$ is feasibility-expanding.
    \item $\aggop$ is support-expanding relative to every $k \in [K]$.
\end{itemize}
\end{corollary}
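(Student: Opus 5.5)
The plan is to obtain the corollary from the two characterization theorems, \Cref{thm:sufficient} and \Cref{thm:necessary}. Both directions come down to specializing the power-characterizing condition (\Cref{def:characterizing_condition}) to the case where every binding set is empty.

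When $\calV(\bx^{(k)}) = \emptyset$, the vector $\bgamma^{(k)}$ is vacuous, since $\bconstraints_\emptyset$ is the zero vector. Hence $\bw^{(k)} = -(\blambda^{(k)})^{\top} I_{\calS(\bx^{(k)})^c}$. This vector is $-\blambda^{(k)}_j$ on coordinates $j \notin \calS(\bx^{(k)})$ and $0$ elsewhere. Likewise, $\calV(\bx^{(A)}) = \emptyset$ means $\mathcal{B}_{\calS(\bx^{(A)}),\emptyset}$ consists exactly of the $\bd$ with $d_j \ge 0$ for $j \notin \calS(\bx^{(A)})$ and $\mathds{1}^\top \bd < 0$.

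\emph{Necessity.} Suppose the operation is elicitability-expanding. By the contrapositive of \Cref{thm:necessary}, the power-characterizing condition holds. \Cref{lem:stronger_implies_weaker} then gives one of two outcomes. The first is feasibility expansion. The second is that for each $k$ there is support expansion or binding set contraction relative to $k$. Binding set contraction relative to $k$ requires $\calV(\bx^{(A)}) \not\supseteq \calV(\bx^{(k)}) = \emptyset$, which is impossible. So in the second case we get support expansion relative to every $k$. This direction does not use the edge-case hypothesis.

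\emph{Sufficiency.} If the operation is feasibility-expanding, \Cref{thm:sufficient} applies directly. Otherwise it is support-expanding for every $k$, and I will verify case 2 of \Cref{def:characterizing_condition} with an explicit $\bd$. The first step is to choose a coordinate $i \in [M]$ with the following property: for every $k$, the set $\calS(\bx^{(A)}) \setminus \calS(\bx^{(k)})$ contains some $j_k \neq i$.
\begin{itemize}
\item If $\calS(\bx^{(A)}) \neq [M]$, take any $i \notin \calS(\bx^{(A)})$.
\item If $\calS(\bx^{(A)}) = [M]$, take the $j$ from the hypothesis such that no $\bx^{(k)}$ has support $[M]\setminus\{j\}$, and set $i = j$.
\end{itemize}
In the second case, suppose $\calS(\bx^{(A)}) \setminus \calS(\bx^{(k)}) \subseteq \{i\}$. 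Then $\calS(\bx^{(k)}) \supseteq [M]\setminus\{i\}$. Support expansion forces $\calS(\bx^{(k)}) \neq [M]$, so $\calS(\bx^{(k)}) = [M]\setminus\{i\}$, which is excluded by the choice of $i$. So the required $j_k$ exists in both cases.

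Let $T = \{j_k : k \in [K]\}$, so $i \notin T$ and $T \subseteq \calS(\bx^{(A)})$. Fix a small $\varepsilon \in (0,1)$ and define $\bd$ by:
\begin{itemize}
\item $d_j = -1$ for $j \in T$;
\item $d_i = |T| - \varepsilon$;
\item $d_j = 0$ otherwise.
\end{itemize}
Then $\bd$ has the properties case 2 requires:
\begin{itemize}
\item $\mathds{1}^\top \bd = -\varepsilon < 0$.
\item $\bd \not\le 0$, since $d_i > 0$ (note $|T| \ge 1$ when $K \ge 1$).
\item $d_j \ge 0$ off $\calS(\bx^{(A)})$, because the only negative coordinates lie in $T \subseteq \calS(\bx^{(A)})$.
\end{itemize}
So $\bd \in \mathcal{B}_{\calS(\bx^{(A)}),\emptyset}$. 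For each $k$, take $\blambda^{(k)} = e_{j_k}$. Then $\bw^{(k)} = -e_{j_k}$, so $\min_j \min(\bw^{(k)}_j, 0) = -1$. The required inequality becomes
\[ -d_{j_k} - |\mathds{1}^\top \bd| = 1 - \varepsilon > 0, \]
which holds. \Cref{thm:sufficient} then finishes the argument.

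The main obstacle is the choice of the free coordinate $i$ that carries the positive mass. This is the only place the edge-case hypothesis enters: if $\bx^{(A)}$ were full support and every coordinate were the unique missing coordinate of some $\bx^{(k)}$, then $T$ would be forced to equal $[M]$. The remaining steps are bookkeeping once one notes that the margin term equals $|\mathds{1}^\top\bd| = \varepsilon$, which can be made arbitrarily small.
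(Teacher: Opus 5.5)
Your proposal is correct and follows the paper's route. Both directions go through \Cref{thm:necessary} and \Cref{thm:sufficient}, and necessity uses \Cref{lem:stronger_implies_weaker} together with the impossibility of binding set contraction when $\calV(\bx^{(k)})=\emptyset$. Sufficiency exhibits a witness $\bd$ equal to $-1$ on one missing coordinate $j_k\in\calS(\bx^{(A)})\setminus\calS(\bx^{(k)})$ per agent, with the edge-case hypothesis used exactly to keep these coordinates inside a strict subset of $[M]$, so that $\mathds{1}^\top\bd\in(-1,0)$. The only difference is cosmetic: you put the positive mass $|T|-\varepsilon$ on a single free coordinate $i$, whereas the paper spreads a constant $c\in\bigl(\frac{|J|-1}{M-|J|},\frac{|J|}{M-|J|}\bigr)$ over all of $[M]\setminus J$.
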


Note that it appears harder to show an analogous result for binding-set contraction. This is due to the joint geometry of the constraints that appears in the power-characterizing condition.

\subsection{Main lemma: Elicitability-expansion for a given feature map}\label{subsec:power}

In this section, we will build tools to prove our main characterization result. The main technical lemma is the characterization of elicitability-expansion under a given feature map $\balpha$. This lemma extends the characterization of previous work~\cite{Kleinberg2019Effort} for elicitability of an output vector under a feature map $\alpha$ in a setting without conic constraints on the agent's output vector. We extend this characterization to allow for output limitations (i.e., nontrivial constraints $\bconstraints$) and aggregation of multiple outputs.

This characterization depends on the set $\calB_{\calS(\bx), \calV(\bx)}$ of feasible, budget-reducing directions (\Cref{def:feasible_budget_reducing_directions}) of the vectors $\bx$ in the aggregation operation. The condition checks for intersection between these sets and the set of feature-improving directions $\left\{\bd \in \mathbb{R}^M \mid \balpha \bd \ge \mathbf{0} \right\}$ consisting of directions that weakly increase \emph{all} feature values. 

\begin{lemma}\label{thm:fixed_feature_map}
 Fix conic constraints $\bconstraints$, feature weights matrix $\balpha$, and aggregation operation $\bx^{(1)}, \ldots, \bx^{(K)} \rightarrow \bx^{(A)}$ where each $\bx^{(k)}$ is feasible (i.e., $\bconstraints \bx^{(k)} \le 0$, for every $k \in [K]$). The aggregation operation $\bx^{(1)}, \ldots, \bx^{(K)} \rightarrow \bx^{(A)}$ 
 is elicitability-expanding relative to $\bconstraints$ and $\mathbf{F}$ if and only if both of the following conditions hold:
 \begin{itemize}
     \item For every $k \in [K]$, $\mathcal{B}_{\calS(x^{(k)}), \calV(\bx^{(k)})} \cap \left\{\bd \in \mathbb{R}^M \mid \balpha \bd \ge \mathbf{0} \right\} = \emptyset$.
     \item $\mathcal{B}_{\calS(x^{(A)}), \calV(x^{(A)})} \cap \left\{\bd \in \mathbb{R}^M \mid \balpha \bd \ge \mathbf{0} \right\} \neq \emptyset$ or $\bconstraints \bx^{(A)} \not \le 0$.
 \end{itemize} 
\end{lemma}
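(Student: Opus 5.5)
The plan is to reduce the lemma to a single-vector elicitability characterization. For a feasible $\bx \neq 0$, $\bx$ is elicitable (for some monotone $R$ and budget $E>0$) if and only if $\mathcal{B}_{\calS(\bx),\calV(\bx)} \cap \{\bd : \balpha \bd \ge \mathbf{0}\} = \emptyset$. The lemma then follows directly from the definition of elicitability-expansion. The first bullet says each $\bx^{(k)}$ is elicitable. The second bullet says $\bx^{(A)}$ is not elicitable: either it is infeasible, so it lies outside every argmax set, or it is feasible and the intersection is nonempty. So the work is proving this single-vector characterization, extending Theorem 3 of \citet{Kleinberg2019Effort} to conic constraints. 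I would take $E = \|\bx\|_1$ throughout, which is harmless by \Cref{lem:only_directions}.

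\emph{Necessity (a nonempty intersection implies not elicitable).} Suppose $\bd \in \mathcal{B}_{\calS(\bx),\calV(\bx)}$ with $\balpha \bd \ge 0$, and fix any monotone $R$ and any $E$ with $\bx$ optimal; optimality forces $\|\bx\|_1 \le E$. Since $\bd$ respects only the binding constraints, for small $\epsilon>0$ the point $\bx + \epsilon \bd$ satisfies the following:
\begin{itemize}
\item it is nonnegative, because on $\calS(\bx)$ the coordinates are positive and off it $d_j \ge 0$;
\item it satisfies $\bconstraints(\bx+\epsilon\bd) \le 0$, because binding rows give $\bconstraints_{\calV}\bd \le 0$ and slack rows stay slack for small $\epsilon$;
\item it has strictly smaller $\ell_1$ norm than $\bx$, because $\mathds{1}^\top \bd<0$.
\end{itemize}
It also weakly increases every feature. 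We then spend the freed budget $\delta = -\epsilon \mathds{1}^\top\bd>0$ along a direction $\bu \ge 0$ that is feasible and strictly increases the feature $F_j$ that $R$ is strictly increasing in. Assumption \ref{ass:non-zero-feasible} gives feasible vectors positive in any coordinate, and summing them gives a feasible $\bu$ with full support. Since $\balpha$ has no zero rows, $\balpha_j \bu>0$. The cone $\{\bx:\bconstraints\bx \le 0, \bx \ge 0\}$ is convex, so $\bx+\epsilon\bd+\eta \bu$ is feasible, and it is within budget for $\eta$ small. The strict monotonicity of the $f_j$ then gives $F_j$ strictly larger and all other features weakly larger. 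A subtlety is that monotonicity of $R$ guarantees strict improvement only in the designated feature, with the others held fixed or weakly increased. I would handle this by applying monotonicity in two steps: first raise $F_j$ alone, then raise the rest. This contradicts optimality.

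\emph{Sufficiency (an empty intersection implies elicitable).} This is the main obstacle. I would construct a linear-in-features reward $R(\bfeatures) = \sum_j \beta_j f_j^{-1}(F_j)$, or work directly with a concave surrogate. The approach is to separate the convex cone $\mathcal{B}' = \{\bconstraints_{\calV}\bd\le0,\ d_{\calS^c}\ge 0,\ \mathds{1}^\top\bd \le 0\}$, with the strict inequality removed, from $\{\balpha\bd\ge0\}$. Emptiness of the strict version means $\mathds{1}^\top \bd<0$ and $\balpha\bd \ge 0$ are incompatible on the polyhedral cone. By a Farkas/Motzkin transposition theorem this yields $\beta \ge 0$, $\beta\ne 0$, $\gamma\ge 0$ and $\lambda\ge0$ with
\[
\beta^\top\balpha = \mu \mathds{1}^\top + \gamma^\top \bconstraints_{\calV} - \lambda^\top I_{\calS^c}, \qquad \mu \ge 0 .
\]
These are exactly the KKT conditions at $\bx$ for maximizing the linear function $\beta^\top\balpha \bx$ over the feasible set intersected with the budget ball. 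Complementary slackness holds automatically because only binding rows and zero coordinates appear. Taking $R(\bfeatures)=\sum_j \beta_j f_j^{-1}(F_j)$ reduces the objective to $\beta^\top \balpha\bx$, which is linear over a polyhedron, so KKT is sufficient for optimality. $R$ is monotone because the $f_j$ are strictly increasing, and it is strictly increasing in some feature because $\beta\neq0$.

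The details I expect to require care are two. First, making sure the transposition theorem gives $\beta \ne 0$ rather than a degenerate certificate using only $\mu$; this is where the strict inequality $\mathds{1}^\top\bd<0$ and Motzkin's form matter. Second, handling the domain of $f_j^{-1}$, which may require redefining $R$ on features outside the range of the $f_j$ so that it remains monotone everywhere.
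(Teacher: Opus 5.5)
Your proposal is correct and follows the paper's architecture. You reduce to the single-output characterization (\Cref{prop:single_output_characterization}). You prove necessity by perturbing along $\bd$ and invoking monotonicity; the paper routes this through \Cref{lem:full_budget}, which your full-support $\bu$ argument reproves inline. You prove sufficiency from a Motzkin certificate plus KKT.

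The one genuine difference is the reward in the sufficiency step. The paper takes $R$ linear in the features, $R(\bfeatures)=\sum_i \beta_i F_i$ with $\beta_i=\nu_i/f_i'((\balpha\bx)_i)$. The agent's objective $\sum_i \beta_i f_i((\balpha\bu)_i)$ is then concave in $\bu$ with gradient $\balpha^\top\nu$ at $\bx$, which uses the smoothness and concavity in \Cref{ass:feature-maps}. Your $\sum_j\beta_j f_j^{-1}(F_j)$ instead makes the objective the linear function $\beta^\top\balpha\bu$. The program becomes an LP, and only strict monotonicity of the $f_j$ is needed. The price is the domain issue, which is easy to handle: on the budget set, $(\balpha\bu)_j\le T_j:=E\max_i\alpha_{ji}$, so any strictly increasing extension of $f_j^{-1}$ off $[f_j(0),f_j(T_j)]$ works.

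On your worry about $\beta\neq 0$, the resolution is as follows.
\begin{itemize}
\item Motzkin gives a strictly positive multiplier $\mu>0$ on the strict row, not merely $\mu\ge 0$.
\item The vector $-\bx$ lies in $\mathcal{B}_{\calS(\bx),\calV(\bx)}$, because $\bconstraints_{\calV(\bx)}\bx=0$, $\bx_{\calS(\bx)^c}=0$ and $\bx\neq 0$.
\item Pairing the certificate identity with $-\bx$ gives $\beta^\top\balpha(-\bx)=-\mu\|\bx\|_1<0$, hence $\beta\neq0$.
\end{itemize}
The paper's proof passes over this point silently; with $\nu=0$ its reward would be identically zero.

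Two minor remarks. First, citing \Cref{lem:only_directions} is unnecessary, and it is circular because the paper derives it from this characterization. Your necessity argument already handles arbitrary $E$ anyway. Second, like the paper, your ``follows directly'' step tacitly requires every vector in the operation to be nonzero. The vector $\mathbf{0}$ is never elicitable (elicited vectors have norm $E>0$), yet $\mathcal{B}_{\calS(\mathbf{0}),\calV(\mathbf{0})}=\emptyset$.
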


Lemma \ref{thm:fixed_feature_map} characterizes the power of aggregation for a given feature weights matrix $\balpha$. As a result, the characterizing condition depends on both the reward function specification limitation (which reflects prompt engineering limitations) via $\balpha$ and the output limitation (which reflect model capability limitations) via the conic constraints $\mathbf{C}$. This result illustrates how both forms of limitations influence the power of aggregation.

\paragraph{Proof ideas.} 
The full proof of \Cref{thm:fixed_feature_map} appears in \Cref{proof:fixed_feature_map}. 
The key idea is that elicitability of an output vector $\bx$ is characterized by whether the set of feasible perturbation directions,
$\calB_{\calS(\bx), \calV(\bx)}$, intersects the set
$\{\bd \in \mathbb{R}^M : \balpha \bd \ge \mathbf{0}\}$.
Lemmas~\ref{lem:single_output_elicitation_necessary} and~\ref{lem:single_output_elicitation_sufficient}
establish the necessity and sufficiency of this condition.
This characterization yields conditions under which each individual output $\bx^{(k)}$ for $k \in [K]$ is elicitable,
while the aggregate output $\bx^{(A)}$ is not, which is precisely the notion of elicitability expansion.

To prove the lemma, one direction is straightforward: if the intersection is nonempty, moving in any direction $\bd$ in the intersection maintains feasibility and strictly increases every monotone reward over the features, resulting in a feasible vector strictly preferred over $\bx$, providing a certificate that $\bx$ cannot be elicitable.

The other direction of the characterization is more involved and shows that whenever the intersection is empty, there is a reward function that elicits $\bx$. We can write the empty intersection as infeasibility of a system of linear inequalities. We certify emptiness via duality, and the dual certificate directly yields a reward function that elicits the desired input vectors. Specifically, using Motzkin's transposition, we can equivalently write this as a certificate in terms of dual variables. The dual variables from this certificate along with a linear reward function constructed based on these variables demonstrate optimality of $\bx$ for the reward-maximization program by satisfying the KKT conditions of this program. 

\subsection{Proof ideas for main theorems}\label{sec:proof_ideas}
In this section, we will provide proof sketches for our main theorems:  Theorem \ref{thm:necessary} and Theorem \ref{thm:sufficient} that provide the characterization of elicitability-expansion. We will build on the technical lemma (\Cref{thm:fixed_feature_map}), but move beyond specific feature weights matrices $\balpha$ to reason about elicitability-expansion under \textit{some} feature weights matrix $\balpha$. 

Based on \Cref{thm:fixed_feature_map}, one might expect that elicitability-expansion occurs exactly when $\aggop$ either implements feasibility-expansion or if there exists a direction $\bd \in \calB_{\calS(\bx^{(A)}), \calV(\bx^{(A)})}$ that does not exist in $\calB_{\calS(\bx^{(k)}), \calV(\bx^{(k)})}$ for any $k \in [K]$. We might hope that the existence of such a $\bd$ allows us to construct feature weights matrix $\balpha$ where $\bd$ is the only feature-improving direction, which would make $\bx^{(A)}$ not elicitable while each $\bx^{(k)}$ is elicitable.

However, it is not possible to construct $\balpha$ with $\bd$ being the only feature-improving direction. If $\bd$ is a feature-improving direction (i.e., $\balpha \bd \ge 0$), then every direction $\bu + \lambda \bd$ for $\bu, \lambda > 0$ is also a feature-improving direction. Due to this property, a stronger empty intersection condition than each $\calB_{\calS(\bx^{(k)}), \calV(\bx^{(k)})}$ and $\calB_{\calS(\bx^{(A)}), \calV(\bx^{(A)})}$ having empty intersection  turns out to be necessary for elicitability expansion. This stronger condition is stated below and we will show that this is actually equivalent to the power-characterizing condition.

\begin{definition}\label{def:alt_condition}
 Fix constraints $\bconstraints$ and aggregation operation $\bx^{(1)}, \ldots, \bx^{(K)} \rightarrow \bx^{(A)}$. We say that the \textbf{alternate power-characterizing condition} is satisfied for $\bx^{(1)}, \ldots, \bx^{(K)} \rightarrow \bx^{(A)}$ if (1) $\aggop$ implements feasibility-expansion, or (2) there exists $\bd^{(A)} \in \mathcal{B}_{\calS(x^{(A)}), \calV(x^{(A)})}$ with $\bd^{(A)} \not \le 0$ such that: 
\[\left\{\bu + \lambda \bd^{(A)} \mid \bu \in \mathbb{R}_{\ge 0}^M, \lambda \ge 0 \right\} \bigcap \left(\bigcup_{k \in [K]} \mathcal{B}_{\calS(x^{(k)}), \calV(x^{(k)})}\right) = \varnothing.\]  
\end{definition}

\paragraph{Equivalence of both forms of power-characterizing condition.} The alternate power-characterizing condition turns out to be exactly equivalent to the power-characterizing condition in \Cref{def:characterizing_condition}. as stated in~\Cref{prop:equivalence}.

We provide a brief proof sketch of ~\Cref{prop:equivalence} here. The full proof is in~\Cref{proof:equivalence}. Note that \Cref{def:characterizing_condition} is stated in terms of a $\bd \in \calB_{\calS{\bx^{(A)}}, \calV{\bx^{(A)}}}$ violating constraints by a margin, whereas \Cref{def:alt_condition} is in terms of all non-negative scaled translations $\bu + \lambda \bd$ violating constraints. Let us now describe how violation by a margin relates to violation after non-negative translation. For simplicity, let us assume there is a single constraint. That is $\bconstraints$ has a single row.

To prevent against any budget-reducing $\bu + \lambda \bd$ satisfying the constraint, we need to prevent against the optimal choice of $\bu$ picked for constraint satisfaction. This is the $\bu$ that minimizes $\bconstraints \bu$ subject to the budget-decreasing constraint, which is that $\|\bu\|_1 \le \lambda |\mathds{1}^t d|$. The minimizing $\bu$ is the zero vector if $\bconstraints$ has all positive coordinates. Otherwise, it places all of its weight on the most negative coordinate of $\bconstraints$. Ensuring that the minimum value of $\bconstraints \bu$, which is $\lambda |\mathds{1}^t \bd|\min_{j \in [n]}(0, \bconstraints_j)$, is lower than $\lambda \bconstraints \bd$ is exactly the violation by a margin condition from the power-characterization condition.  

Extending to multiple constraints requires ensuring that no single translation $\bu$ can satisfy all constraints simultaneously. Rather than maximizing satisfaction of a single constraint, we must now consider the max-min problem: choosing $\bu$ to maximize the minimum level of satisfaction across all constraints. By the minimax theorem, this max-min value equals the min-max value obtained by first choosing a weighting over constraints and then choosing $\bu$ to maximize satisfaction of the resulting weighted constraint. This duality reduces the multi-constraint case to the single-constraint analysis above, applied to the worst-case weighted combination.

\paragraph{Proof sketch of Theorem \ref{thm:necessary}.} The full proof is provided in~\Cref{proof:necessary}. It uses the ideas from the proof of \Cref{thm:weaker_necessary} which shows the need for a feasible, budget-reducing direction of $\bx^{(A)}$ that is not a feasible, budget-reducing direction for any other $\bx^{(k)}$ for $k \in [K]$. Only then can there be a feature-improving direction that is viable for $\bx^{(A)}$ but not any of the $\bx^{(k)}$'s to result in elicitability of each $\bx^{(k)}$ but not of $\bx^{(A)}$. However, since all non-negative scaling and translations of feature-improving directions continue to be feature-improving, we further require positive scaling and translation of some direction $\bd^{(A)} \in \calB_{\calS(\bx^{(A)}), \calV(\bx^{(A)})}$ to not be present in any $\calB_{\calS(\bx^{(k)}), \calV(\bx^{(k)})}$, $k \in [K]$. This is precisely the condition of the alternate power-characterizing condition.

\paragraph{Proof sketch of Theorem \ref{thm:sufficient}.} The full proof is provided in~\Cref{proof:sufficient}. We have shown why a stronger condition is still necessary for elicitability-expansion. Now let us show that this stronger condition is also sufficient. We show this by explicitly constructing a feature map $\balpha$ for which the set $\{\bu + \lambda \bd^{(A)} \mid \bu \in \mathbb{R}_{\ge 0}^M, \lambda \ge 0 \}$ contains all feature-improving directions i.e., the set $\{\bd: \balpha \bd \ge 0\}$. In the construction, feature improving directions can have any value on the positive coordinates of $\bd^{(A)}$. But their negative coordinates cannot be too large compared to the positive coordinates.

\subsection{Conceptual insights for system designers}\label{subsec:conceptualinsights}

Our characterization results offer conceptual insights into when system designers benefit from specific aggregation operations in compound AI systems.

\paragraph{General takeaways.} Our results characterize how the interplay between prompt engineering limitations and model capability limitations affects which types of aggregation operations are useful. Specifically, aggregation not only overcomes model capability limitations (feasibility expansion), but also overcomes prompt engineering limitations through combining multiple output characteristics (support expansion) and through taking advantage of output-level limitations (binding set-contraction). Notably, even as model capabilities continue to improve, the latter two mechanisms mean that aggregation can still be useful to system designers.  On the flip side, our results illustrate how aggregation operations that do not take advantage of these mechanisms offer no power, regardless of whether the system designer employs sophisticated or unsophisticated prompt engineering practices. 

 \paragraph{Illustrative example.} As an illustrative example, consider intersection aggregation $\intagg$ and addition aggregation $\addagg$.

 \begin{itemize}[leftmargin=*]
     \item \textbf{Intersection aggregation:} Intersection aggregation combines outputs based on commonality among different output vectors, which is conceptually similar to debate protocols \citep{Du2024MultiagentDebate} that aim to create agreement or inference scaling methods that aim to filter out incorrect information \citep{Zhang2025CoTSynthesizer}. Intersection aggregation can implement feasibility expansion and binding-set contraction, but not
    support expansion (Table  \ref{tab:aggregation-expansion} in \Cref{sec:aggregation_rules}).
    However, feasibility expansion and binding-set contraction fundamentally rely on model capability limitations. If models are very capable (i.e., if they face no conic constraints), Theorem \ref{thm:necessary}  demonstrates that intersection aggregation never adds power, regardless of whether the system designer employs sophisticated or unsophisticated prompt engineering practices.

   \item \textbf{Addition aggregation:}
    Addition aggregation interpolates among different output directions, which conceptually captures how system designers synthesize multiple outputs to
delegate specialized subtasks to each agent and synthesize the outputs of these subtasks \citep{BAIR2024CompoundAISystems, Anthropic2025MultiAgentSystem}. Addition aggregation can implement support expansion (Table \ref{tab:aggregation-expansion} in \Cref{sec:aggregation_rules}) and can actually implement the strengthened form of support expansion as well (Example \ref{ex:power_support}). Theorem \ref{thm:sufficient} shows that addition aggregation adds power even when models face no capability limitations (i.e, no conic constraints), at least for some level of prompt engineering limitations.
\end{itemize}

%% file: arxiv_sections/empirical.tex
\section{Empirical Illustration using LLMs}\label{sec:empirical}

\begin{figure*}[t]
\centering
\begin{subfigure}[t]{0.32\textwidth}
    \centering
    \includegraphics[width=\textwidth]{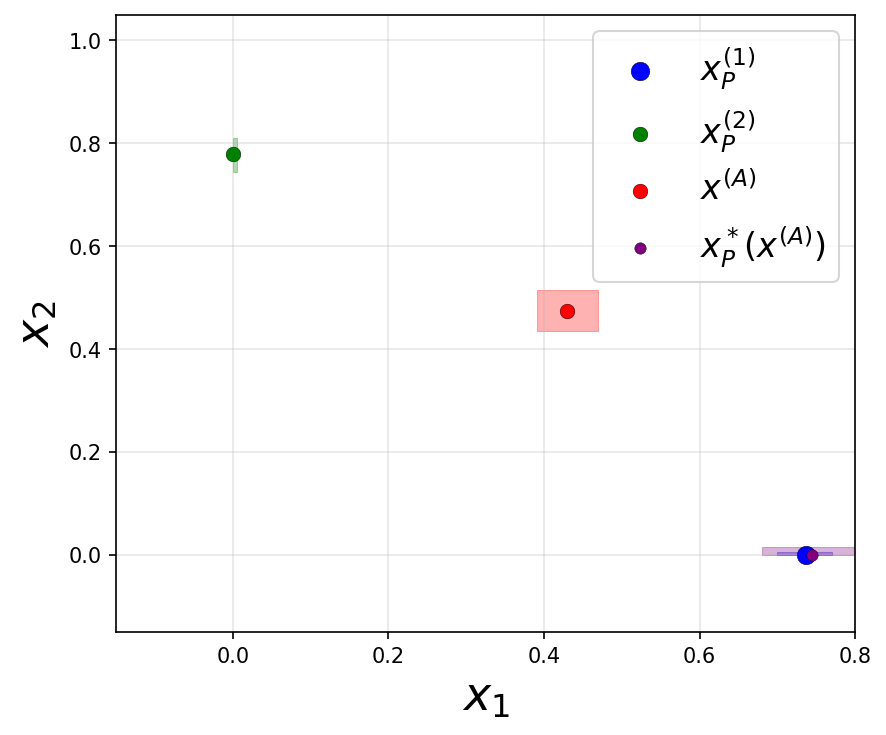}
    \caption{Support expansion}
    \label{fig:empirical-support-expansion}
\end{subfigure}
\hfill
\begin{subfigure}[t]{0.32\textwidth}
    \centering
    \includegraphics[width=\textwidth]{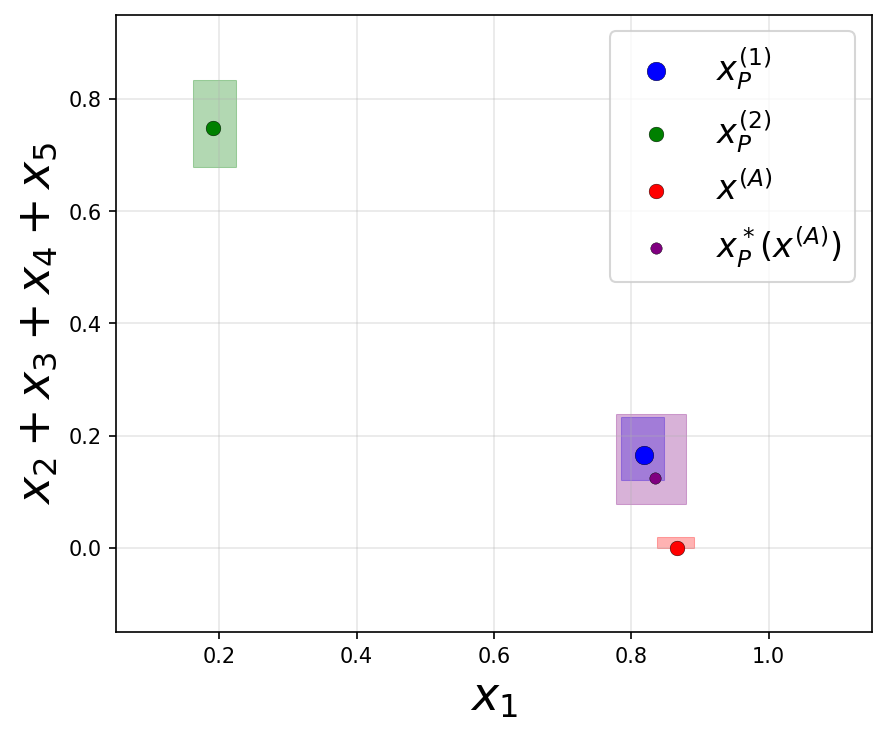}
    \caption{Binding set contraction}
    \label{fig:empirical-binding-contraction}
\end{subfigure}
\hfill
\begin{subfigure}[t]{0.32\textwidth}
    \centering
    \includegraphics[width=\textwidth]{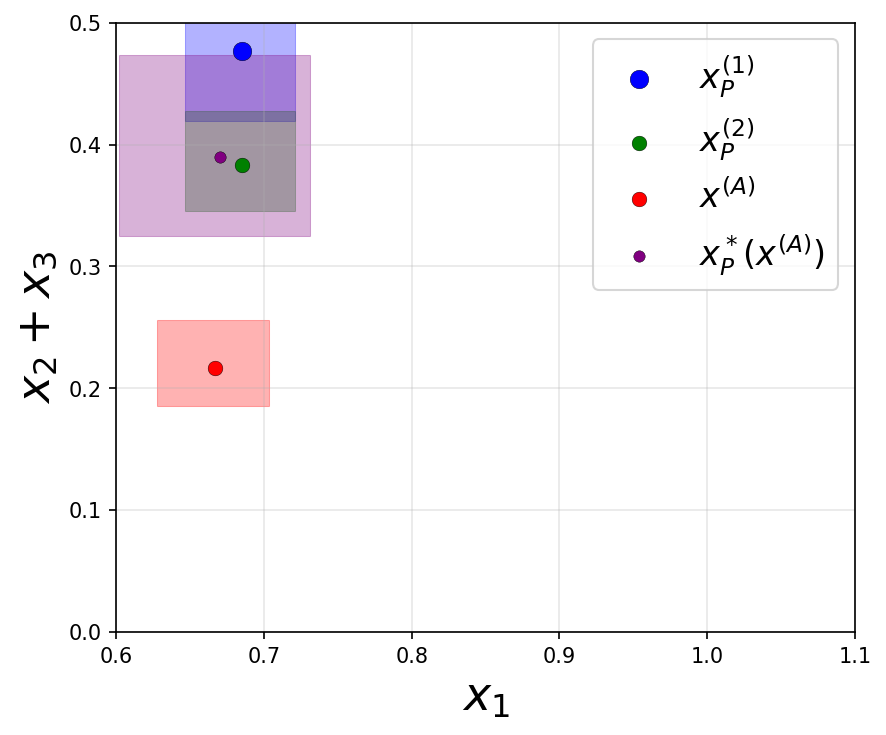}
    \caption{Feasibility expansion}
    \label{fig:empirical-feasibility-expansion}
\end{subfigure}
\caption{Empirical illustration of the three mechanisms in a toy reference-generation task with GPT-4o-mini at temperature 0.7 (Section \ref{sec:empirical}). Output dimensions capture paper topics, prompt dimensions capture (potentially higher-level) topics, and aggregation takes a union or an intersection of the lists. The plots show output vectors generated by the individual prompts ($\bx^{(1)}$ and $\bx^{(2)}$) and by the aggregation operation ($\bx^{(1)}, \bx^{(2)} \rightarrow \bx^{(A)}$), along with the output vector closest to $\bx^{(A)}$ that is elicitable by a single model with prompt topics ($\bx^*_P(\bx^{(A)})$). The shaded regions show confidence sets. Each plot shows an aggregation operation that implements one of the mechanisms---support expansion (left), binding-set contraction (middle), and feasibility expansion (right). The plots show that these aggregation operations are all elicitability-expanding. The numerical values are shown in Table \ref{tab:empirical_results}.
This plot is an empirical analogue of Figure \ref{fig:mechanisms_examples}.}
\label{fig:empirical-mechanisms}
\end{figure*}

In this section, we provide empirical support using LLMs for each of these three mechanisms and show they give power to the system designer.\footnote{Code is available at \url{https://github.com/nivasini/aggregation_compound_ai}.} Specifically, we aim to construct an aggregation operation for each operation that implements each mechanism and expands elicitability. Like in \Cref{subsec:casestudy}, we focus on a reference generation task. These experiments illustrate the robustness of our findings beyond the assumptions of our theoretical framework (e.g., reward-maximization and conic constraints).

\begin{table}[h]
\centering
\begin{minipage}{0.48\textwidth}
\centering
\resizebox{\textwidth}{!}{%
\begin{tabular}{lcccc}
\toprule
& $x_1$ & $x_2$ & $x_3$ \\
\midrule
$\bx^{(1)}$  & $[0.70, 0.77]$ & $[0.00, 0.00]$ & $[0.04, 0.08]$ \\
$\bx^{(2)}$  & $[0.00, 0.00]$ & $[0.74, 0.81]$ & $[0.03, 0.06]$ \\
\midrule
$\bx^{(A)}$  & $[0.39, 0.47]$ & $[0.43, 0.51]$ & $[0.03, 0.06]$ \\
$\bx^*_P(\bx^{(A)})$  & $[0.68, 0.80]$ & $[0.00, 0.01]$ & $[0.05, 0.12]$ \\
\bottomrule
\end{tabular}%
}
\subcaption{Support Expansion}
\label{tab:support_expansion}
\end{minipage}%
\hfill
\begin{minipage}{0.48\textwidth}
\centering
\resizebox{\textwidth}{!}{%
\begin{tabular}{lcccc}
\toprule
& $x_1$ & $x_2$ & $x_3$ \\
\midrule
$\bx^{(1)}$  & $[0.65, 0.72]$ & $[0.36, 0.44]$ & $[0.06, 0.10]$ \\
$\bx^{(2)}$ & $[0.65, 0.72]$ & $[0.00, 0.00]$ & $[0.35, 0.42]$ \\
\midrule
$\bx^{(A)}$  & $[0.63, 0.70]$ & $[0.00, 0.00]$ & $[0.19, 0.25]$ \\
$\bx^*_P(\bx^{(A)})$  & $[0.60, 0.73]$ & $[0.00, 0.01]$ & $[0.33, 0.46]$ \\
\bottomrule
\end{tabular}%
}
\subcaption{Feasibility Expansion}
\label{tab:feasibility_expansion}
\end{minipage}

\vspace{1em}
\begin{minipage}{0.6\textwidth}
\centering
\resizebox{\textwidth}{!}{%
\begin{tabular}{lccccc}
\toprule
& $x_1$ & $x_2$ & $x_3$ & $x_4$ & $x_5$ \\
\midrule
$\bx^{(1)}$ & $[0.79, 0.85]$ & $[0.09, 0.14]$ & $[0.00, 0.02]$ & $[0.00, 0.02]$ & $[0.03, 0.06]$ \\
$\bx^{(2)}$ & $[0.16, 0.22]$ & $[0.00, 0.01]$ & $[0.61, 0.69]$ & $[0.04, 0.08]$ & $[0.02, 0.05]$ \\
\midrule
$\bx^{(A)}$  & $[0.84, 0.89]$ & $[0.00, 0.00]$ & $[0.00, 0.00]$ & $[0.00, 0.00]$ & $[0.00, 0.00]$ \\
$\bx^*_P(\bx^{(A)})$ & $[0.78, 0.88]$ & $[0.07, 0.16]$ & $[0.00, 0.01]$ & $[0.00, 0.03]$ & $[0.00, 0.04]$ \\
\bottomrule
\end{tabular}%
}
\subcaption{Binding-set contraction}
\label{tab:binding_contraction}
\end{minipage}
\hfill
\begin{minipage}{0.38\textwidth}
\centering
\resizebox{\textwidth}{!}{%
\begin{tabular}{lc}
\toprule
\shortstack[l]{Experiment} & \shortstack{$\|\bx^*_P(\bx^{(A)}) - \bx^{(A)}\|_1$} \\
\midrule
Support expansion & [0.63, 1.02] \\
Feasibility expansion & [0.07, 0.39] \\
Binding-set contraction & [0.07, 0.35] \\
\bottomrule
\end{tabular}%
}
\subcaption{$\ell_1$ distance between $\bx^{(A)} $ and $\bx^*_P(\bx^{(A)})$ }
\label{tab:empirical_angular_distance_summary}
\end{minipage}
\caption{Empirical illustration of the three mechanisms in a toy reference-generation task with GPT-4o-mini at temperature 0.7 (Section \ref{sec:empirical}). We show the exact values from Figure \ref{fig:empirical-mechanisms} for support expansion, feasibility expansion, and binding-set contraction. We also show the $\ell_1$ distance between $\bx^{(A)}$ and $\bx^*_P(\bx^{(A)})$. All of our quantities are reported as 95\% confidence intervals.
}
\label{tab:empirical_results}
\end{table}

\normalsize{}

\paragraph{Instantiation of our model.}
The task is to generate a list of titles of papers that reflects specific topics. We instantiate this task to capture key aspects of our model such as reward functions operating over coarser features, but we depart from some of the specifics such as linearity of the features and the monotonicity assumption on the reward functions.  Each problem instance is defined by output dimension topics $\mathcal{T}^O$, 
prompt topics $\mathcal{T}^P$, and an aggregation operation $\mathcal{A}^{\cup}$ or $\mathcal{A}^{\cap}$ as we describe below. 
\begin{itemize}
    \item \textit{Outputs:} Given a reference list $L$, we let the output vector $\bx \in \mathbb{R}_{\ge 0}^M$ capture the extent to which the list is perceived to reflect different topics $\mathcal{T}^O := \left\{\mathcal{T}^{O}_i\right\}_{i \in [M]}$. That is, each output dimension $i \in [M]$ corresponds to a topic $T^{O}_i$, and $x_i$ captures the fraction of papers in $L$ that are perceived to belong to the paper topic $x_i$. 
    \item \textit{Reward function specification:} Prompting\footnote{For simplicity, in this setup we do not consider budget specification.} is also based on paper topics $\mathcal{T}^P := \left\{\mathcal{T}^{P}_j\right\}_{j \in [N]}$, which tend to be broader in scope than the output dimension topics $\left\{\mathcal{T}^{O}_i\right\}_{i \in [M]}$ in many of our setups. Each prompt is specified by a set of topics to include $G^{\text{inc}} \subseteq \mathcal{T}^P$, a connector operation $\text{op}^{\text{inc}} \in \left\{\text{and}, \text{or}\right\}$ between the included topics, a disjoint set of topics $G^{\text{exc}} \subseteq \mathcal{T}^P$ (where $G^{\text{inc}} \cap G^{\text{exc}} = \emptyset)$ to explicitly exclude, and a connector operation $\text{op}^{\text{exc}} \in \left\{\text{and}, \text{or}\right\}$ between the excluded topics. If an inclusion or exclusion set is empty or size one, we omit the corresponding connector 
    operation.  
    \item \textit{Aggregation:} We consider two types of aggregation rules: $\mathcal{A}^{\cup}$ takes a union of the lists (based on $\addagg$), and $\mathcal{A}^{\cap}$ takes a intersection of the lists (based on $\intagg$). 
\end{itemize}

We build on our model to capture elicitability, but relax these concepts to allow for closeness in output vectors rather than necessarily requiring an exact match. We define the closeness of two output vectors $\bx \in \mathbb{R}^M_{\ge 0}$ and $\bx' \in \mathbb{R}^M_{\ge 0}$ in terms of their $\ell_1$ norm i.e., $\|\bx - \bx'\|_1$. We capture elicitability using prompting over the prompt topics $\mathcal{T}^P$.  An output vector $\bx$ is \textit{elicitable} by a single model if there exists a prompt specification $(G^{\text{inc}} \subseteq \mathcal{T}^P, \text{op}^{\text{inc}}, G^{\text{exc}} \subseteq \mathcal{T}^P, \text{op}^{\text{exc}})$ over prompt topics that produces an output vector close to $\bx$. Given a target vector $\bx_{\text{target}}$, let $\bx^*_P(\bx_{\text{target}})$ denote the output vector closest to $\bx_{\text{target}}$ that can be produced with some prompt specification over prompt topics.

\paragraph{Empirical setup.} All of our experiments use GPT-4o-mini at temperature 0.7. For prompting, we convert a specification $(G^{\text{inc}}, \text{op}^{\text{inc}}, G^{\text{exc}}, \text{op}^{\text{exc}})$ into natural language by connecting the included topics using $\text{op}^{\text{inc}}$, and then saying ``excluding'' followed by connecting the excluded topics using $\text{op}^{\text{exc}}$. We use an LLM-as-a-judge (GPT-4o-mini at temperature 0) to classify whether each paper title falls within the topic, and then we compute the fraction of papers that are classified as positive.\footnote{Note this reflects the perception of the paper's category given the title, rather than whether the paper truly belongs to the category at hand.} To study elicitability, we do a brute force search over the space of prompt specifications. We average over 30 trials to determine the output vector $\bx \in \mathbb{R}_{\ge 0}^M$ associated with a given prompt specification. We defer other details of our setup to Appendix \ref{appendix:empiricaldetails}.

In Appendix \ref{appendix:additionalexperiments}, we show that our empirical results readily generalize to different model configurations. Specifically, we consider different models (GPT-5.4, GPT-5-mini), different temperatures, and aggregation of heterogeneous models. Our experiments provide empirical support for all three mechanisms in almost all of the experimental variations. This robustness illustrates that our findings hold beyond the specific assumptions of our theoretical framework.

For each mechanism, we aim to construct an aggregation operation $\bx^{(1)}, \bx^{(2)} \rightarrow \bx^{(A)}$ that implements the mechanism and also is elicitability-expanding. 

\subsection{Results}

We show our results in Figure \ref{fig:empirical-mechanisms} and Table \ref{tab:empirical_results}, and we explain them in detail below. 

\paragraph{Support Expansion.} We construct a setup that resembles \Cref{ex:power_support} shown in Figure \ref{fig:support-expansion}.  
Let there be $M=3$ output dimensions and $N=2$ prompt features. The output dimension topics are:
\begin{align*}
    \mathcal{T}^O_1 &: \text{complexity theory (computational complexity, P vs NP, complexity classes, hardness results,} \\
    &\text{reductions, circuit complexity, space/time complexity bounds)} \\
    \mathcal{T}^O_2 &: \text{macroeconomics (GDP, inflation, monetary policy, fiscal policy, business cycles, economic growth,} \\
    &\text{unemployment, central banking, aggregate demand/supply)} \\
    \mathcal{T}^O_3 &: \text{mechanism design (auction design, incentive compatibility, social choice, matching markets,} \\
    &\text{market design, algorithmic game theory, incentive mechanisms)}
\end{align*}
The prompt topics are $\mathcal{T}^P_1: \text{computer science theory}$ and $\mathcal{T}^P_2: \text{economics}$. We take the aggregation rule to be $\mathcal{A}^{\cup}$. We let $\bx^{(1)}$ capture the output from prompt specification $(G_1^{\text{inc}} = \left\{\mathcal{T}^P_1\right\}, G_1^{\text{exc}} = \emptyset)$, and we let $\bx^{(2)}$ capture the output from prompt specification $(G_2^{\text{inc}} = \left\{\mathcal{T}^P_2 \right\}, G_1^{\text{exc}} = \emptyset)$. 

The results are shown in Figure \ref{fig:empirical-support-expansion} and \Cref{tab:support_expansion}. We find on average that $\bx^{(A)} = [0.43, 0.47, 0.04]$, that $\bx^*_P(\bx^{(A)}) = [0.74, 0.00, 0.07]$.
We obtain a confidence bound of $[0.63, 1.02]$ for $\ell_1$ distance between $\bx^{(A)}$ and $\bx^*_P(\bx^{(A)})$.
The vector $\bx_P^*(\bx^{(A)})$ is obtained via brute-force search over prompt specifications.

These results suggest that  $\bx^{(1)}, \bx^{(2)} \rightarrow \bx^{(A)}$  implements support expansion and is elicitability-expanding. Specifically, $\bx^{(1)}$ has most of its nonzero weight on topic $\mathcal{T}^O_1$,  $\bx^{(2)}$ has most of its nonzero weight on topic
$\mathcal{T}^O_2$, and $\bx^{(A)}$ reflects both topics $\mathcal{T}^O_1$ and $\mathcal{T}^O_2$. $\bx^{(A)}$ is not elicitable by a single model, since $\bx^*_P(\bx^{(A)})$ misses out on $\mathcal{T}^O_2$. This illustrates that no prompt is able to simultaneously activate topics $\mathcal{T}^O_1$ and $\mathcal{T}^O_2$, but aggregation is able to overcomes this limitation by separately considering $\mathcal{T}^O_1$ and $\mathcal{T}^O_2$. On the other hand, if we prompt using output dimension topics $\mathcal{T}^O$ as opposed to the prompt topics $\mathcal{T}^P$, it is possible to simultaneously activate both topics:  we find on average that the output vector equals $[0.52, 0.51, 0.00]$ if we use the
prompt specification $(G^{inc}=\{\mathcal{T}^O_1, \mathcal{T}^O_2\}, \text{op}^{\text{inc}} = \text{or}, G^{exc} = \emptyset)$.

\paragraph{Binding set contraction.} We construct an example that resembles Example \ref{ex:power_other} shown in Figure \ref{fig:binding-set-contraction}, but with a greater number of output dimensions.  
Let there be $M=5$ output dimension topics and $N=2$ prompt dimension topics. The output dimension topics are:
\begin{align*}
    \mathcal{T}^O_1 &: \text{deep learning (transformers, attention mechanisms, deep neural networks,} \\
    &\text{modern architectures like BERT, GPT, ViT)} \\
    \mathcal{T}^O_2 &: \text{non-transformer NLP methods (RNN, LSTM, GRU, word2vec, seq2seq without attention,}\\
    & \text{traditional NLP)} \\
    \mathcal{T}^O_3 &: \text{non-transformer CV methods (CNN, ConvNets, ResNet, VGG, pooling, convolutional architectures)} \\
    \mathcal{T}^O_4 &: \text{multimodal methods bridging multiple modalities (combining text and images,} \\
    &\text{vision-language models, image captioning, VQA)} \\
    \mathcal{T}^O_5 &: \text{statistical machine learning (non-neural methods like SVM, random forests, logistic regression,}\\
    &\text{Bayesian methods, traditional ML))}
\end{align*}
The prompt dimension topics are $\mathcal{T}^P_1: \text{natural language processing (NLP)}$ and  $\mathcal{T}^P_2: \text{computer vision (CV)}$. We take the aggregation rule to be $\mathcal{A}^{\cap}$. We let $\bx^{(1)}$ be the output from prompt specification $(G^{inc}_1 = \{\mathcal{T}^P_1\}, G^{exc}_1 = \{\mathcal{T}^P_2\})$, and we let $\bx^{(2)}$ be the output from prompt specification  $(G^{inc}_2 = \{\mathcal{T}^P_2\}, G^{exc}_2 = \{\mathcal{T}^P_1\})$. 

The results are shown in Figure \ref{fig:empirical-binding-contraction} and Table \ref{tab:binding_contraction}. We find on average that ${\bx}^{(A)} = [0.87, 0.00, 0.00, 0.00, 0.00]$, that $\bx^*_P(\bx^{(A)}) = [0.83, 0.11, 0.00, 0.01, 0.01]$. We get a confidence bound of $[0.07, 0.35]$ for $\ell_1$ distance between $\bx^{(A)}$ and $\bx^*_P(\bx^{(A)})$.
The vector $\bx^*_P(\bx^{(A)})$ is obtained via brute-force search over prompt specifications.

The results suggest that $\bx^{(1)}, \bx^{(2)} \rightarrow \bx^{(A)}$ implements binding-set contraction and it is elicitability-expanding. 
Specifically, since $\mathcal{T}^O_1: $ deep learning is an umbrella topic covering many papers in topics $\mathcal{T}^O_2, \mathcal{T}^O_3, \mathcal{T}^O_4$, we expect that there is a semantic constraint capturing how lists of papers which include topics $\mathcal{T}^O_2$, $\mathcal{T}^O_3$ or  $\mathcal{T}^O_4$ also tend to include topic $\mathcal{T}^O_1$. 
Both $\bx^{(1)}$ and $\bx^{(2)}$
appear to be binding relative to this constraint, while ${\bx}^{(A)}$ is in the interior. 
$\bx^{(A)}$ is not elicitable by a single model: while ${\bx}^{(A)}$ predominantly reflects topic $\mathcal{T}^O_1$, the output vector $\bx^*_P(\bx^{(A)})$ also reflects some topics outside of $\mathcal{T}^O_1$. On the other hand, if we prompt using output dimension topics $\mathcal{T}^O$ rather than the prompt topics $\mathcal{T}^P$, it is possible to slightly reduce the fraction of papers reflecting topics outside of $\mathcal{T}^O_1$: we find on average that the output vector equals $[0.96, 0.00, 0.01, 0.01, 0.00]$  if we use the
prompt specification $(G^{inc} = \{\mathcal{T}^O_1\}, G^{exc} = \{\mathcal{T}^O_2, \mathcal{T}^O_3, \mathcal{T}^O_4, \mathcal{T}^O_5\}, \text{op}^{\text{exc}} = \text{or})$.

\paragraph{Feasibility Expansion.} We construct a setup that resembles Example \ref{ex:power_feasibility} shown in Figure \ref{fig:feasibility-expansion}, but without the reward specification limitations. Let there be $M=3$ output dimensions, corresponding to topics: 
\begin{align*}
    \mathcal{T}^O_1 &: \text{blockchain(consensus protocols, smart contracts, decentralized ledgers, proof-of-work, proof-of-stake)} \\
  \mathcal{T}^O_2&: \text{cryptography (encryption, zero-knowledge proofs, hash functions, digital signatures, key exchange)} \\
\mathcal{T}^O_3 &: \text{distributed systems (consensus algorithms, fault tolerance, replication, distributed databases,}\\
 &\text{CAP theorem)}
\end{align*}
We take $\mathcal{T}^P = \mathcal{T}^O$, which means that $M = N = 3$ and also that feasibility and elicitability are equivalent. We take the aggregation rule to be $\mathcal{A}^{\cap}$. We let $\bx^{(1)}$ capture the output from prompt specification $(G_1^{\text{inc}} = \left\{\mathcal{T}^O_1, \mathcal{T}^O_2 \right\}, \text{op}^{\text{inc}} = \text{or}, G_1^{\text{exc}} = \left\{ \mathcal{T}^O_3 \right\})$, and we let $\bx^{(2)}$ capture the output from prompt specification $(G_2^{\text{inc}} = \left\{\mathcal{T}^O_1, \mathcal{T}^O_3 \right\}, \text{op}^{\text{inc}} = \text{or}, G_2^{\text{exc}} = \left\{ \mathcal{T}^O_2 \right\})$.  

The results are shown in Figure \ref{fig:empirical-feasibility-expansion} and \Cref{tab:feasibility_expansion}. We find on average that $\bx^{(A)} = [0.67, 0.00, 0.22]$, that $\bx_P^* (\bx^{(A)}) = [0.67, 0.00, 0.39]$. We get a confidence bound of $[0.07, 0.39]$ for the $\ell_1$ distance between $\bx^{(A)}$ and $\bx_P^* (\bx^{(A)})$, which notably does not contain $0$, illustrating a clear gap.
The vector $\bx_P^* (\bx^{(A)})$ is obtained via brute-force search over prompt specifications.

The results suggest that  $\bx^{(1)}, \bx^{(2)} \rightarrow \bx^{(A)}$ implements feasibility expansion and is elicitability-expanding.
 Intuitively, we expect there to be the following constraint faced by agents (whether humans or LLMs) in this task: many blockchain papers ($\mathcal{T}^O_1$) are related to cryptography ($\mathcal{T}^O_2$) or to distributed systems ($\mathcal{T}^O_3$). Both $\bx^{(1)}$ and $\bx^{(2)}$ reflect this constraint, but $\bx^{(A)}$ circumvents this constraint through aggregation.
 $\bx^{(A)}$ is not elicitable by a single model: $\bx^*_P(\bx^{(A)})$ has substantially more weight on $\mathcal{T}^O_3$ (distributed systems) than $\bx^{(A)}$ does, while matching on $\mathcal{T}^O_1$. The aggregation operation produces a vector whose distributed-systems content is markedly lower than what any single prompt can achieve.

%% file: arxiv_sections/discussion_arxiv.tex
\section{Discussion}\label{sec:discussion}

In this work, we theoretically study how aggregating multiple copies of the same model gives access to a greater set of outputs than using only a single model. Building on a principal-agent framework, our results show how aggregation must implement one of three mechanisms---feasibility-expansion, support expansion, and binding set contraction---in order to expand the set of elicitable outputs. Although these mechanisms are not sufficient to ensure that aggregation adds power, we show a more precise condition formed from strengthening the mechanisms is sufficient. Finally, we empirically illustrate these mechanisms by deploying LLMs in a toy reference-generation task, demonstrating the robustness of our findings beyond the assumptions of our stylized model.

\paragraph{Connecting our mechanisms to empirical phenomena.} To complement our empirical support for the three mechanisms in the toy reference-generation task, we discuss how the mechanisms connect with more broadly observed empirical phenomena, which we hope inspire future empirical work. Since aggregation is only powerful when individual models are limited on their own, we begin by outlining the single-model limitations underlying each mechanism and the empirical phenomena supporting them.
\begin{itemize}[leftmargin=*,nosep]
\item The power of feasibility expansion traces back to limitations in the types of outputs that individual models can generate: specifically, when models can’t exhibit certain (desirable) dimensions without exhibiting other (undesirable) dimensions as a side effect. This side effect has been empirically observed for safety versus overrefusal, where models which refuse a larger fraction of toxic outputs tend to refuse a larger fraction of safe outputs as a side effect \citep{Cui2025ORBench}. Similar side effects have been observed for alignment and hedging \citep{ouyang2022training}, and theoretically studied for creativity and factuality \citep{Sinha2023CreativityReality}.
\item The power of support expansion traces back to challenges with eliciting outputs that perform along multiple dimensions at once in single-agent settings. This limitation has been empirically observed in cases where each dimension corresponds to a distinct user requirement. For example, prompts are often underspecified, since users may not include all of the requirements that they care about in the prompt \citep{Yang2025PromptUnderspecification}. Moreover, even when users specify all their requirements, LLMs struggle to satisfy many requirements simultaneously \citep{Wen2024ComplexBench,Guo2025RECAST}.
\end{itemize}
We leave pinpointing empirical phenomena which support binding set contraction---whose emergence depends on the interaction between prompt-engineering and model limitations---to future work. More broadly, since our results identify when aggregation enables these mechanisms, an important direction is to connect them to practice by testing whether real aggregation methods (e.g., debate \citep{Du2024MultiagentDebate}, prompt ensembling \citep{Arora2023AMA}) exhibit them. The single-model limitations discussed above suggest promising empirical settings where aggregation should add power.

\paragraph{Model limitations and extensions. }Our stylized model, which builds on a classical principal-agent framework \citep{Kleinberg2019Effort}, makes simplifying assumptions for tractability. First, we assume for simplicity that the agent chooses an output deterministically. Nonetheless, our empirical findings robustly generalize to LLMs with nonzero temperature which exhibit stochasticity (Section \ref{sec:empirical}). It would be interesting to extend our theoretical results to capture this randomness and to allow the system designer to choose a temperature for each agent. Moreover, while our analysis allows for nonlinear rewards $R$, we restrict the output constraints (i.e., model limitations) and the feature map (i.e., prompt engineering limitations) to linear functional forms. Extending our model to allow for nonlinear limitations, which would complicate the structure of the agent’s optimization program, is an interesting direction for future work. Furthermore, we also assume each agent’s reward depends only on its own outputs, though richer interdependencies may arise in repeated, multi-turn interactions \citep{Du2024MultiagentDebate}. Finally, while our analysis focuses on steering agents through reward design, it would be interesting to incorporate other choices, such as tool use and fine-tuning, that enable specialization in compound AI systems \citep{BAIR2024CompoundAISystems}.

%% file: arxiv_sections/appendix_arxiv.tex
\section{Additional details and proofs for Section \ref{sec:concrete}}

\subsection{Insufficiency of natural mechanisms implementation for elicitability-expansion}\label{sec:insufficiency}

While implementing one of feasibility-expansion, support expansion, or binding set contraction is necessary for an aggregation operation to be elicitability-expanding as shown by \Cref{thm:weaker_necessary}, in this section, we will show that it is not sufficient.

Support expansion on its own or binding set contraction on its own is not sufficient for power. The following proposition states the insufficiency of support expansion relative to every input vector of the aggregation operation for elicitability-expansion.

\begin{proposition}\label{prop:insufficient_support}
Fix $\bconstraints = \emptyset$. There exists an aggregation operation $\bx^{(1)}, \bx^{(2)}, \bx^{(3)} \rightarrow \bx^{(A)}$ such that $\bx^{(1)}, \bx^{(2)}, \bx^{(3)} \rightarrow \bx^{(A)}$ 
implements support expansion relative to $i$ for $i \in \{1,2\}$. However, $\bx^{(1)}, \bx^{(2)}, \bx^{(3)} \rightarrow \bx^{(A)}$ is not elicitability-expanding. 
\end{proposition}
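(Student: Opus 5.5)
The plan is to exhibit a concrete operation in which the aggregated output coincides with one of the inputs. Then elicitability-expansion fails for a trivial reason, no matter how much support expansion happens relative to the other inputs. Take $M = 2$ and $\bconstraints = \emptyset$, so that $\mathcal{B}^{\mathrm{feasible}}(\bconstraints) = \preals^2$ and every binding set is empty. Set
\[
\bx^{(1)} = [1,0], \qquad \bx^{(2)} = [0,1], \qquad \bx^{(3)} = [1,1], \qquad \bx^{(A)} = [1,1].
\]
This output is produced by, for instance, $\addagg(\bx^{(1)},\bx^{(2)},\bx^{(3)}; [0,0,1])$. All three inputs are nonnegative, so they are feasible, as the paper requires of aggregation operations.

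First I check the support-expansion claims, which is immediate from \Cref{def:support_expand}. We have $\calS(\bx^{(A)}) = \{1,2\}$, which is not contained in $\calS(\bx^{(1)}) = \{1\}$ and is not contained in $\calS(\bx^{(2)}) = \{2\}$. Hence the operation implements support expansion relative to $1$ and relative to $2$.

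Second I show that the operation is not elicitability-expanding for any feature map. The direct argument is as follows. Suppose the first bullet of the definition of elicitability-expansion holds for some $\bfeatures$. Then there are a monotone reward function $R^{(3)}$ and a budget $E^{(3)} > 0$ with $\bx^{(3)} \in \mathbf{X}^*(R^{(3)}, E^{(3)}; \bconstraints, \bfeatures)$. Since $\bx^{(A)} = \bx^{(3)}$, the same pair $(R^{(3)}, E^{(3)})$ elicits $\bx^{(A)}$, which violates the second bullet. So no choice of $\bfeatures$ makes the operation elicitability-expanding relative to $\bconstraints$.

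As a cross-check, the same conclusion follows from \Cref{thm:weaker_necessary}. Feasibility expansion fails because $\bx^{(A)} \ge 0$ and there are no conic constraints. Support expansion relative to $3$ fails because $\calS(\bx^{(A)}) = \calS(\bx^{(3)})$. Binding set contraction relative to $3$ fails because $\calV(\bx^{(A)}) = \calV(\bx^{(3)}) = \emptyset$. Thus neither condition of the theorem holds, and the operation cannot be elicitability-expanding.

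There is no real obstacle in this argument. The only point needing care is that the proposition requires support expansion only for $i \in \{1,2\}$, not for every input. The counterexample exploits exactly this gap by letting the third input share the support of $\bx^{(A)}$.
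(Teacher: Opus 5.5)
Your argument is valid for the statement as printed. Since $\bx^{(A)} = \bx^{(3)}$, any monotone reward and budget that elicit $\bx^{(3)}$ also elicit $\bx^{(A)}$, so the two bullets of elicitability-expansion can never hold together for any features.

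However, it works only because of the restriction ``$i \in \{1,2\}$'', which is evidently a slip for ``every input''. The sentence introducing the proposition says it shows insufficiency of support expansion \emph{relative to every input vector}. The paper's example $\bx^{(1)} = [0,1,1]$, $\bx^{(2)} = [1,0,1]$, $\bx^{(3)} = [1,1,0] \rightarrow \bx^{(A)} = [1,1,1]$ is support-expanding relative to all three inputs. The point of the proposition is that the second condition of \Cref{thm:weaker_necessary} (a mechanism for \emph{each} $k \in [K]$) is not sufficient.

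Your example says nothing about this. As your own cross-check shows, it already violates that necessary condition at $k=3$, so its failure to expand elicitability is immediate. Under the intended reading the construction fails, and any fix must have $\calS(\bx^{(A)}) \not\subseteq \calS(\bx^{(k)})$ for all three inputs. That rules out copying an input.

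The missing idea is to make every input omit some coordinate of $\calS(\bx^{(A)})$ while the inputs jointly block every candidate direction. With no constraints and $\calS(\bx^{(A)}) = [M]$, this happens when, for every $j \in [M]$, some input has support exactly $[M]\setminus\{j\}$.

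\begin{itemize}
\item Via \Cref{def:alt_condition}: any candidate $\bd^{(A)} \in \calB_{[M],\emptyset}$ with $\bd^{(A)} \not\le 0$ has some coordinate $d^{(A)}_j > 0$. It therefore already lies in $\calB_{[M]\setminus\{j\},\emptyset} = \{\bd : d_j \ge 0,\ \mathds{1}^\top \bd < 0\}$ (take $\bu = \mathbf{0}$, $\lambda = 1$), so the alternate power-characterizing condition fails. There is no feasibility expansion, so \Cref{prop:equivalence} and \Cref{thm:necessary} give non-expansion.
\item Equivalently, via \Cref{def:characterizing_condition}: each such input forces $\blambda^{(k)} > 0$ and $d_j < \mathds{1}^\top \bd$. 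Summing over $j$ gives $(M-1)\,\mathds{1}^\top \bd > 0$, contradicting $\mathds{1}^\top \bd < 0$.
\end{itemize}

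Do this check directly. The paper cites \Cref{prop:alllargesubsets}, but that corollary's hypothesis is exactly what fails for this example, so it does not literally apply.
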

\begin{proof}
This follows mainly from \Cref{prop:alllargesubsets}. This Corollary shows that an additional condition beyond support expansion is required for an operation $\aggop$ to be elicitability-expanding when $\calV(\bx^{(1)}) = \ldots = \calV(\bx^{(K)}) = \calV(\bx^{(A)})$. This is the condition that if $\bx^{(A)}$ has full support, then there is a $j \in [M]$ and $k \in [K]$ such that $[M] \setminus \{j\} \not \subseteq \calS(\bx^{(k)})$.

We will now construct a problem instance with three output dimensions and an aggregation operation that is support expanding but fails the additional condition. Additionally this problem instance has no conic constraints making the binding constraint set the empty set for all vectors. Consider the aggregation operation $\bx^{(1)} = [0,1,1], \bx^{(2)} = [1,0,1], \bx^{(3)} = [1,1,0] \rightarrow \bx^{(A)} = [1,1,1]$. This aggregation operation fails the necessary condition for elicitability-expansion stated in \Cref{prop:alllargesubsets}. 
\end{proof}

Similarly, binding set contraction relative to every input vector of the aggregation operation also does not guarantee that aggregation has power. The next proposition formalizes this.

\begin{proposition}\label{prop:insufficient_binding}
There exists an aggregation operation $\bx^{(1)}, \bx^{(2)} \rightarrow \bx^{(A)}$ and a set of conic constraints $\bconstraints$ such that $\aggop$ 
implements binding set contraction relative to $i$ for $i \in \{1,2\}$. However, $\bx^{(1)}, \bx^{(2)} \rightarrow \bx^{(A)}$  is not elicitability-expanding. 
\end{proposition}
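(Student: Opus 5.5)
We plan to exhibit an explicit two-dimensional instance and then certify non-expansion with \Cref{thm:necessary}. Rather than checking the power-characterizing condition (\Cref{def:characterizing_condition}) directly, we will show that the alternate power-characterizing condition (\Cref{def:alt_condition}) fails. We then use its equivalence with \Cref{def:characterizing_condition} (\Cref{prop:equivalence}) to conclude that the operation is not elicitability-expanding.

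\emph{Construction.} Take $M=2$ and two conic constraints $\bconstraints_1 = [1,-2]$ (i.e.\ $x_1 \le 2x_2$) and $\bconstraints_2 = [-2,1]$ (i.e.\ $x_2 \le 2x_1$). \Cref{ass:non-zero-feasible} holds, since $[1,1]$ is feasible and strictly positive. Let $\bx^{(1)} = [2,1]$ and $\bx^{(2)} = [1,2]$, with aggregate $\bx^{(A)} = \intagg(\bx^{(1)},\bx^{(2)}) = [1,1]$.

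\emph{Basic properties.} All three vectors are feasible, so there is no feasibility expansion, and all have full support, so there is no support expansion. The binding sets are $\calV(\bx^{(1)}) = \{1\}$ and $\calV(\bx^{(2)}) = \{2\}$, while $\bconstraints_1\bx^{(A)} = \bconstraints_2 \bx^{(A)} = -1$ gives $\calV(\bx^{(A)}) = \emptyset$. Hence the operation implements binding set contraction relative to both $k=1$ and $k=2$.

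\emph{Failure of the alternate condition.} By \Cref{def:feasible_budget_reducing_directions}, the relevant direction sets are
\[
\calB_{\calS(\bx^{(A)}),\calV(\bx^{(A)})} = \{\bd : d_1+d_2<0\},\qquad
\calB_{\calS(\bx^{(k)}),\calV(\bx^{(k)})} = \{\bd : \bconstraints_k\bd \le 0,\ d_1+d_2<0\}.
\]
It suffices to show that every $\bd^{(A)}$ with $d_1+d_2<0$ and $\bd^{(A)}\not\le 0$ already lies in $\calB_{\calS(\bx^{(1)}),\calV(\bx^{(1)})}\cup \calB_{\calS(\bx^{(2)}),\calV(\bx^{(2)})}$. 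We would then take $\bu=0$ and $\lambda=1$ in \Cref{def:alt_condition}, so the intersection there is nonempty for every candidate $\bd^{(A)}$.

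To prove this covering claim, suppose $\bd$ violates both conic constraints, i.e.\ $d_1>2d_2$ and $d_2>2d_1$. Then $d_1 > 2d_2 > 4d_1$, which forces $d_1<0$. Next, $d_2 < d_1/2 < 0$, so $\bd \le 0$, contradicting $\bd \not\le 0$. Therefore every admissible $\bd^{(A)}$ satisfies $\bconstraints_1\bd^{(A)} \le 0$ or $\bconstraints_2 \bd^{(A)}\le 0$, and so lies in $\calB_{\calS(\bx^{(1)}),\calV(\bx^{(1)})}$ or $\calB_{\calS(\bx^{(2)}),\calV(\bx^{(2)})}$.

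\emph{Conclusion.} Since feasibility expansion also fails, the alternate condition fails, and hence by \Cref{prop:equivalence} the power-characterizing condition fails. \Cref{thm:necessary} then gives that the operation is not elicitability-expanding.

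The only delicate step is choosing the constraint pair so that the two half-spaces $\{\bconstraints_k \bd \le 0\}$ jointly cover every non-nonpositive budget-reducing direction. The computation above handles this. If one preferred to avoid citing the equivalence, the fallback is to check \Cref{def:characterizing_condition} directly. There $\bw^{(k)} = \gamma_k \bconstraints_k$, and the margin term gives the requirement $\gamma_k(\bconstraints_k\bd - 2|\mathds{1}^\top\bd|) > 0$. Because this requirement implies $\bconstraints_k\bd>0$, it must hold for both $k$ at once, which the covering argument rules out.
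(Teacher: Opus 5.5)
Your proposal is correct and follows the same route as the paper's proof. Both build a two-dimensional instance with two conic constraints, each binding at exactly one input and neither at the aggregate; both show that every budget-reducing $\bd \not\le 0$ satisfies at least one constraint, so the alternate power-characterizing condition fails; and both then conclude via \Cref{prop:equivalence} and \Cref{thm:necessary}. The only difference is the instance: you use the symmetric constraints $x_1 \le 2x_2$, $x_2 \le 2x_1$ with $\intagg$, while the paper uses $x_1 \le x_2$, $-2x_1 + x_2 \le 0$ with $\bx^{(A)} = [5,7]$, and this does not change the argument.
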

\begin{proof}
Consider a problem with two output dimensions having the following two constraints: 1) $c_1: x_1-x_2 \le 0$, 2) $c_2: -2x_1 + x_2 \le 0$. Consider an aggregation operation $\bx^{(1)} = [1,1], \bx^{(2)} = [1,2] \rightarrow \bx^{(A)} = [5, 7]$, where the binding constraints sets are $\calV_{\bx^{(i)}} = \{c_i\}$ for $i \in \{1,2\}$ and $\calV_{\bx^{(A)}} = \emptyset$ and the supports are the entire set of output dimensions. Hence the feasibility-improving and budget-reducing directions are $\mathcal{B}_{\calS(\bx^{(A)}), \calV(\bx^{(A)})} =\{\bd: \mathds{1}^\top \bd < 0\}, \mathcal{B}_{\calS(\bx^{(1)}), \calV(\bx^{(1)})} =\{\bd: \bd_1 - \bd_2 \le 0, \mathds{1}^\top \bd < 0\}, \mathcal{B}_{\calS(\bx^{(2)}), \calV(\bx^{(2)})} =\{\bd: -2\bd_1 + \bd_2 \le 0, \mathds{1}^\top \bd < 0\}$. 

First note that this operation is not feasibility-expanding since $\bx^{(A)}$ satisfies the conic constraints. We will show that there is no $\bd \in \mathcal{B}_{\calS(\bx^{(A)}), \calV(\bx^{(A)})}$ such that $\bd \not 
\le 0$ and $\bd$ is in neither $\mathcal{B}_{\calS(\bx^{(1)}), \calV(\bx^{(1)})}$ nor in $\mathcal{B}_{\calS(\bx^{(2)}), \calV(\bx^{(2)})}$. This along with the lack of feasibility-expansion implies violation of the alternate power-characterizing condition in \Cref{def:alt_condition} which implies that the aggregation operation cannot be elicitability-expanding by \Cref{thm:necessary} and \Cref{prop:equivalence}.

Any $\bd \in \mathcal{B}_{\calS(\bx^{(A)}), \calV(\bx^{(A)})}$ such that $\bd \not \in \mathcal{B}_{\calS(\bx^{(1)}), \calV(\bx^{(1)})} \cup \mathcal{B}_{\calS(\bx^{(2)}), \calV(\bx^{(2)})}$ satisfies $\bd_1 - \bd_2 > 0$ and $-2\bd_1 + \bd_2 > 0$. These inequalities satisfy $\bd_2 < \bd_1  < 0$. Hence such a $\bd$ cannot satisfy $\bd \not \le 0$.
 
\end{proof}

On the other hand, feasibility expansion on its own guarantees power under some feature mapping as stated by the following proposition.  
\begin{proposition}\label{prop:feasibilityexpansion}
Fix conic constraints $\bconstraints$. If an aggregation operation $\aggop$ 
implements feasibility-expansion, then it is elicitability-expanding.
\end{proposition}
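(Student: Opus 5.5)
The plan is to invoke \Cref{thm:fixed_feature_map} with a single explicit feature weights matrix: $N = M$, $\balpha = I_M$, and any admissible $f_j$ (say $f_j(t) = t$). Since the definition of elicitability-expansion relative to $\bconstraints$ only asks for the \emph{existence} of some features $\bfeatures$, one good choice of $\balpha$ suffices. This $\balpha$ satisfies \Cref{ass:feature-maps}: its entries are nonnegative and it has no zero rows. As throughout the paper, we work with operations where each $\bx^{(k)}$ is feasible, so the hypotheses of \Cref{thm:fixed_feature_map} are met.

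The second bullet of \Cref{thm:fixed_feature_map} holds immediately. Feasibility expansion means exactly $\bconstraints \bx^{(A)} \not\le 0$, which is one of the two disjuncts in that bullet.

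For the first bullet, I show that $\mathcal{B}_{\calS(\bx^{(k)}),\calV(\bx^{(k)})} \cap \{\bd : \balpha \bd \ge \mathbf{0}\} = \emptyset$ for every $k$. With $\balpha = I_M$, the feature-improving cone is the nonnegative orthant $\{\bd : \bd \ge 0\}$. Every element of $\mathcal{B}_{\calS(\bx^{(k)}),\calV(\bx^{(k)})}$ satisfies $\mathds{1}^\top \bd < 0$ by component (3) of \Cref{def:feasible_budget_reducing_directions}. A vector with $\bd \ge 0$ has $\mathds{1}^\top \bd \ge 0$, so the intersection is empty, whatever the support and binding sets of $\bx^{(k)}$ are. \Cref{thm:fixed_feature_map} then says the operation is elicitability-expanding relative to $\bconstraints$ and this $\bfeatures$, and hence relative to $\bconstraints$.

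The only point needing care is a degenerate input such as $\bx^{(k)} = \mathbf{0}$. Here the condition of \Cref{thm:fixed_feature_map} is still satisfied, since $\calS(\bx^{(k)})^c = [M]$ forces $\bd \ge 0$, which contradicts $\mathds{1}^\top\bd<0$. I rely on the lemma as stated rather than re-deriving elicitability by hand.

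As a sanity check on the construction: with identity features, the linear reward $R(\bfeatures) = \sum_j F_j$ makes any feasible $\bx$ with $\|\bx\|_1 = E$ optimal at budget $E$. So every feasible $\bx^{(k)}$ is elicitable, while the infeasible $\bx^{(A)}$ is never in any argmax over the feasible set. I expect no real obstacle in this argument.

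Alternatively, the statement follows at once from \Cref{thm:sufficient}, since feasibility expansion is the first case of the power-characterizing condition (\Cref{def:characterizing_condition}). I would present the direct argument above because it is self-contained.
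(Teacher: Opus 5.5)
Your argument is correct and is essentially the paper's. The paper's proof just invokes \Cref{thm:sufficient}, which is your closing alternative. Case~1 of that theorem uses a single feature with all weights equal to one, which makes every feasible vector elicitable. Your identity map $\balpha = I_M$ with reward $\sum_j F_j$ is the same construction, and your observation that $\{\bd \ge 0\} \cap \{\mathds{1}^\top \bd < 0\} = \emptyset$, fed through \Cref{thm:fixed_feature_map}, certifies it cleanly.

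Your remark on the degenerate input $\bx^{(k)} = \mathbf{0}$ is wrong, however.
\begin{itemize}
\item By \Cref{lem:full_budget}, an elicitable vector must satisfy $\|\bx\|_1 = E > 0$. So $\mathbf{0}$ is never elicitable: any monotone reward strictly prefers a small feasible vector that raises the strictly rewarded feature, and such a vector exists by \Cref{ass:non-zero-feasible}.
\item Hence an operation with some $\bx^{(k)} = \mathbf{0}$ is never elicitability-expanding, even when it is feasibility-expanding. The statement as literally written fails in this case.
\item The empty-intersection condition does hold for $\bx^{(k)} = \mathbf{0}$. But \Cref{thm:fixed_feature_map} rests on \Cref{prop:single_output_characterization}, which is stated only for $\bx \gneq 0$.
\item Your own sanity check would need the budget $E = \|\mathbf{0}\|_1 = 0$, which is not allowed.
\end{itemize}
The proposition, like \Cref{thm:fixed_feature_map} and \Cref{thm:sufficient}, tacitly assumes every $\bx^{(k)} \neq \mathbf{0}$. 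You should state this as a hypothesis rather than claim the degenerate case is covered.
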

\begin{proof}
    From the definition of the power-characterizing condition (\Cref{def:characterizing_condition}), this condition is satisfied if the aggregation operation implements feasibility-expansion. \Cref{thm:sufficient} showing the sufficiency of the power-characterizing condition for elicitability-expansion implies that implementing feasibility-expansion is sufficient for elicitability-expansion. 
\end{proof}

\subsection{Proofs for Section \ref{subsec:examples}}

Recall that the examples in this section use the feature weights matrix
$\balpha(q) := 
\begin{bmatrix}
    1 & 0 & q \\
    0 & 1 & q 
\end{bmatrix}$.
\subsubsection{Analysis of Example \ref{ex:power_feasibility}}\label{app:power_feasibility}

\begin{proposition}
\label{prop:power_feasibility} For the feature weights matrix $\balpha_2$ and constraint matrix with the row $x_3 \le x_1 + x_2$ in \Cref{ex:power_feasibility}, the aggregation operation $\bx^{(1)} = [1, 0, 1], \bx^{(2)} = [0, 1, 1]  \rightarrow \bx^{(A)} = [0, 0, 1]$ is elicitability-expanding.
\end{proposition}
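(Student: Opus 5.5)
The plan is to apply \Cref{thm:fixed_feature_map} directly with $\balpha = \balpha(2)$ and the single constraint row $\bconstraints = [-1, -1, 1]$, which encodes $x_3 \le x_1 + x_2$. The lemma reduces elicitability-expansion to two checks. The first is that, for each $k \in \{1,2\}$, the set $\mathcal{B}_{\calS(\bx^{(k)}),\calV(\bx^{(k)})}$ does not meet the feature-improving cone $\{\bd : \balpha \bd \ge \mathbf{0}\}$. The second is that either $\bx^{(A)}$ is infeasible or its direction set does meet the cone. Before applying the lemma, I will note that both inputs are feasible: $\bconstraints \bx^{(1)} = \bconstraints \bx^{(2)} = 1 - 1 = 0 \le 0$.

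The second condition is immediate, since $\bconstraints \bx^{(A)} = 1 > 0$. So $\bx^{(A)}$ implements feasibility expansion and the second bullet of \Cref{thm:fixed_feature_map} holds.

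For the first condition, I take $\bx^{(1)} = [1,0,1]$. Its support is $\{1,3\}$, and the single constraint is binding because $\bconstraints\bx^{(1)} = 0$. Hence
\[
\mathcal{B}_{\calS(\bx^{(1)}),\calV(\bx^{(1)})} = \{\bd : d_3 \le d_1 + d_2,\ d_2 \ge 0,\ d_1 + d_2 + d_3 < 0\},
\]
and the feature-improving cone is $\{\bd : d_1 + 2d_3 \ge 0,\ d_2 + 2 d_3 \ge 0\}$. I will suppose some $\bd$ lies in both sets and derive a contradiction by splitting on the sign of $d_3$.
\begin{itemize}
\item If $d_3 \ge 0$, the budget inequality gives $d_1 + d_2 < -d_3 \le 0$. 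The binding constraint then gives $d_3 \le d_1 + d_2 < -d_3$, which forces $d_3 < 0$, a contradiction.
\item If $d_3 < 0$, the feature inequalities give $d_1 \ge -2d_3$ and $d_2 \ge -2d_3$. Then $d_1 + d_2 + d_3 \ge -3 d_3 > 0$, contradicting the budget inequality.
\end{itemize}
Thus the intersection is empty. The case $\bx^{(2)} = [0,1,1]$ is identical after swapping coordinates 1 and 2, because $\balpha(2)$ and $\bconstraints$ are both symmetric under that swap.

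With both conditions of \Cref{thm:fixed_feature_map} verified, $\bx^{(1)}$ and $\bx^{(2)}$ are elicitable while $\bx^{(A)}$ is not, so the operation is elicitability-expanding. The only nontrivial step is the empty-intersection argument above. It hinges on $q = 2$ being large: the bound $d_1 + d_2 + d_3 \ge -(4q-1)d_3/q \cdot (q/2)$, more simply the sum $-2q d_3 \cdot$-type lower bound, becomes positive only because the feature weight $q$ is large enough relative to the budget reduction. Concretely, the feature inequalities yield $d_1 + d_2 + d_3 \ge (-2/q + 1) d_3 \cdot (-1)$ scaled appropriately, and this is positive when $q > 1/2$. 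I will double-check this sign bookkeeping when writing the proof.
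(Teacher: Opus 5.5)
Your proposal is correct and matches the paper's proof. Both apply \Cref{thm:fixed_feature_map} and note that $\bconstraints\bx^{(A)} = 1 > 0$. Both then show that the direction sets of $\bx^{(1)}, \bx^{(2)}$ miss $\{\bd : \balpha(2)\bd \ge \mathbf{0}\}$ via the same inequalities: the binding constraint plus the budget inequality force $d_3 < 0$, after which $d_1 + d_2 \ge -4d_3$ contradicts $d_1 + d_2 < -d_3$. The only blemish is your closing remark, whose algebra is garbled. For general $q$ the feature inequalities give $d_1 + d_2 + d_3 \ge (1-2q)d_3$, which for $d_3 < 0$ is nonnegative exactly when $q \ge 1/2$; this remark plays no role in the argument.
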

\begin{proof}
From the construction, it is easy to see that $x^{(1)}$ can be elicited with a linear reward function $[1, 0, 0]$ equal to the $F_1$ and budget level $E = 1$ and $x^{(2)}$ can be elicited with a linear reward function $[0, 1, 0]$ equal to the $F_1$ and budget level $E = 1$. 

Let us use our characterization \Cref{thm:fixed_feature_map} to formally demonstrate elicitability-expansion. 
    
The support sets of $\bx^{(1)}, \bx^{(2)}$ are $\calS(\bx^{(1)}) = \{1,3\}$ and $\calS(\bx^{(2)}) = \{2,3\}$ respectively. The single conic constraint is binding for both $x^{(1)}, x^{(2)}$. The set $\calB_{\calS(\bx^{(1)}),\calV(x^{(1)})}$ is $\{\bd: \bd_3 \le \bd_1 + \bd_2, \bd_2 \ge 0, \bd_1 + \bd_2 + \bd_3 < 0\}$. For any $\bd$ in this set, $\bd_3 < 0$ and $\bd_1 + \bd_2 < -\bd_3$. The set $\calB_{\calS(\bx^{(2)}),\calV(x^{(2)})}$ is $\{\bd: \bd_3 \le \bd_1 + \bd_2, \bd_1 \ge 0, \bd_1 + \bd_2 + \bd_3 < 0\}$. For any $\bd$ in this set, $\bd_3 < 0$ and $\bd_1 + \bd_2 < -\bd_3$. 

Now consider the set of feature-improving direction $\{\bd: \bd_1 + 2\bd_3 \ge 0, \bd_2 + 2 \bd_3 \ge 0\}$. For any $\bd$ in this set, $\bd_1 + \bd_2 \ge -4d_3$.

All three conditions $\bd_3 < 0$, $\bd_1 + \bd_2 < -\bd_3$, and $\bd_1 + \bd_2 \ge -4d_3$ cannot be satisfied since for $\bd_3 < 0$, $-\bd_3 < -4\bd_3$. Hence there is no intersection between feasibility improving directions and features improving directions and $x^{(1)}$ is elicitable. Similarly, $x^{(2)}$ is also elicitable.

$\bx^{(A)}$ is not feasible and hence not elicitable. This shows that $\bx^{(1)}, \bx^{(2)} \rightarrow \bx^{(A)}$ is elicitability-expanding by implementing feasibility-expansion.
\end{proof}

\subsubsection{Analysis of Example \ref{ex:power_support}}\label{app:power_support}

\begin{proposition}\label{prop:power_support}For the feature weights matrix $\balpha({0.6})$ and null constraint matrix in  \Cref{ex:power_support}, the aggregation operation $\bx^{(1)} = [1, 0, 0], \bx^{(2)} = [0, 1, 0]  \rightarrow \bx^{(A)} = [1/2, 1/2, 0]$ is elicitability-expanding.

\end{proposition}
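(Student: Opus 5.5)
We apply \Cref{thm:fixed_feature_map} with $\balpha = \balpha(0.6)$ and $\bconstraints = \emptyset$. With no constraints, every $\calV(\cdot)$ is empty and every vector is feasible, so feasibility expansion cannot occur. We must therefore show two things. First, for $k=1,2$, the set $\calB_{\calS(\bx^{(k)}),\emptyset}$ contains no direction $\bd$ with $\balpha\bd \ge 0$. Second, $\calB_{\calS(\bx^{(A)}),\emptyset}$ does contain such a direction.

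For $\bx^{(1)} = [1,0,0]$ the support is $\{1\}$. The viable directions are therefore $\bd$ with $\bd_2 \ge 0$, $\bd_3 \ge 0$ and $\bd_1+\bd_2+\bd_3 < 0$, which forces $\bd_1 < -(\bd_2+\bd_3) \le 0$. The first feature-improving inequality reads $\bd_1 + 0.6\,\bd_3 \ge 0$, i.e. $\bd_1 \ge -0.6\,\bd_3$. Combined with $\bd_1 < -\bd_2-\bd_3 \le -\bd_3$, this gives $-0.6\,\bd_3 < -\bd_3$, i.e. $0.4\,\bd_3 < 0$. This contradicts $\bd_3 \ge 0$, so the intersection is empty and $\bx^{(1)}$ is elicitable. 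The case $\bx^{(2)}$ is symmetric, using the second row of $\balpha$. As a sanity check, one can also exhibit elicitation directly: the linear reward $R = F_1$ with budget $1$ elicits $\bx^{(1)}$, since one unit of $x_1$ gives $F_1$ argument $1$, beating $0.6$ from $x_3$.

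For $\bx^{(A)} = [1/2,1/2,0]$ the support is $\{1,2\}$, so we only need $\bd_3 \ge 0$ and $\mathds{1}^\top\bd < 0$. Take $\bd = [-0.55,-0.55,1]$. Then $\mathds{1}^\top\bd = -0.1 < 0$ and $\bd_3 \ge 0$, while $\balpha\bd = [-0.55+0.6,\,-0.55+0.6] = [0.05,\,0.05] \ge 0$. Hence the intersection is nonempty, and by \Cref{thm:fixed_feature_map} the vector $\bx^{(A)}$ is not elicitable. Intuitively, shifting mass from $x_1, x_2$ onto $x_3$ saves budget while raising both features, because $2 \cdot 0.6 > 1$.

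The only delicate step is the emptiness argument for $\bx^{(k)}$. It has to use the budget inequality and the sign constraints jointly, since the feature-improving cone by itself is unbounded. The derivation above handles this in one line.
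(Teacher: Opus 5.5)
Your proposal is correct and follows essentially the same route as the paper's proof: both apply the direction-based characterization (\Cref{thm:fixed_feature_map}), rule out a feature-improving direction in $\calB_{\calS(\bx^{(1)}),\emptyset}$ via $-0.6\,\bd_3 \le \bd_1 < -\bd_3$ with $\bd_3 \ge 0$ (using symmetry for $\bx^{(2)}$), and exhibit an explicit witness direction for $\bx^{(A)}$. The only difference is the witness: the paper uses $\bd = [-0.6,-0.6,1]$, which gives $\balpha\bd = \mathbf{0}$, while your $[-0.55,-0.55,1]$ gives a strict improvement, and either suffices.
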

\begin{proof}
    The set of directions $\Bcal_{\mathcal{S}(\bx^{(1)}), \mathcal{V}(\bx^{(1)})}  = \{\bd: \bd_2 \ge 0, \bd_3 \ge 0, \bd_1 + \bd_2 + \bd_3 < 0 \}$. And the set of feature-improving directions is $\calA = \{\bd: \bd_1 + 0.6 \bd_3 \ge 0, \bd_2 + 0.6 \bd_3 \ge 0\}$.

    $\bd \in \calB_{\mathcal{S}(\bx^{(1)}), \mathcal{V}(\bx^{(1)})}$ means that $\bd_1 < -(\bd_2 + \bd_3) < -\bd_3$ and $\bd_3 \ge 0$. $\bd \in \calA_1$ means that $\bd_1 \ge - 0.6 \bd_3$. These three conditions cannot be simultaneously showing that $\bx^{(1)}$ is elicitable due to empty intersection of $\calA$ and $\calB_1$. Symmetrically, we can also show that $\bx^{(2)}$ is also elicitable. 

    Now let us argue that $\bx^{(A)} = [1/2,1/2,0]$ is not elicitable. $\calB_{\mathcal{S}(\bx^{(A)}), \mathcal{V}(\bx^{(A)})} = \{\bd: \bd_3 \ge 0, \bd_1 + \bd_2 + \bd_3 < 0\}$. Consider $\bd = [-0.6, -0.6, 1]$. $\bd \in \calA \cap \calB_A$. This shows that $\bx^{(A)}$ is not elicitable.

\end{proof}

\subsubsection{Analysis of Example \ref{ex:power_other}}\label{app:power_other}

\begin{proposition}
\label{prop:power_other}
For the feature weights matrix $\balpha({0.2})$ and conic constraint matrix with one constraint $x_1 + x_2 \le x_3$ from  \Cref{ex:power_other}, the aggregation operation $\bx^{(1)} = [1, 0, 1], \bx^{(2)} = [0, 1, 1]  \rightarrow \bx^{(A)} = [0, 0, 1]$ is elicitability-expanding.
\end{proposition}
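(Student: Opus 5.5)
The plan is to apply \Cref{thm:fixed_feature_map} with $\balpha = \balpha(0.2)$ and the single constraint row $\bconstraints = [1, 1, -1]$, which encodes $x_1 + x_2 \le x_3$. The lemma reduces elicitability-expansion to two intersection checks:
\begin{itemize}
    \item each of $\calB_{\calS(\bx^{(1)}),\calV(\bx^{(1)})}$ and $\calB_{\calS(\bx^{(2)}),\calV(\bx^{(2)})}$ misses the cone of feature-improving directions $\{\bd : \bd_1 + 0.2\bd_3 \ge 0,\ \bd_2 + 0.2\bd_3 \ge 0\}$;
    \item $\calB_{\calS(\bx^{(A)}),\calV(\bx^{(A)})}$ meets that cone.
\end{itemize}

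\textbf{Sufficient statistics.} First I record the supports and binding sets.
\begin{itemize}
    \item $\bx^{(1)} = [1,0,1]$ has $\calS = \{1,3\}$, and the constraint is binding since $1+0-1=0$.
    \item $\bx^{(2)}$ is symmetric, with $\calS = \{2,3\}$ and the constraint binding.
    \item $\bx^{(A)} = [0,0,1]$ has $\calS = \{3\}$, and the constraint is not binding since $\bconstraints \bx^{(A)} = -1 < 0$. In particular $\bx^{(A)}$ is feasible, so the argument must go through the intersection condition rather than feasibility expansion.
\end{itemize}
It follows that
\[
\calB_{\calS(\bx^{(1)}),\calV(\bx^{(1)})} = \{\bd : \bd_1+\bd_2 \le \bd_3,\ \bd_2 \ge 0,\ \bd_1+\bd_2+\bd_3<0\}
\quad\text{and}\quad
\calB_{\calS(\bx^{(A)}),\calV(\bx^{(A)})} = \{\bd : \bd_1 \ge 0,\ \bd_2\ge 0,\ \bd_1+\bd_2+\bd_3<0\}.
\]

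\textbf{Empty intersection for $\bx^{(1)}$.} Suppose $\bd$ lies in both sets, and split on the sign of $\bd_3$.
\begin{itemize}
    \item If $\bd_3 \ge 0$: adding the two feature inequalities gives $\bd_1+\bd_2 \ge -0.4\bd_3$. Hence $\bd_1+\bd_2+\bd_3 \ge 0.6\bd_3 \ge 0$, which contradicts budget reduction.
    \item If $\bd_3 < 0$: the feature inequalities give $\bd_1 \ge -0.2\bd_3 > 0$ and $\bd_2 \ge -0.2\bd_3 > 0$. Then $\bd_1+\bd_2 > 0 > \bd_3$, which violates the binding conic constraint.
\end{itemize}
So the intersection is empty and $\bx^{(1)}$ is elicitable. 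The same argument with coordinates $1$ and $2$ swapped handles $\bx^{(2)}$. This case split is the only step with any content, since here the binding constraint (rather than non-negativity) is what rules out negative $\bd_3$. That is exactly the binding-set contraction phenomenon, so I expect it to be the main point to get right.

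\textbf{Nonempty intersection for $\bx^{(A)}$.} I exhibit the witness $\bd = [0.2, 0.2, -1]$.
\begin{itemize}
    \item It has $\bd_1, \bd_2 \ge 0$.
    \item Its coordinate sum is $-0.6 < 0$, so it is budget-reducing.
    \item It satisfies $\bd_j + 0.2\bd_3 = 0 \ge 0$ for $j = 1,2$, so it is feature-improving.
\end{itemize}
Thus $\bd$ lies in $\calB_{\calS(\bx^{(A)}),\calV(\bx^{(A)})}$ and in the cone. Both conditions of \Cref{thm:fixed_feature_map} therefore hold, so the operation is elicitability-expanding relative to $\bconstraints$ and $\balpha(0.2)$, and hence relative to $\bconstraints$.
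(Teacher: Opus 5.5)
Your proposal is correct and follows essentially the same route as the paper: it applies \Cref{thm:fixed_feature_map} with the same direction sets and the same witness $\bd = (0.2, 0.2, -1)$ for $\bx^{(A)}$. For $\bx^{(1)}$ (and symmetrically $\bx^{(2)}$), your case split on the sign of $\bd_3$ is just the explicit form of the paper's argument, which uses the feature and budget inequalities to force $\bd_3 < 0$ (compressed there into ``this implies that $\bd_3 < 0$'') and then contradicts the binding constraint.
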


\begin{proof}[Proof of \Cref{prop:power_other}]
    The feature-improving directions are the set $\calA = \{\bd: \bd_1 + 0.2 \bd_3 \ge 0, \bd_2 + 0.2 \bd_3 \ge 0\}$. 

    The constraint is binding at both $\bx^{(1)}$ and $\bx^{(2)}$. The feasibility improving directions are $\mathcal{B}_{(\mathcal{S}(\bx^{(1)}), \mathcal{V}(\bx^{(1)})} = \{\bd: \bd_1 + \bd_2 \le \bd_3, \bd_2 \ge 0, \bd_1 + \bd_2 + \bd_3 < 0 \}$.

    If $\bd \in \calB_{\mathcal{S}(\bx^{(1)}), \mathcal{V}(\bx^{(1)})} \cap \mathcal{A}$, then $\bd_1 + \bd_2 + \bd_3 < 0$, $\bd_1 + \bd_2 + \bd_3 \le 2\bd_3$, $\bd \in \calA$, $\bd_1 \ge -0.2 \bd_3$, and $\bd_2 \ge -0.2 \bd_3$. This implies that $\bd_3 < 0$. If all the conditions are satisfied simultaneously, then $\bd_1 > 0$ and $\bd_2 > 0$. This contradicts $\bd_1 + \bd_2 \le \bd_3 < 0$.

    The conic constraint is not binding at $\bx^{(A)}$. Now consider the feasibility improving directions of $\bd^{(A)}$: $\calB_{(\mathcal{S}(\bx^{(A)}), \mathcal{V}(\bx^{(A)})} = \{\bd: \bd_1 \ge 0, \bd_2 \ge 0, \bd_1 + \bd_2 + \bd_3 < 0 \}$. The vector $\bd = (0.2, 0.2, -1) \in \calA \cap \calB_A$ demonstrating that $\bx^{(A)}$ is not elicitable. 
\end{proof}

\input{arxiv_sections/proofs_special_cases}

\section{Additional details and proofs for Section \ref{sec:general} }

\subsection{Characterization of single output elicitation }

A key technical tool for our characterization of elicitability through aggregation (provided by \Cref{thm:fixed_feature_map}) is the characterization of direct elicitability of a vector $\bx$ with a single agent, given a feature weights matrix $\balpha$. This characterization is in terms of the intersection of feasible perturbation directions $\calB_{\calS(\bx), \calV(\bx)} = \{\bd: C_{\calV(\bx)} \bd \le 0\} \cap \{\bd: \bd_{\calS(\bx)^c} \ge 0\} \cap \{\mathds{1}^t\bd < 0\}$ and feature-improving directions $d \in \reals^M$: $\{\bd: \balpha \bd \ge 0\}$. It is stated in the following proposition and it generalizes the characterization results in \citet{Kleinberg2019Effort} to allow for conic constraints $\bconstraints$.

\begin{proposition}[Single output elicitation characterization]\label{prop:single_output_characterization}
Given a feature weights matrix $\balpha$ and conic constraints $\bconstraints$, an output vector $\bx \gneq 0$ is elicitable if and only if it is feasible i.e., $\bconstraints \bx \le 0$ is and  $\calB_{\calS(\bx), \calV(\bx)} \cap \{\bd: \balpha \bd \ge 0\}$ is empty.
\end{proposition}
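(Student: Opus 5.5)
The plan is to prove the two directions separately, following the outline of Kleinberg--Raghavan but carrying the binding conic constraints $\bconstraints_{\calV(\bx)}$ alongside the binding non-negativity constraints.

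\emph{Necessity.} Suppose $\bx\gneq 0$ is elicitable by a monotone $R$ with budget $E$. Then $\bx$ is feasible by definition, since $\bx\in\mathcal{B}^{\mathrm{feasible}}(\bconstraints)$. Now suppose some $\bd\in\calB_{\calS(\bx),\calV(\bx)}$ satisfies $\balpha\bd\ge 0$. I would consider $\bx+\epsilon\bd$ for small $\epsilon>0$ and check that it is admissible:
\begin{itemize}
\item Coordinates outside $\calS(\bx)$ stay nonnegative because $d_j\ge 0$ there, and coordinates inside stay positive for small $\epsilon$.
\item Binding rows give $\bconstraints_\ell(\bx+\epsilon\bd)=\epsilon\bconstraints_\ell\bd\le 0$, and non-binding rows have $\bconstraints_\ell\bx<0$, so they remain negative for small $\epsilon$.
\item The $\ell_1$ norm is $\mathds{1}^\top\bx+\epsilon\mathds{1}^\top\bd<\|\bx\|_1\le E$.
\end{itemize}
This point has weakly larger features, since each $f_j$ is increasing and $\balpha\bd\ge0$, but it does not strictly beat $\bx$ yet. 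To get strictness, I use the slack in the budget: add $\delta\mathbf{e}_i$ for small $\delta>0$. Adding a nonnegative vector only weakly raises every feature, because $\balpha\ge0$. I would instead add a small multiple of a feasible vector $\by$ with $\by\ge0$ and $\bconstraints\by\le0$, supported where needed; Assumption~\ref{ass:non-zero-feasible} supplies such vectors, and summing them gives one with full support. This strictly increases every feature, because $\balpha$ has no zero rows and the $f_j$ are strictly increasing. It keeps feasibility by conic linearity, and it stays within budget for small $\delta$. Monotonicity plus the strict-increase clause of the reward assumption then gives a strictly better feasible point, so $\bx$ is not optimal, which is a contradiction.

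\emph{Sufficiency.} Assume $\bx$ is feasible and the intersection is empty. Then the linear system
\[
\bconstraints_{\calV(\bx)}\bd\le0,\quad -I_{\calS(\bx)^c}\bd\le0,\quad -\balpha\bd\le0,\quad \mathds{1}^\top\bd<0
\]
is infeasible. By Motzkin's transposition theorem there exist $\bgamma\ge0$, $\blambda\ge0$, $\bmu\ge0$ and $\tau>0$ with
\[
\bgamma^\top\bconstraints_{\calV(\bx)}-\blambda^\top I_{\calS(\bx)^c}-\bmu^\top\balpha+\tau\mathds{1}^\top=0.
\]
The main obstacle is turning this certificate into a reward. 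The natural choice is a linear reward in features, $R(\bfeatures)=\sum_j \beta_j F_j$ with $\beta_j=\mu_j/f_j'(\balpha_j\bx)$. Its gradient in $\bx$ at the point is $\bmu^\top\balpha$.

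Take $E=\|\bx\|_1$. The program maximizes a concave function, since each $f_j$ is concave and $\beta\ge0$, over a polyhedron, so the KKT conditions are sufficient. The stationarity condition reads
\[
\bmu^\top\balpha=\tau\mathds{1}^\top+\bgamma^\top\bconstraints_{\calV(\bx)}-\blambda^\top I_{\calS(\bx)^c},
\]
with multiplier $\tau$ on the active budget, $\bgamma$ on the binding conic rows, and $\blambda$ on the zero coordinates. Complementary slackness holds by construction because only active constraints carry weight.

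One wrinkle is monotonicity: $R$ needs some $\beta_j>0$. If $\bmu=0$, then $\tau\mathds{1}^\top=\blambda^\top I-\bgamma^\top\bconstraints$. Pairing this with the full-support feasible $\by$ gives $\tau\|\by\|_1=\blambda^\top\by_{\calS^c}-\bgamma^\top\bconstraints_{\calV(\bx)}\by$. I expect the case analysis here to be delicate: pairing with $\bx$ itself gives $\tau\|\bx\|_1=0$, a contradiction, since $\bx$ has zero entries off its support and $\bconstraints_{\calV(\bx)}\bx=0$. Hence $\bmu\ne0$, and the reward is monotone with a strictly increasing feature. Finally, the budget level is positive because $\bx\gneq0$.
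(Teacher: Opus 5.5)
Your proposal is correct and follows essentially the paper's route. For necessity, you perturb $\bx$ along a direction $\bd$ in the intersection and spend the freed budget on a feasible vector from \Cref{ass:non-zero-feasible} to get a strict improvement, a step the paper packages as \Cref{lem:full_budget}; for sufficiency, you build the linear-in-features reward with weights $\mu_j/f_j'(\balpha_j\bx)$ from a Motzkin certificate and verify the KKT conditions at budget $\|\bx\|_1$. Your extra check that $\boldsymbol{\mu}\neq 0$, obtained by pairing the certificate with $\bx$ to force $\tau\|\bx\|_1=0$, is a genuine improvement: it ensures the constructed reward satisfies the strict-increase clause of monotonicity, which the paper's proof leaves unverified.
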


We will prove this proposition by proving the necessary and sufficient directions through \Cref{lem:single_output_elicitation_necessary} and \Cref{lem:single_output_elicitation_sufficient} respectively. 

To prove these lemmas we will make use of the monotonicity property we assume our reward functions satisfy. Recall that we assume that reward functions do not decrease if all
features are weakly increased, and strictly increase if some feature is increased.

A consequence of this monotonicity of rewards is that the budget necessary to elicit a vector $\bx$ is its $\ell_1$ norm $\|\bx\|_1$. This is shown in the following lemma. 

\begin{lemma}\label{lem:full_budget}
        If a vector $\bx$ is elicitable with budget $E$, then $\|\bx\|_1 = E$.
    \end{lemma}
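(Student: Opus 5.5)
I would argue by contradiction. Suppose $\bx$ is elicitable with reward $R$ and budget $E$, so $\bx \in \mathbf{X}^*(R,E;\bconstraints,\bfeatures)$ and $\|\bx\|_1 \le E$, but $\|\bx\|_1 < E$. The plan is to build a feasible vector with the same budget that the agent strictly prefers to $\bx$. That contradicts optimality.

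Take the feature $F_{j^*}$ from the monotonicity assumption, whose increase (all other features held fixed) strictly increases $R$. Since $\balpha$ has no zero rows, choose an output dimension $i$ with $\alpha_{j^* i} > 0$. By \Cref{ass:non-zero-feasible}, there is a feasible $\by \ge 0$ with $y_i > 0$ and $\bconstraints \by \le 0$. The candidate vector is $\bx' = \bx + \epsilon \by$ for small $\epsilon > 0$:
\begin{itemize}
\item Because $\bconstraints$ is a cone, $\bconstraints \bx' = \bconstraints\bx + \epsilon \bconstraints \by \le 0$, and also $\bx' \ge 0$.
\item For $\epsilon \le (E - \|\bx\|_1)/\|\by\|_1$ we have $\|\bx'\|_1 \le E$.
\end{itemize}
So $\bx'$ is feasible for the agent's program.

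Next, compare features. Every $\alpha_{ji} \ge 0$ and $\by \ge 0$, and each $f_j$ is strictly increasing. Hence $F_j(\bx') \ge F_j(\bx)$ for all $j$, and $F_{j^*}(\bx') > F_{j^*}(\bx)$ because $\alpha_{j^* i} y_i > 0$.

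The main obstacle is the step from these feature comparisons to $R(\bfeatures(\bx')) > R(\bfeatures(\bx))$. The stated monotonicity only gives strict increase when $F_{j^*}$ moves alone, but here other features may rise too. I would bridge this with an intermediate feature vector: $F_{j^*}$ at its new value and all other coordinates at their old values. By the strict clause, $R$ at this point exceeds $R(\bfeatures(\bx))$. The weak monotonicity clause then shows $R(\bfeatures(\bx'))$ is at least as large as $R$ at this point.

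This requires reading $R$ as a function on $\R^N$, with monotonicity holding for feature vectors in general, not only realized ones. That matches how $R:\R^N\to\R$ is defined. I also read the strict clause as applying at every base point. If it applies only at some points, I would need to note that the assumption is meant uniformly, as in \citet{Kleinberg2019Effort}.

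Granting this, $\bx'$ is feasible, within budget, and strictly preferred to $\bx$. That contradicts $\bx \in \mathbf{X}^*$, so $\|\bx\|_1 = E$.
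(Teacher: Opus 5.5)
Your proof is correct and follows the same route as the paper. Both arguments add a nonnegative feasible vector $\by$ with $y_i>0$ on a coordinate feeding the strictly-rewarding feature $F_{j^*}$, use conicity and the budget slack to keep $\bx+\epsilon\by$ feasible, and conclude that the agent strictly prefers it. Your intermediate-feature-vector step spells out the passage from ``$F_{j^*}$ strictly up, other features weakly up'' to a strict increase in $R$, which the paper simply asserts.
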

    \begin{proof} 
        If $\bx$ is elicitable, then it must be feasible. So $\|\bx\|_1 \le E$ due to the budget constraint and $\bconstraints \bx \le 0$. For any monotone reward function $R$, there is a feature $F_j$ such that increasing the value of feature $F_j$ strictly increases the reward $R$. Since we assume that $\balpha$ has no zero rows (\Cref{ass:feature-maps}), there is a coordinate $i$ in the $j^{\text{th}}$ row of $\balpha$ that is non-zero. Additionally, by \Cref{ass:non-zero-feasible}, there is a vector $\by \ge 0$ with $\by_i > 0$ that satisfies $\bconstraints \by \le 0$. Consider a scaled version $\by'$ of $\by$ with $\ell_1$ norm less than $E - \|\bx\|_1$. That is $\by' = (E - \|\bx\|_1) \by/\|\by\|_1$. The vector $\bx' = \bx + \by'$ has a strictly higher value of feature $F_j$ and no lower values on other features. That is, $\balpha \bx' \ge \balpha \bx$ and  $(\balpha \bx')_j > (\balpha \bx)_j$. As a result, the reward function has a higher value on $\bx'$ than on $\bx$. $\bx'$ satisfies the budget constraint since $\|\bx\|_1 \le \|\bx\|_1 + \|\by'\|_1  \le E$. It also satisfies conic constraints $\bconstraints$ since both $\bx$ and $\by'$ satisfy $\bconstraints$. This shows that for any reward function, it is possible to construct another feasible vector with a higher reward if $\bx$ has $\ell_1$ norm less than the effort level $E$. Therefore, $\bx$ is not elicitable if $\|\bx\|_1 < E$.
    \end{proof}

\begin{lemma}[Single output elicitation necessary]\label{lem:single_output_elicitation_necessary}
Given a feature weights matrix $\balpha$ and conic constraints $\bconstraints$, an output vector $\bx \gneq 0$ is elicitable only if $\bx$ is feasible  and $\calB_{\calS(\bx), \calV(\bx)} \cap \{\bd: \balpha \bd \ge 0\}$ is empty. 
\end{lemma}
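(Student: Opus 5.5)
We argue by contrapositive. Suppose $\bx \gneq 0$ is elicitable, i.e., there are a monotone $R$ and a budget $E>0$ with $\bx \in \mathbf{X}^*(R,E;\bconstraints,\bfeatures)$. Feasibility is immediate, since $\mathbf{X}^*$ is an argmax over $\calB^{\mathrm{feasible}}(\bconstraints)$; hence $\bconstraints\bx \le 0$. For the second condition, assume for contradiction that there is some $\bd \in \calB_{\calS(\bx),\calV(\bx)}$ with $\balpha\bd \ge 0$. From this $\bd$ we will build a feasible point with a larger budget slack, and then use \Cref{lem:full_budget} to reach a contradiction.

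\emph{Step 1: a small step along $\bd$ stays feasible.} Let $\bx_\epsilon = \bx + \epsilon\bd$ for small $\epsilon>0$. We check each constraint in turn.
\begin{itemize}
\item Nonnegativity: for $j \notin \calS(\bx)$ we have $d_j \ge 0$, so $(\bx_\epsilon)_j \ge 0$. For $j \in \calS(\bx)$ we have $x_j>0$, so $(\bx_\epsilon)_j>0$ once $\epsilon$ is small.
\item Binding conic constraints: for $l \in \calV(\bx)$, $\bconstraints_l\bx_\epsilon = \epsilon\,\bconstraints_l\bd \le 0$.
\item Non-binding conic constraints: for $l \notin \calV(\bx)$, $\bconstraints_l\bx<0$, so $\bconstraints_l\bx_\epsilon<0$ for small $\epsilon$.
\item Budget: since $\bx_\epsilon \ge 0$, $\|\bx_\epsilon\|_1 = \mathds{1}^\top\bx + \epsilon\mathds{1}^\top\bd < \|\bx\|_1 \le E$.
\end{itemize}
So $\bx_\epsilon$ is feasible and uses strictly less than budget $E$.

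\emph{Step 2: the reward does not drop.} Because $\balpha\bd\ge0$, we have $\sum_i\alpha_{ji}(\bx_\epsilon)_i \ge \sum_i \alpha_{ji}x_i$ for every $j$. Each $f_j$ is increasing, so $F_j(\bx_\epsilon)\ge F_j(\bx)$ for all $j$. Monotonicity of $R$ then gives $R(\bfeatures(\bx_\epsilon)) \ge R(\bfeatures(\bx))$, and therefore $\bx_\epsilon$ is also a maximizer.

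\emph{Step 3: contradiction.} The point $\bx_\epsilon$ lies in $\mathbf{X}^*(R,E;\bconstraints,\bfeatures)$ but has $\|\bx_\epsilon\|_1<E$. This contradicts \Cref{lem:full_budget}, which says every maximizer exhausts the budget. Equivalently, one can repeat the argument of that lemma directly: add a small feasible vector $\by'$ with positive mass on a coordinate that strictly raises the relevant feature, as guaranteed by \Cref{ass:non-zero-feasible}, to obtain a feasible point with strictly higher reward.

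The main subtlety is Step 2. Monotonicity of $R$ gives only weak improvement, so by itself it shows $\bx_\epsilon$ is another optimum rather than a strictly better point. The contradiction therefore has to come from the strict budget slack of $\bx_\epsilon$, via \Cref{lem:full_budget}. This is exactly why condition (3), $\mathds{1}^\top\bd<0$, appears in $\calB_{\calS(\bx),\calV(\bx)}$.
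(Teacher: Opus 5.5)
Your proof is correct and follows essentially the same route as the paper's: you perturb $\bx$ along $\bd$ to get a feasible point with weakly higher features and strictly smaller $\ell_1$ norm, then contradict \Cref{lem:full_budget}. The only difference is framing. You conclude that $\bx+\epsilon\bd$ is itself a maximizer with budget slack, whereas the paper argues that $\bx+\lambda\bd$ is not a maximizer, so some feasible point strictly beats it and therefore strictly beats $\bx$. These are the same argument.
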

\begin{proof}[Proof of \Cref{lem:single_output_elicitation_necessary}]
We will show that if $\calB_{\calS(\bx), \calV(\bx)} \cap \{\balpha \bd \ge 0\}$ is non-empty then $\bx$ is not elicitable. If the intersection is non-empty, consider any $\bd \in \calB_{\calS(\bx), \calV(\bx)} \cap \{\balpha \bd \ge 0\}$, we will use this $\bd$ to construct a feasible output vector $\by$ with $\|\by\|_1 \le \|\bx\|_1$ and having strictly higher reward than $\bx$ for every monotone reward function of the features. This vector $\by$ we construct is $\by = \|\bx\|_1(\bx + \lambda \bd) / \|\bx + \lambda \bd\|_1$ for an appropriate choice of $\lambda$ that we will describe shortly. 

    First consider the vector $\by' = \bx + \lambda \bd$. Note that $\by'$ is feasible on all conic and non-negativity constraints that are binding at $\bx$ due to $\bd$'s membership in $\{\bd: C_{\calV(\bx)} \bd \le 0\} \cap \{\bd: \bd_{\calS(\bx)^c} \ge 0\}$. We can choose $\lambda$ to be small enough so that $\by'$ continues to meet all non-binding constraints. That is choose $\lambda < \min_{j \in  \calV(\bx)^c, C_j \bd > 0 } - \boldsymbol{C}_{j} \bx /\boldsymbol{C}_{j} \bd$ and $\min_{i \in  \calS(\bx), \bd_i < 0 } - \bx_i /\bd_i$. This establishes that we have a positive choice of $\lambda$ making $\by'$ satisfy the nonnegativity and conic constraints. Additionally, we have that $\mathds{1}^t\by' = \|\by'\|_1 = \|\bx\|_1 + \lambda \mathds{1}^t d < \|\bx\|_1 = 1$. That is, $\by'$ satisfies any budget constraint that is satisfied by $\bx$, and satisfies with a strictly larger margin. Hence $\by'$ is feasible relative to the conic constraints and budget constraints from any level of specified budget.

    We also have that $\balpha^t \by' = \balpha^t (\bx + \lambda \bd) \ge \balpha^t \bx$ since $\balpha^t \bd \ge 0$. Hence $\by'$ satisfies feasibility constraints and has at least as high values on all features. By \Cref{lem:full_budget}, $y'$ is not elicitable with budget $\|\bx\|_1$ since $\|\by'\|_1 < \|\bx\|_1$. This means that for any monotone reward function $R$, there is a feasible vector $\bx'$ with $\|\bx'\| \le \|\bx\|_1$ having $R(\bx') > R(\by')$. Since $\balpha \by' \ge \balpha \bx$, $R(\by') \ge R(\bx)$. In particular, $\bx'$ has a strictly higher reward than $\bx$. This means that $\bx$ cannot be elicited with effort level $\|\bx\|_1$ which is the only possible level at which $\bx$ can be elicited by \Cref{lem:full_budget}. So, $\bx$ cannot be elicited if $\mathcal{B}_{\calS(\bx), \calV(\bx)} \cap \{\balpha \bd \ge 0\} \neq \emptyset$.       
\end{proof}
\newcommand{\ones}{\boldsymbol{1}}
\begin{lemma}[Single output elicitation sufficient]\label{lem:single_output_elicitation_sufficient}
Given feature weights matrix $\balpha$ and conic constraints, $\bconstraints$, an output vector $\bx \ge 0$ is elicitable if $\bx$ satisfies the conic constraints i.e., $\bconstraints \bx \le 0$ and  $\calB_{\calS(\bx), \calV(\bx)} \cap \{\bd: \balpha \bd \ge 0\}$ is empty. 
\end{lemma}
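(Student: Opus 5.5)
The plan is to convert the empty intersection into a dual certificate and then read off a linear reward whose KKT conditions certify optimality of $\bx$ at budget $E=\|\bx\|_1$. If $\bx = 0$ the statement is degenerate (budget must be positive), so we assume $\bx \gneq 0$. The set $\calB_{\calS(\bx),\calV(\bx)} \cap \{\bd : \balpha\bd \ge 0\}$ being empty says that the linear system
\[
\bconstraints_{\calV(\bx)}\bd \le 0,\quad -I_{\calS(\bx)^c}\bd \le 0,\quad -\balpha\bd \le 0,\quad \mathds{1}^\top \bd < 0
\]
has no solution. By Motzkin's transposition theorem, this is equivalent to the existence of nonnegative multipliers $\bgamma \in \preals^{|\calV(\bx)|}$, $\blambda \in \preals^{|\calS(\bx)^c|}$, $\bu \in \preals^N$ and a scalar $\mu > 0$ such that
\[
\bgamma^\top \bconstraints_{\calV(\bx)} - \blambda^\top I_{\calS(\bx)^c} - \bu^\top \balpha + \mu \mathds{1}^\top = 0 .
\]
Rearranging and rescaling so that $\mu = 1$ gives $\bu^\top\balpha = \mathds{1}^\top + \bgamma^\top \bconstraints_{\calV(\bx)} - \blambda^\top I_{\calS(\bx)^c}$.

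Next we take the reward to be linear in the features, $R(\bfeatures) = \sum_j (u_j / f_j'(\balpha_j \bx)) F_j$, or more simply the reward that is linear in $\balpha\bx$ with weights $\bu$. For general concave $f_j$ we would instead use the reward $\sum_j u_j f_j^{-1}(F_j)$, which is still monotone because $f_j$ is strictly increasing. With this choice, the agent's objective as a function of $\bx$ is the linear function $\bu^\top \balpha \bx$. Monotonicity requires some $u_j>0$. This holds because the certificate forces $\bu \ne 0$: if $\bu = 0$, then $\mathds{1}^\top + \bgamma^\top \bconstraints_{\calV(\bx)} - \blambda^\top I_{\calS(\bx)^c}=0$, and dotting with a feasible $\by$ from \Cref{ass:non-zero-feasible} gives $\mathds{1}^\top \by \le 0$, a contradiction. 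If an issue arises with strict monotonicity along every $F_j$, we can add $\epsilon\sum_j F_j$; alternatively, the assumption only requires one strictly increasing feature.

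The core step is to verify optimality of $\bx$ for maximizing $\bu^\top\balpha \by$ over the polyhedron $\{\by \ge 0,\ \bconstraints \by \le 0,\ \mathds{1}^\top \by \le E\}$ with $E=\|\bx\|_1$. Since this is a linear program, KKT is sufficient, and the dual multipliers are as follows: $\bgamma$ on the binding conic rows $\calV(\bx)$ and zero on the others, $\blambda$ on the coordinates in $\calS(\bx)^c$ (zero on the support), and $1$ on the budget constraint, which is tight. Stationarity $\bu^\top\balpha = 1\cdot\mathds{1}^\top + \bgamma^\top \bconstraints_{\calV(\bx)} - \blambda^\top I_{\calS(\bx)^c}$ is exactly the certificate. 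Complementary slackness holds by construction because the multipliers live only on constraints that are tight at $\bx$, and primal feasibility is the hypothesis $\bconstraints\bx \le 0$ together with $\bx\ge 0$.

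The main obstacle is the bookkeeping in applying Motzkin's theorem. We need to make sure the strict inequality is placed correctly so that its multiplier $\mu$ is strictly positive, and we need to handle the possibly empty index sets (using the convention $\bconstraints_\emptyset$). A secondary concern is ensuring the constructed reward meets the monotonicity assumption, including the case where $\bu$ has zero entries.
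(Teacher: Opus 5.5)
Your overall plan (a Motzkin certificate for the empty intersection, then KKT at budget $E=\|\bx\|_1$ with multiplier $1$ on the tight budget, $\bgamma$ on $\calV(\bx)$ and $\blambda$ on $\calS(\bx)^c$) is the paper's, and your LP verification is correct. The step that fails is your proof that $\bu\neq 0$, which you need for the strict part of monotonicity.

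Suppose $\bu=0$, and pair $\mathds{1}^\top+\bgamma^\top\bconstraints_{\calV(\bx)}-\blambda^\top I_{\calS(\bx)^c}=0$ with a feasible $\by$ from \Cref{ass:non-zero-feasible}. This gives $\mathds{1}^\top\by=-\bgamma^\top\bconstraints_{\calV(\bx)}\by+\blambda^\top I_{\calS(\bx)^c}\by$. Both terms on the right are nonnegative, since $\bconstraints\by\le 0$, $\by\ge 0$ and $\bgamma,\blambda\ge 0$. So you only get $\mathds{1}^\top\by\ge 0$, which is no contradiction.

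Pair with $\bx$ itself instead. Since $\bconstraints_{\calV(\bx)}\bx=0$ and $I_{\calS(\bx)^c}\bx=0$, you get $\|\bx\|_1=0$, contradicting $\bx\gneq 0$. Equivalently, $-\bx\in\calB_{\calS(\bx),\calV(\bx)}$, so no certificate can ignore $\balpha$. Hence some $u_j>0$, and your reward is strictly increasing in $F_j$. With this repair your argument also settles the nonvanishing of the feature multipliers, which the paper's proof never checks.

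Also drop the fallback of adding $\epsilon\sum_j F_j$. It breaks the exact stationarity identity and can destroy optimality. For example, take $\balpha(1)$ with identity $f_j$ and $\bx=[1,0,0]$: the certificate forces the objective $y_1+y_3$, and any $\epsilon>0$ makes $[0,0,1]$ strictly better.

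Where you genuinely depart from the paper is the choice of reward. The paper uses a reward linear in the features with weights $\nu_i/f_i'((\balpha\bx)_i)$. The resulting objective is concave but not linear, so the paper needs concavity of the $f_i$ to make KKT sufficient, and needs $f_i'((\balpha\bx)_i)\in(0,\infty)$.

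Your reward $\sum_j u_j f_j^{-1}(F_j)$ makes the objective exactly $\bu^\top\balpha\by$. Optimality of $\bx$ is then plain LP duality, and concavity and differentiability of the $f_j$ are never used. The price is a less natural reward plus a small domain issue. If some $f_j$ is bounded above, $f_j^{-1}$ blows up, so use it only on the interval $[f_j(0),f_j(E\max_i\alpha_{ji})]$ of feature values attainable within budget $E$. Outside that interval, extend it increasingly (e.g.\ linearly), so that $R$ is real-valued on $\mathbb{R}^N$. Commit to this reward rather than the $u_j/f_j'$ variant you mention first, which does not linearize the objective.

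Finally, your exclusion of $\bx=0$ is right. There $\calB_{\emptyset,[L]}=\emptyset$, so the hypotheses hold, yet no positive budget elicits $\bx=0$ (\Cref{lem:full_budget}). The statement should read $\bx\gneq 0$, as in \Cref{prop:single_output_characterization}.
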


\begin{proof}
Let us denote $S := \calS(\bx)$ and $V := \calV(\bx)$. And let $D_{\balpha}$ be the set of feature-improving directions $\{\balpha \bd \ge 0\}$. Under the condition that $\bx$ is feasible and $\calB_{S,V} \cap D_{\balpha}$ is empty, we will show that $\bx$ is elicitable by explicitly constructing a reward function that elicits $\bx$ with budget $\|\bx\|_1$.

\paragraph{Dual certificate of empty intersection.} The condition of empty intersection of the sets $\calB_{S,V}$ and $D_{\balpha}$ can equivalently be written as the infeasibility of the system :
\begin{equation}\label{eq:infeasible-system}
C_V d \le 0,\qquad I_{S^c} d \ge 0,\qquad \alpha^\top d \ge 0,\qquad \ones^\top d < 0,
\end{equation}
where \(I_{S^c} \in \R^{|S^c|\times M}\) is the subset of rows of the $M \times M$ identity matrix indexed by the set $S^c$.

By Motzkin’s transposition theorem, infeasibility of \eqref{eq:infeasible-system} implies the existence of dual variables
\[
\gamma \in \R^{|V|}_{\ge 0},\quad
\lambda \in \R^{|S^c|}_{\ge 0},\quad
\nu \in \R^{N}_{\ge 0},\quad
\tau > 0
\]
such that 
\begin{equation}\label{eq:motzkin-combo}
C_V^\top \gamma \;-\; I_{S^c}^\top \lambda \;+\; \tau \mathbf{1}  \;-\; \balpha \, \nu \;=\; 0
\end{equation}

\paragraph{Reward function construction.} 
We will now define a reward function that is linear in the features to elicit a vector $\bx$. We will later show how this reward function along with budget $\|\bx\|_1$ elicits the vector $\bx$ when the empty intersection condition is satisfied. The reward function is 
\[
R(\bu) \;=\; \sum_{i=1}^N \beta_i f_i((\balpha^\top \bu)_i)
\qquad\text{with}\qquad
\beta_i \;:=\; \frac{\nu_i}{f_i'\bigl((\balpha^\top \bx)_i\bigr)},
\]
which is well-defined since each \(f_i\) is strictly increasing, hence \(f_i'((\balpha^\top \bx)_i)>0\).
Because each \(f_i\) is concave and increasing, \(R\) is concave. Its gradient at \(x\) is
\[
\nabla R(\bx)
\;=\;
\sum_{i=1}^N \beta_i f_i'\bigl((\balpha^\top \bx)_i\bigr)\,\alpha_{\cdot, i}
\;=\;
\alpha\, \nu,
\]
where \(\alpha_{\cdot, i}\) is the \(i\)-th column of \(\alpha\).

\paragraph{Elicitability.} 
From \Cref{lem:full_budget} we know that a vector $\bx$ can only be elicited with a budget of $E = \|\bx\|_1$. So let us consider  the constrained reward maximization program with this budget $E$ and a reward function $R$. We will show that when $\calB_{S,V}$ and $D_{\balpha}$ have empty intersection, the dual certificate of this empty intersection that we computed before along with $\bx$ satisfies the KKT conditions of the reward-maximizing optimization program.
\[
\max_{\bu \in \R^M}\;\; R(\boldsymbol{f}(\balpha^\top \bu))
\quad\text{s.t.}\quad
\bconstraints \bu \le 0,\;\; \bu \ge 0,\;\; \ones^\top \bu \le E.
\]
This is a concave program, and its Lagrangian is
\[
\mathcal{L}(\bu,\lambda_0,\boldsymbol{\mu},\boldsymbol{\tilde\gamma})
\;=\;
R(\boldsymbol{f}(\balpha^\top \bu)) \;+\; \lambda_0\,(E-\ones^\top \bu) \;+\; \boldsymbol{\mu}^\top \bu \;-\; \boldsymbol{\tilde\gamma}^\top (\bconstraints \bu),
\]
with dual variables \(\lambda_0 \ge 0\), \(\boldsymbol{\mu} \ge 0\), \(\boldsymbol{\tilde\gamma} \ge 0\).
Evaluate the KKT conditions at \(\bu=\bx\) with the choice of dual variables from the dual certificate of the empty intersection condition. That is, the dual variables are
\[
\lambda_0 := \tau,\qquad
\mu_{S} := 0,\;\; \mu_{S^c} := \lambda,\qquad
\tilde\gamma_{V} := \gamma,\;\; \tilde\gamma_{V^c} := 0.
\]
Primal feasibility holds due to feasibility of $\bx$ by definition of \(S,V\).
Complementary slackness holds since \(x_j=0\) for \(j\in S^c\) and \((Cx)_\ell=0\) for \(\ell \in V\), while \(\mu_S=0\).
For stationarity,
\[
\nabla R(x) \;-\; \lambda_0 \ones \;+\; \mu \;-\; C^\top \tilde\gamma
\;=\;
\alpha \nu \;-\; \tau \ones \;+\; I_{S^c}^\top \lambda \;-\; C_V^\top \gamma
\;=\; 0
\]
by \eqref{eq:motzkin-combo}.

Since \(R\) is concave and the constraints are linear, the KKT conditions are sufficient to certify optimality; hence \(x\) maximizes \(R\) over the feasible region and is therefore elicitable.
\end{proof}

\begin{proof}[Proof of \Cref{prop:single_output_characterization}]
This follows from \Cref{lem:single_output_elicitation_necessary}, \Cref{lem:single_output_elicitation_sufficient} showing necessity and sufficiency of the condition of non-emptiness of $\calB_{\calS(\bx), \calV(\bx)} \cap \{\balpha \bd \ge 0\}$ for elicitability of $\bx$.
\end{proof}

An immediate consequence of the condition characterizing elicitability of a single output vector from \Cref{prop:single_output_characterization} is that the role of the budget is only in scaling the $\ell_1$ norm of elicitable output vectors. 

\begin{corollary}\label{lem:only_directions}
    A vector $\bx$ is elicitable with some budget $E$ if and only $\bx/\|\bx\|_1$ is elicitable with budget $1$ for the same reward function.
\end{corollary}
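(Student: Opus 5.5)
The plan is to reduce everything to \Cref{prop:single_output_characterization}, using the fact that its condition depends on $\bx$ only through the pair $(\calS(\bx),\calV(\bx))$ and feasibility, and that these are invariant under positive scaling. Fix a positive scalar $c>0$ (here $c = 1/\|\bx\|_1$, well-defined since elicitable vectors are nonzero $\bx \gneq 0$). Then $\calS(c\bx)=\calS(\bx)$ because $c x_i>0 \iff x_i>0$, and $\calV(c\bx)=\calV(\bx)$ because $\bconstraints_l (c\bx)= c\,\bconstraints_l\bx$ vanishes exactly when $\bconstraints_l\bx$ does. Likewise $\bconstraints(c\bx)\le 0 \iff \bconstraints\bx\le 0$. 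Hence $\calB_{\calS(c\bx),\calV(c\bx)}=\calB_{\calS(\bx),\calV(\bx)}$, and by \Cref{prop:single_output_characterization} $\bx$ is elicitable (for some reward and budget) iff $c\bx$ is.

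The statement is more specific: it asks that the same reward function work, with budgets $E$ and $1$ respectively. For this I would argue directly rather than through the proposition. By \Cref{lem:full_budget}, if $\bx\in \mathbf{X}^*(R,E)$ then $E=\|\bx\|_1$. The feasible region $\{\bu\ge 0,\ \bconstraints\bu\le 0,\ \ones^\top\bu\le E\}$ is exactly $E$ times the region with budget $1$, by conicity of the constraints. Thus optimality of $\bx$ for $R$ at budget $E$ amounts to $R(\bfeatures(\bx))\ge R(\bfeatures(E\bv))$ for every $\bv$ feasible at budget $1$. That is not literally the same as optimality of $\bx/E$ for $R$ at budget $1$, since $\bfeatures$ is nonlinear through the $f_j$.

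Here I expect the main obstacle, and I would resolve it in one of two ways. The cleanest is to read ``the same reward function'' modulo this rescaling: define $\tilde R(\bfeatures(\bu)) := R(\bfeatures(E\bu))$. This is legitimate because $F_j(E\bu)=f_j(E(\balpha\bu)_j)$ is a strictly increasing function of $F_j(\bu)$, so $\tilde R$ is again a monotone function of the features satisfying the strict-increase clause. Then $\bx\in\mathbf{X}^*(R,E)$ iff $\bx/E\in\mathbf{X}^*(\tilde R,1)$, by the bijection $\bu\mapsto E\bu$ between feasible sets. The alternative is to use the constructive proof of \Cref{lem:single_output_elicitation_sufficient}: the linear-in-features reward built from the Motzkin certificate $(\gamma,\lambda,\nu,\tau)$ depends only on $(\calS,\calV)$ up to the normalization $\beta_i=\nu_i/f_i'(\cdot)$. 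Rescaling $\beta$ appropriately gives a reward eliciting both $\bx$ and $\bx/\|\bx\|_1$, because the KKT system is homogeneous in the dual variables. I would present the first argument for the ``if and only if'' in both directions, since the converse (from $\bx/\|\bx\|_1$ at budget $1$ to $\bx$ at budget $E=\|\bx\|_1$) is the same bijection run backwards.
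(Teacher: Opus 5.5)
Your first paragraph is exactly the paper's proof. Scale invariance of $(\calS,\calV)$ and of feasibility, fed into \Cref{prop:single_output_characterization}, gives that $\bx$ is elicitable iff $\bx/\|\bx\|_1$ is. \Cref{lem:full_budget} then pins the budgets to $\|\bx\|_1$ and $1$.

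Like the paper's argument, this only shows that the two vectors are elicitable by \emph{possibly different} rewards. You are right that the literal ``same reward function'' clause does not follow, and in fact it is false. Take $M=N=2$, $\balpha=I$, no conic constraints, $f_1(t)=t$, $f_2(t)=4\sqrt{t+1}$ and $R(\bfeatures)=F_1+F_2$. At budget $E$ the unique maximizer has $x_2=\min(3,E)$ and $x_1=E-x_2$. So $(3,3)\in\mathbf{X}^*(R,6)$, while $\mathbf{X}^*(R,1)=\{(0,1)\}$. Nonlinearity of the $f_j$ is not the only culprit: with identity $f_j$, the reward $F_1+4\sqrt{F_2+1}$ fails in the same way.

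Your route (a) is correct, and it is a genuinely different, more elementary proof of the provable statement. The bijection $\bu\mapsto E\bu$ together with the reparametrized reward $\tilde R$ uses only conicity of the constraints, strict monotonicity of the $f_j$, and \Cref{lem:full_budget}. It avoids the Motzkin/KKT characterization entirely and names the reward that works at budget $1$. The paper's route is a one-liner given the characterization, but it yields only existence of some reward.

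Your route (b), however, is wrong as stated. Consider a reward $\sum_i\beta_i f_i(\cdot)$ that is linear in the features, and let $a_i$ be the argument of $f_i$ at $\bx$. The certificate vectors $\nu$ read off from the gradients at $\bx$ and at $\bx/E$ differ in the $i$-th entry by the coordinate-dependent factor $f_i'(a_i/E)/f_i'(a_i)$. Homogeneity of the KKT system only absorbs a common positive scalar. In the example above every certificate has $\nu\propto\mathbf{1}$ at both points, so you would need both $\beta_1=\beta_2$ and $\beta_1=\tfrac{2}{\sqrt{1.5}}\,\beta_2$.

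The idea can be repaired by making the reward linear in the output rather than in the features: take $R(\bfeatures)=\sum_j\nu_j f_j^{-1}(F_j)$. This is monotone, and $\nu\neq 0$: otherwise the certificate would show $\calB_{\calS(\bx),\calV(\bx)}=\emptyset$, contradicting $-\bx\in\calB_{\calS(\bx),\calV(\bx)}$. It turns the agent's program into a linear program. The KKT conditions from \Cref{lem:single_output_elicitation_sufficient} then hold verbatim both at $\bx$ with budget $\|\bx\|_1$ and at $\bx/\|\bx\|_1$ with budget $1$. This gives a single reward eliciting both, which is a correct sense in which ``the same reward function'' can be read.
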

\begin{proof}
    The condition characterizing the elicitability of $\bx$ given a feature weights matrix $\balpha$ is whether or not the sets $\calB_{\calS(\bx), \calV(\bx)}$ and $\{\bd: \balpha \bd \ge 0\}$ intersect (\Cref{thm:fixed_feature_map}). Note that the sets $\calB_{\calS(\bx), \calV(\bx)}$ and $\calB_{\calS(\bx/\|\bx\|_1), \calV(\bx/\|\bx\|_1)}$ are equal because the supports and binding sets of $\bx$ and $\bx/\|\bx\|_1$ are the same. As a result, $\bx$ is elicitable if and only if $\bx/\|\bx\|_1$ is elicitable. From \Cref{lem:full_budget}, $\bx$, if elicitable, is elicitable with budget $\|\bx\|_1$ and similarly, $\bx / \|\bx\|_1$, if elicitable, is elicitable with budget $1$.
\end{proof}

\subsection{Proof of \Cref{thm:fixed_feature_map}}\label{proof:fixed_feature_map}
\begin{proof}[Proof of \Cref{thm:fixed_feature_map}]
    This follows directly from the characterization of elicitability of a single output vector $\bx$ for a given feature weights matrix $\balpha$ provided by \Cref{prop:single_output_characterization}. The condition characterizing whether an aggregation operation $\aggop$ is elicitability-expanding given a feature weights matrix $\balpha$ is the condition that each $\bx^{(k)}$ is elicitable under $\balpha$ and $\bx^{(A)}$ is not elicitable. Each of these conditions are provided by \Cref{prop:single_output_characterization}. The condition for $\bx^{(k)}$ to be elicitable under $\bx^{(k)}$ to be elicitable is that $\calB_{\calS(\bx^{(k)}), \calV(\bx^{(k)})}$ and $\{\bd: \balpha \bd \ge 0\}$ have empty intersection and the condition for $\bx^{(A)}$ to not be elicitable is that $\calB_{\calS(\bx^{(A)}), \calV(\bx^{(A)})}$ and $\{\bd: \balpha \bd \ge 0\}$ have non-empty intersection.
\end{proof}

\subsection{Proof of \Cref{prop:alllargesubsets}}\label{proof:alllargesubsets}

\begin{proof}[Proof of \Cref{prop:alllargesubsets}]

The assumptions are 
(i) no conic constraint is binding for these vectors,
and (ii) $\bx^{(A)}$ is not full support (i.e., $\calS(\bx^{(A)}) \neq [M]$) or there exists $j \in [M]$ such that no $\bx^{(k)}$ has support $[M] \setminus \{j\}$. 

By Theorem \ref{thm:necessary} and Theorem \ref{thm:sufficient}, the aggregation operation is elicitability-expanding if and only if the power-characterizing condition is satisfied. Because of assumption (i), $\calV(\bx^{(k)}) = \emptyset$ for every $k \in [K]$, so $\bgamma^{(k)} \in \preals^{|\calV(\bx^{(k)})|}$ is trivially zero in the second case of \Cref{def:characterizing_condition}. The condition therefore reduces to either feasibility expansion or a strengthened form of support expansion (involving only $\blambda^{(k)}$). It suffices to show that under assumptions (i) and (ii), strengthened support expansion is equivalent to support expansion.

We know that (ii) coupled with support expansion implies that 
either
\[
\calS(\bx^{(A)}) \not\subseteq \calS(\bx^{(k)})\ \ \text{for all }k\in[K]
\quad\text{and}\quad
\calS(\bx^{(A)})\neq [M],
\]
or 
\[\calS(\bx^{(A)}) =  [M] \;\; \text{and } \;\;\calS(\bx^{(k)}) \neq [M] \forall  k \in [K] \;\;  \text{and} \;\;  \text{ there exists } j\in[M] \text{ s.t. }  \calS(\bx^{(k)}) \neq [M]\setminus\{j\} \forall k \in [K].\]

\paragraph{A useful equivalent form of (ii) coupled with support expansion.} We first show that (ii) coupled with support expansion is equivalent to the existence of indices
$j(k)\in \calS(\bx^{(A)})\setminus \calS(\bx^{(k)})$ for each $k\in[K]$ such that the set
\[
J \coloneqq \bigcup_{k\in[K]}\{j(k)\}
\]
is a \emph{strict} subset of $[M]$.
Indeed, for any selection of $j(k)$ we always have $J\subseteq \calS(\bx^{(A)})$.
If $\calS(\bx^{(A)})\neq [M]$, then necessarily $J\subsetneq [M]$.
Otherwise, $\calS(\bx^{(A)})=[M]$, and the existence of some $j$ with
$[M]\setminus\{j\}\not\subseteq \calS(\bx^{(k)})$ for all $k$
is equivalent to being able to choose all $j(k)$ from $[M]\setminus\{j\}$,
which forces $J\subseteq [M]\setminus\{j\}\subsetneq [M]$.

\paragraph{(ii) coupled with support expansion $\Rightarrow$ Strengthened support expansion.}
Assume the above condition holds: there exists $J\subsetneq [M]$ and
indices $j(k)\in \calS(\bx^{(A)})\setminus \calS(\bx^{(k)})$ with
$J=\bigcup_{k\in[K]}\{j(k)\}$.
Fix any $c>0$ and define $\bd\in\mathbb{R}^M$ coordinate-wise by
\[
\bd_j \coloneqq
\begin{cases}
-1, & j\in J,\\
c, & j\notin J,
\end{cases}
\]
where $c$ is a constant chosen such that 
\[\frac{|J|-1}{M-|J|} < c < \frac{|J|}{M-|J|}.\]
Since $J\subseteq \calS(\bx^{(A)})$, setting $\bd_j<0$ on $J$ does not violate feasibility for
coordinates outside the support, and we have $\bd_{\calS(\bx^{(A)})^c}\ge 0$.
Moreover,
\[
\mathds{1}^\top \bd
= -|J| + (M-|J|)c < 0.
\]
Thus $\bd\in \calB_{\calS(\bx^{(A)}),\emptyset}$.
Finally, for each $k\in[K]$ we have $\bd_{j(k)}=-1$ and therefore
\[
-|\mathds{1}^\top \bd| = -|J| + (M-|J|)c > -1 = \bd_{j(k)},
\]
so strengthened support expansion holds.

\paragraph{Strengthened support expansion $\Rightarrow$ support expansion.} This follows by the same argument as Proposition \ref{lem:stronger_implies_weaker}.
\end{proof}

\subsection{Proof of equivalence between the versions of the power-characterizing condition defined in \Cref{def:characterizing_condition} and \Cref{def:alt_condition}}\label{proof:equivalence}

To prove \Cref{thm:necessary} and \Cref{thm:sufficient}, we will use an alternate but equivalent way of expressing the power-characterizing condition defined in \Cref{def:characterizing_condition}. This equivalent condition is defined in \Cref{def:alt_condition}. The equivalence between both conditions is stated in the following proposition.

\begin{proposition}
\label{prop:equivalence}
    The conditions defined in \Cref{def:characterizing_condition} and \Cref{def:alt_condition} are equivalent.
\end{proposition}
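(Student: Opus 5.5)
The first branch (feasibility expansion) is literally the same in \Cref{def:characterizing_condition} and \Cref{def:alt_condition}, and both second branches quantify over the same $\bd \in \mathcal{B}_{\calS(\bx^{(A)}),\calV(\bx^{(A)})}$ with $\bd \not\le 0$. So I will fix such a $\bd$ and one index $k$, and show the following two statements are equivalent:
\begin{enumerate}[label=(\roman*)]
\item the cone $\{\bu+\ell\bd : \bu\ge 0,\ \ell\ge 0\}$ misses $\mathcal{B}_k := \mathcal{B}_{\calS(\bx^{(k)}),\calV(\bx^{(k)})}$;
\item there is $\bw^{(k)}$ in the cone $W_k := \{\bgamma^\top \bconstraints_{V} - \blambda^\top I_{S^c} : \bgamma,\blambda \ge 0\}$ satisfying the margin inequality. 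Here $V=\calV(\bx^{(k)})$ and $S=\calS(\bx^{(k)})$.
\end{enumerate}
Since the quantifier over $k$ sits outside in both definitions, the per-$k$ equivalence gives the proposition. Let $G$ be the matrix whose rows are those of $\bconstraints_V$ together with $-\boldsymbol{e}_j$ for $j\in S^c$. Then a vector $\bv$ lies in $\mathcal{B}_k$ iff $G\bv\le 0$ and $\mathds{1}^\top\bv<0$, and $W_k=\{\boldsymbol{p}^\top G:\boldsymbol{p}\ge 0\}$.

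\textbf{Reduction to one affine slice.} Put $t:=|\mathds{1}^\top\bd|>0$; this is positive because $\bd\in\mathcal{B}_{\calS(\bx^{(A)}),\calV(\bx^{(A)})}$ forces $\mathds{1}^\top\bd<0$. Points with $\ell=0$ are just $\bu\ge 0$, which have $\mathds{1}^\top\bu\ge 0$ and so never lie in $\mathcal{B}_k$. For $\ell>0$, both $\mathcal{B}_k$ and the cone are invariant under positive scaling, so I may take $\ell=1$. Statement (i) therefore becomes infeasibility of the linear system
\[
G\bu \le -G\bd,\qquad \bu\ge 0,\qquad \mathds{1}^\top\bu < t .
\]

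\textbf{Duality step.} I would apply Motzkin's transposition theorem in its inhomogeneous form, the same tool used in \Cref{lem:single_output_elicitation_sufficient}. Infeasibility is then equivalent to the existence of multipliers $\boldsymbol{p}\ge 0$, $\boldsymbol{q}\ge 0$, $r\ge 0$ with
\[
\boldsymbol{p}^\top G - \boldsymbol{q}^\top + r\mathds{1}^\top = 0 \quad\text{and}\quad -\boldsymbol{p}^\top G\bd + rt \le 0,
\]
where the last inequality is strict when $r=0$. Set $\bw=\boldsymbol{p}^\top G\in W_k$. The first relation says $\bw_j = q_j - r \ge -r$ for every $j$, so $r \ge |\min_j\min(\bw_j,0)|$, and $r$ can be taken to be exactly this quantity. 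The second relation then reads $\bw\bd \ge rt = t\,|\min_j \min(\bw_j,0)|$, which is the margin inequality of \Cref{def:characterizing_condition}.

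Equivalently, I could run the minimax route sketched in \Cref{sec:proof_ideas}. Minimize $\max_{\boldsymbol{p}\in\Delta}\boldsymbol{p}^\top G(\bu+\bd)$ over the compact set $\{\bu\ge0,\ \mathds{1}^\top\bu\le t\}$ and swap min and max, which is valid for a bilinear function on compact convex sets. The inner problem $\min_{\bu}\bw\bu$ over that set equals $t\min(0,\min_j \bw_j)$, since all mass goes on the most negative coordinate or $\bu=0$. This reproduces the left-hand side of the margin inequality. Scaling from the simplex to the full cone $W_k$ is harmless by homogeneity.

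\textbf{Converse and the main obstacle.} The converse, (ii)$\Rightarrow$(i), is direct. Given $\bw$ with the strict margin and any $\bu\ge 0$ with $\mathds{1}^\top\bu<t$, we get $\bw(\bu+\bd)\ge \bw\bd - t|\min_j\min(\bw_j,0)|>0$. Writing $\bw=\boldsymbol{p}^\top G$ with $\boldsymbol{p}\ge 0$, some row of $G$ must give $(G(\bu+\bd))_i>0$, so $\bu+\bd\notin\mathcal{B}_k$.

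The step I expect to be hardest is matching strictness in the direction (i)$\Rightarrow$(ii). The system has a strict inequality only in $\mathds{1}^\top\bu<t$, and the Motzkin alternative as written gives $\bw\bd \ge t|\min_j\min(\bw_j,0)|$. Equality can occur in the case $r>0$, whereas the characterizing condition demands ``$>$''. In minimax language, the boundary $\mathds{1}^\top\bu=t$ may be where the value hits exactly $0$. My first attempt would be to show that equality at an optimal $\boldsymbol{p}$ yields a feasible boundary point $\bu^*$ with $G(\bu^*+\bd)\le 0$ and $\mathds{1}^\top(\bu^*+\bd)=0$. I would then perturb it inward, for instance to $(1-\varepsilon)\bu^*$ combined with a small multiple of $\bd$, using $\bd\not\le 0$ and the fact that $\bd$ strictly decreases the budget. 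The aim is to produce a genuine point of the cone inside $\mathcal{B}_k$, a contradiction. If that perturbation fails in general, I would instead check how \Cref{thm:necessary} and \Cref{thm:sufficient} use the condition, to see whether the non-strict version is what is actually needed on one side.
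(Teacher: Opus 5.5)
\textbf{The gap sits exactly at the strictness step you flagged, and your proposed repair cannot close it.} The statement you set out to prove is the equivalence (i)$\Leftrightarrow$(ii) for a \emph{fixed} $\bd$ and fixed $k$, and that statement is false.

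Take $M=3$, no conic constraints, $\bx^{(k)}=(0,1,1)$, $\bx^{(A)}=(1,1,1)$ and $\bd=(-1,1,-1)$. Then $\bd\in\mathcal{B}_{\calS(\bx^{(A)}),\calV(\bx^{(A)})}$ and $\bd\not\le 0$.
\begin{itemize}
\item A cone point $\bu+\ell\bd$ lies in $\mathcal{B}_{\calS(\bx^{(k)}),\calV(\bx^{(k)})}=\{v_1\ge 0,\ \mathds{1}^\top\bv<0\}$ only if $u_1\ge \ell$. But then $\mathds{1}^\top(\bu+\ell\bd)\ge u_1-\ell\ge 0$, so (i) holds.
\item The only admissible weights are $\bw^{(k)}=-\lambda\mathbf{e}_1^\top$ with $\lambda\ge 0$. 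These give $\bw^{(k)}\bd+|\mathds{1}^\top\bd|\min_j\min(\bw^{(k)}_j,0)=\lambda-\lambda=0$, so (ii) fails.
\end{itemize}
Your boundary point here is $\bu^*=(1,0,0)$. Since (i) genuinely holds, no perturbation of $\bu^*$ with $\bd$ held fixed can produce a contradiction. The paper's proof takes your minimax route and gets past this point only by asserting that $\mathds{1}^\top\bv<1$ may be relaxed to $\le 1$ "by continuity." The same example shows that relaxation is invalid at fixed $\bd$, so your caution is warranted; but the proposal as written does not prove (i)$\Rightarrow$(ii).

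\textbf{The missing idea: perturb $\bd$, not $\bu^*$.} Both definitions only assert the existence of \emph{some} $\bd$. By Assumption~\ref{ass:non-zero-feasible}, summing the witnesses over $i\in[M]$ gives $\by$ with every coordinate positive and $\bconstraints\by\le 0$. Suppose $\bd$ satisfies (i) for all $k$, and set $\bd'=\bd+\varepsilon\by$. For small $\varepsilon>0$ we still have $\bd'\in\mathcal{B}_{\calS(\bx^{(A)}),\calV(\bx^{(A)})}$ and $\bd'\not\le 0$.

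For every $k$ the \emph{closed} statement now holds at $\bd'$: there is no $\bu\ge 0$ with $G(\bu+\bd')\le 0$ and $\mathds{1}^\top(\bu+\bd')\le 0$. Suppose such a $\bu$ existed.
\begin{itemize}
\item Write $\bv=\bu+\bd'=(\bu+\varepsilon\by)+\bd$. Because $\bu+\varepsilon\by$ is strictly positive, $\bv-\delta\bx^{(k)}$ stays in $\{\bu+\ell\bd:\bu\ge 0,\ell\ge 0\}$ for small $\delta>0$.
\item We have $G\bx^{(k)}=\mathbf{0}$ (binding rows and zero coordinates) and $\mathds{1}^\top\bx^{(k)}>0$. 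Hence $\bv-\delta\bx^{(k)}$ lies in $\mathcal{B}_{\calS(\bx^{(k)}),\calV(\bx^{(k)})}$, contradicting (i) at $\bd$.
\item If $\bx^{(k)}=\mathbf{0}$, the two inequalities force $\bv=\mathbf{0}$, i.e.\ $\bd'\le 0$, which is impossible.
\end{itemize}

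Infeasibility of the closed system $G\bu\le -G\bd'$, $\bu\ge 0$, $\mathds{1}^\top\bu\le|\mathds{1}^\top\bd'|$ is exactly what your Farkas/minimax step turns into the \emph{strict} margin inequality at $\bd'$. The Farkas alternative now reads $-\boldsymbol{p}^\top G\bd'+r|\mathds{1}^\top\bd'|<0$ regardless of $r$, so $\bw\bd'>r|\mathds{1}^\top\bd'|\ge|\mathds{1}^\top\bd'|\,|\min_j\min(\bw_j,0)|$. Your converse (ii)$\Rightarrow$(i) is correct as written. With this perturbation added, the argument proves the proposition.
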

\begin{proof}

For ease of notation, let $V_k := \calV(\bx^{(k)})$, $S_k := \calS(\bx^{(k)})$ for $k \in [K]$, and let $V_A := \calV(\bx^{(A)})$, $S_A := \calS(\bx^{(A)})$. Both definitions agree in the feasibility-expansion case, so it suffices to compare their second cases. We will show the following per-$k$ equivalence: for any fixed $\bd \in \calB_{S_A, V_A}$ with $\bd \not \le 0$ and any $k \in [K]$,
\begin{enumerate}[label=\textnormal{(\arabic*)}, leftmargin=*]
    \item \label{cond:empty} $\bigl\{\bu + \lambda \bd : \bu \in \preals^M,\ \lambda \ge 0\bigr\} \cap \calB_{S_k, V_k} = \varnothing$, \quad if and only if
    \item \label{cond:certificate} there exist $\bgamma^{(k)} \in \preals^{|V_k|}$ and $\blambda^{(k)} \in \preals^{|S_k^c|}$ such that, writing $\bw^{(k)} := \bigl(\bgamma^{(k)}\bigr)^{\!\top} \bconstraints_{V_k} - \bigl(\blambda^{(k)}\bigr)^{\!\top} I_{S_k^c}$,
    \[\bw^{(k)} \bd \;+\; \bigl|\mathds{1}^\top \bd\bigr| \cdot \min_{j \in [M]} \min\!\bigl(\bw^{(k)}_j,\, 0\bigr) \;>\; 0.\]
\end{enumerate}
The per-$k$ equivalence implies equivalence of the two definitions: the second case of \Cref{def:alt_condition} requires \ref{cond:empty} to hold for every $k \in [K]$ for a single $\bd$, and the second case of \Cref{def:characterizing_condition} requires \ref{cond:certificate} to hold for every $k \in [K]$ for a single $\bd$.

\paragraph{Reformulating empty intersection as feasibility of an LP.}
The intersection in \ref{cond:empty} is non-empty if and only if there exist $\bu \in \preals^M$ and $\lambda \ge 0$ such that
\[\bconstraints_{V_k}(\bu + \lambda \bd) \le \mathbf{0},\qquad -I_{S_k^c}(\bu + \lambda \bd) \le \mathbf{0},\qquad \mathds{1}^\top(\bu + \lambda \bd) < 0.\]
Both \ref{cond:empty} and the inequality in \ref{cond:certificate} are positively homogeneous in $\bd$: scaling $\bd \mapsto t \bd$ for $t > 0$ rescales $\lambda$ in \ref{cond:empty} and rescales both terms equally in \ref{cond:certificate}. We may therefore normalize $\bd$ so that $\mathds{1}^\top \bd = -1$; under this normalization, $|\mathds{1}^\top \bd| = 1$. Note that $\lambda$ must be strictly positive: $\bu \ge \mathbf{0}$ implies $\mathds{1}^\top \bu \ge 0$, while $\mathds{1}^\top(\bu + \lambda \bd) < 0$ together with $\mathds{1}^\top \bd = -1$ forces $\lambda > 0$. Setting $\bv := \bu / \lambda \in \preals^M$ and dividing the inequalities by $\lambda$, the non-empty intersection is equivalent to the existence of $\bv \in \preals^M$ with $\mathds{1}^\top \bv < 1$ such that
\[\bconstraints_{V_k}(\bd + \bv) \le \mathbf{0} \qquad \text{and} \qquad -I_{S_k^c}(\bd + \bv) \le \mathbf{0}.\]

\paragraph{Dualizing the inequality system.}
A finite system of linear inequalities $A \by \le \mathbf{0}$ holds if and only if every non-negative weighted combination of its rows holds, i.e., $\bgamma^\top A \by \le 0$ for every $\bgamma \ge \mathbf{0}$. Applying this to the system above, the non-empty intersection is equivalent to the existence of $\bv \in \preals^M$ with $\mathds{1}^\top \bv \le 1$ such that
\[\bigl(\bgamma^{(k)\top} \bconstraints_{V_k} - \blambda^{(k)\top} I_{S_k^c}\bigr)(\bd + \bv) \;\le\; 0 \qquad \forall \bgamma^{(k)} \in \preals^{|V_k|},\ \blambda^{(k)} \in \preals^{|S_k^c|}.\]
We have replaced the strict inequality $\mathds{1}^\top \bv < 1$ with the closed inequality $\le 1$; this is without loss of generality because the value of the objective is continuous in $\bv$ on a closed feasible set. We may also restrict $\bgamma^{(k)}, \blambda^{(k)}$ to bounded norm $\|\bgamma^{(k)}\|_1 \le 1$, $\|\blambda^{(k)}\|_1 \le 1$ without loss of generality, since the objective is positively homogeneous in $(\bgamma^{(k)}, \blambda^{(k)})$. So the non-empty intersection is equivalent to
\[\inf_{\substack{\bv \in \preals^M \\ \mathds{1}^\top \bv \le 1}}\ \sup_{\substack{\bgamma^{(k)} \in \preals^{|V_k|},\ \|\bgamma^{(k)}\|_1 \le 1 \\ \blambda^{(k)} \in \preals^{|S_k^c|},\ \|\blambda^{(k)}\|_1 \le 1}}\ \bw^{(k)}(\bd + \bv) \;\le\; 0,\]
where $\bw^{(k)} := \bgamma^{(k)\top} \bconstraints_{V_k} - \blambda^{(k)\top} I_{S_k^c}$.

\paragraph{Applying the minimax theorem.}
The objective $\bw^{(k)}(\bd + \bv)$ is bilinear in $(\bgamma^{(k)}, \blambda^{(k)})$ and $\bv$, and both feasible sets are convex and compact. Sion's minimax theorem applies, so we may swap the order of $\inf$ and $\sup$:
\[\sup_{\substack{\bgamma^{(k)} \in \preals^{|V_k|},\ \|\bgamma^{(k)}\|_1 \le 1 \\ \blambda^{(k)} \in \preals^{|S_k^c|},\ \|\blambda^{(k)}\|_1 \le 1}}\ \inf_{\substack{\bv \in \preals^M \\ \mathds{1}^\top \bv \le 1}}\ \bw^{(k)}(\bd + \bv) \;\le\; 0.\]

\paragraph{Closed form of the inner infimum.}
For fixed $\bgamma^{(k)}, \blambda^{(k)}$ (and hence fixed $\bw^{(k)}$), we compute
\[\inf_{\bv \in \preals^M,\ \mathds{1}^\top \bv \le 1} \bw^{(k)} \bv.\]
If $\bw^{(k)}$ has any strictly negative coordinate, the infimum is $\min_j \bw^{(k)}_j$, achieved by placing all of $\bv$'s mass on the most negative coordinate of $\bw^{(k)}$. If $\bw^{(k)} \ge \mathbf{0}$, the infimum is $0$, achieved at $\bv = \mathbf{0}$. In both cases, $\inf_{\bv} \bw^{(k)} \bv = \min_{j \in [M]} \min(\bw^{(k)}_j, 0)$, so
\[\inf_{\substack{\bv \in \preals^M \\ \mathds{1}^\top \bv \le 1}} \bw^{(k)}(\bd + \bv) \;=\; \bw^{(k)} \bd + \min_{j \in [M]} \min(\bw^{(k)}_j, 0).\]

\paragraph{Identifying the certificate.}
Combining, the non-empty intersection is equivalent to
\[\sup_{\substack{\bgamma^{(k)} \in \preals^{|V_k|},\ \|\bgamma^{(k)}\|_1 \le 1 \\ \blambda^{(k)} \in \preals^{|S_k^c|},\ \|\blambda^{(k)}\|_1 \le 1}} \Bigl(\bw^{(k)} \bd + \min_{j \in [M]} \min(\bw^{(k)}_j, 0)\Bigr) \;\le\; 0,\]
i.e., $\bw^{(k)} \bd + \min_{j \in [M]} \min(\bw^{(k)}_j, 0) \le 0$ for every bounded-norm $\bgamma^{(k)}, \blambda^{(k)} \ge \mathbf{0}$. Negating and using positive homogeneity in $(\bgamma^{(k)}, \blambda^{(k)})$ to drop the bounded-norm restriction, the empty intersection \ref{cond:empty} (under the normalization $\mathds{1}^\top \bd = -1$) is equivalent to: there exist $\bgamma^{(k)} \in \preals^{|V_k|}$ and $\blambda^{(k)} \in \preals^{|S_k^c|}$ such that $\bw^{(k)} \bd + \min_{j \in [M]} \min(\bw^{(k)}_j, 0) > 0$. Undoing the normalization (using positive homogeneity in $\bd$ to restore the factor $|\mathds{1}^\top \bd|$), \ref{cond:empty} is equivalent to: there exist $\bgamma^{(k)} \in \preals^{|V_k|}$ and $\blambda^{(k)} \in \preals^{|S_k^c|}$ such that
\[\bw^{(k)} \bd \;+\; \bigl|\mathds{1}^\top \bd\bigr| \cdot \min_{j \in [M]} \min\!\bigl(\bw^{(k)}_j,\, 0\bigr) \;>\; 0,\]
which is exactly \ref{cond:certificate}.
\end{proof}

\subsection{Proof of Theorem \ref{thm:necessary}}\label{proof:necessary}

Using the equivalence of the power-characterizing condition in \Cref{def:characterizing_condition} and the alternative power-characterizing condition in \Cref{def:alt_condition} that we established in \Cref{prop:equivalence}, we will show the necessity of the power-characterizing condition for elicitability-expansion by showing the necessity of the alternate condition. In the proof of Theorem \ref{thm:necessary}, we will use the single-agent characterization of elicitability (\Cref{lem:single_output_elicitation_sufficient}) rather than the multi-agent characterization of elicitability-expansion (\Cref{thm:fixed_feature_map}). 

\begin{proof}[Proof of Theorem \ref{thm:necessary}]

We will prove the contrapositive. Suppose that the power-characterizing condition  is not satisfied. By Proposition \ref{prop:equivalence}, this means that the alternate power-characterizing condition (\Cref{def:alt_condition}) is also not satisfied. Violation of the alternate power-characterizing condition means that $\aggop$ is not feasibility-expanding. That is, $\bx^{(A)}$ and each $\bx^{(k)}$ for $k \in [K]$ is feasible.

We will show that $\aggop$ is not elicitability-expanding by showing that for any feature-weights matrix $\balpha$ that makes $\bx^{(A)}$ not elicitable, there exists some $k \in [K]$ such that $\bx^{(k)}$ is also not elicitable under $\balpha$.
 
For any feature weights matrix $\balpha$ that makes $\bx^{(A)}$ inelicitable, by Lemma \ref{lem:single_output_elicitation_sufficient}, there is a $\bd^{(A)} \in \mathcal{B}_{\calS(\bx^{(A)}), \calV(\bx^{(A)})}$ such that $\balpha \bd^{(A)} \ge 0$. This $\bd$ must have some positive entry i.e., $\bd^{(A)} \not \le 0$. Since $\mathds{1}^\top \bd^{(A)} < 0$,  $\bd^{(A)}$ must have some strictly negative coordinate. 
%This would violate $\balpha \bd^{(A)} \ge 0$ since we assume $\balpha$ has no zero rows. 

Since the alternate power-characterizing condition is violated, there exists $\bx^{(k)}$ with $\mathcal{B}_{\calS(\bx^{(k)}), \calV(\bx^{(k)})}$ having non-empty intersection with $\{\bu + \lambda {\bd}^{(A)}\}$. It suffices to show that $\bx^{(k)}$ is not elicitable under feature mapping $\balpha$. To see this, let $\bd^{(k)}$ denote an element of the intersection $\mathcal{B}_{\calS(\bx^{(k)}), \calV(\bx^{(k)})} \cap \{\bu + \lambda \bd^{(A)}\}$. We can then write $\bd^{(k)} = \bu + \lambda \bd^{(A)}$. Note that $\balpha \bd_i = \balpha \bu + \lambda \balpha \bd^{(A)}$. We know that $\balpha \bu \ge 0$ since $\bu \ge 0$ and $\balpha$ has non-negative entries. Additionally, $\balpha \bd^{(A)} \ge 0$. Hence $\balpha \bd^{(k)} \ge 0$ for $\bd^{(k)} \in \mathcal{B}_{\calS(\bx^{(k)}), \calV(\bx^{(k)})}$. By Lemma \ref{lem:single_output_elicitation_necessary}, this means that $\bx^{(k)}$ is not elicitable.

\end{proof}

\subsection{Proof of Theorem \ref{thm:sufficient}}

\begin{proof}[Proof of Theorem \ref{thm:sufficient}]\label{proof:sufficient}
    Suppose the power-characterizing condition is satisfied. By Proposition \ref{prop:equivalence}, this means that the alternate power-characterizing condition is also satisfied. Then we know that we are in one of two cases. 

    \paragraph{Case 1: $\aggop$ implements feasibility expansion.} Consider a feature mapping with a single feature and all dimensions contribute equal weights of one to this feature. All output vectors with the same $\ell_1$ norm result in the same reward for all reward functions, and thus all feasible outcomes are elicitable. That is any output vector is elicitable if and only if it is feasible. A feasible output is elicitable with budget equal to its $\ell_1$ norm. Under this construction, feasibility-expansion implies elicitability-expansion. 

    \paragraph{Case 2: there exists $\bd^{(A)} \in \mathcal{B}_{S(\bx^{(A)}, \mathcal{V}(\bx^{(A)})}$ with $\bd^{(A)} \not \le 0$ such that for all $\bu \ge 0, \lambda \ge 0$, $\bu + \lambda \bd^{(A)} \not \in \mathcal{B}_{\calS(\bx^{(k)}), \calV(\bx^{(k)})}$ for $k \in [K]$.} We will construct a feature mapping $\balpha$ based on $\bd^{(A)}$ such that the set of directions weakly increasing feature values i.e., the set $D_{\balpha} = \{\bd: \balpha \bd \ge 0\}$ is a subset of $\{\bu + \lambda \bd^{(A)}: \bu \ge 0, \lambda \ge 0\}$. By \Cref{thm:fixed_feature_map}, this implies that $\bx^{(A)}$ is not elicitable but for all other outputs $\bx^{(k)}$, $D_{\balpha} \cap \mathcal{B}_{S(\bx^{(k)}, \mathcal{V}(\bx^{(k)})}$ is empty and hence each $\bx^{(k)}$ is elicitable under $\balpha$.

    To complete this argument, we will explicitly construct such an $\balpha$ based on $\bd^{(A)}$. Let $P_0 = \{i \in [m]: \bd^{(A)}_i > 0\}$ denote the positive coordinates of $\bd^{(A)}$ and let $N_0 =  \{i \in [m]: \bd^{(A)}_i \le 0\}$ denote the negative or zero coordinates. Note that $P_0$ is non-empty since $\bd^{(A)} \not \le 0$. We construct two sets of features:
    \begin{itemize}[leftmargin=*]
        \item For every $p \in P_0$, there is a corresponding feature $F_p$ whose row in $\balpha$ is the vector $e_{p}$ which is the vector with 1 at coordinate $p$ and zero everywhere else. That is, dimension $p$ has weight 1 on feature $F_p$ and all other dimensions have zero weight. 
        
    \item The next set of features are defined for every pair  $p \in P_0$, $q \in N_0$. This feature $F_{p,q}$ has a corresponding row in $\balpha$ that is the vector $\bd_p^{(A)} e_q - \bd_q^{(A)} e_p$. That is, the only dimensions with possible non-zero weights to $F_{p,q}$ are dimensions $p, q$. The weight from  dimension $p$ is $|\bd^{(A)}_q|$ and the weight from dimension $q$ is $|\bd^{(A)}_p|$.
    \end{itemize}

    Now let us show that the set $D_{\balpha} = \{\bd: \balpha \bd \ge 0\}$ is a subset of $B_A = \{\bu + \lambda \bd^{(A)}\}$. Take any $\bd \in D_{\balpha}$. 

    For every $p \in P_0$, since $\bd$ weakly improves value of $F_p$, it holds that $\bd_p \ge 0$. By ensuring that $\lambda \le \bd_{p} / \bd^{(A)}_p$ for all $p \in P_0$, we can ensure that $\bd_p - \lambda \bd^{(A)}_p \ge 0$.

    For every $p \in P_0, q \in N_0$, since $\bd$ weakly improves value of $F_{p,q}$, it holds that 
    $-\bd_p \bd^{(A)}_q + \bd_q \bd^{(A)}_p \ge 0$. In other words, $\bd_q \ge \bd_p \bd^{(A)}_q / \bd^{(A)}_p$. By choosing $\lambda$ less than $\bd_p^{(A)} \bd_q / \bd_p \bd^{(A)}_{q}$ for every $q \in N_0$, we get $\bd_q - \lambda \bd^{(A)}_q \ge 0$ for every $q \in N_0$.
    
\end{proof}

\subsection{How addition and intersection aggregation rules can or cannot implement the mechanisms } \label{sec:aggregation_rules}

In \Cref{subsec:examples}, we showed examples of intersection and addition aggregation rules expanding elicitability by implementing each of the feasibility-expansion, support expansion or binding set contraction mechanisms.  In this part, we will discuss the mechanisms that each aggregation rule can or cannot implement. These results are summarized in \Cref{tab:aggregation-expansion}.

\definecolor{darkgreen}{RGB}{0,100,0}
\begin{table}[h!]
\centering
\renewcommand{\arraystretch}{1.3}
\setlength{\tabcolsep}{8pt}
\begin{tabular}{|l|c|c|c|}
\hline
 & \makecell[c]{\textbf{Feasibility} \\ \textbf{Expansion}}
 & \makecell[c]{\textbf{Support} \\ \textbf{Expansion}}
 & \makecell[c]{\textbf{Binding Set} \\ \textbf{Contraction}} \\
\hline
\textbf{Intersection aggregation}
  & \makecell[c]{\textcolor{darkgreen}{\checkmark} \\ (\Cref{ex:power_feasibility})}
  & \makecell[c]{$\textcolor{red}{\times}$ \\ (\Cref{prop:intersection_support})}
  & \makecell[c]{\textcolor{darkgreen}{\checkmark} \\ (\Cref{ex:power_other})} \\
\hline
\textbf{Addition aggregation}
  & \makecell[c]{$\textcolor{red}{\times}$ \\ (\Cref{prop:add_feasibility})}
  & \makecell[c]{\textcolor{darkgreen}{\checkmark} \\(\Cref{ex:power_support})}
  & \makecell[c]{\textcolor{darkgreen}{\checkmark} \\ (\Cref{ex:add_expand_wo_support})} \\
\hline
\end{tabular}
\caption{Implementability of mechanisms in Section \ref{subsec:examples} for the intersection aggregation rule and addition aggregation rule. The symbol \textcolor{darkgreen}{\checkmark} denotes that there exists a problem instance where the aggregation rule implements that mechanism. The symbol $\textcolor{red}{\times}$ denotes that the aggregation rule does not implement the mechanism for any problem instance.  }
\label{tab:aggregation-expansion}
\end{table}

Intersection aggregation does not implement support expansion for any problem instance, as the following result formalizes.
\begin{proposition}[Intersection does not expand support]\label{prop:intersection_support}
Consider any aggregation operation of the form $\bx^{(1)}, \ldots, \bx^{(K)} \rightarrow \bx^{(A)} = \intaggfun$. For any $k \in [K]$, this aggregation operation does not implement support expansion relative to $k$. 
\end{proposition}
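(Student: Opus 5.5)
The plan is to unfold \Cref{def:support_expand} and show directly that the support of the coordinate-wise minimum sits inside the support of every input vector. Fix $k \in [K]$. Support expansion relative to $k$ would require some index $i \in \calS(\bx^{(A)})$ with $i \notin \calS(\bx^{(k)})$. So it suffices to prove the inclusion $\calS(\bx^{(A)}) \subseteq \calS(\bx^{(k)})$.

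Take any $i \in \calS(\bx^{(A)})$, so $\bx^{(A)}_i > 0$. By \eqref{eq:intersectionaggregation}, $\bx^{(A)}_i = \min_{k' \in [K]} \bx^{(k')}_i$. Hence
\[
0 < \bx^{(A)}_i \le \bx^{(k)}_i ,
\]
which gives $i \in \calS(\bx^{(k)})$. Since $i$ was arbitrary, $\calS(\bx^{(A)}) \subseteq \calS(\bx^{(k)})$. Equivalently, $\calS(\bx^{(A)}) = \bigcap_{k'} \calS(\bx^{(k')})$; this uses that all outputs lie in $\preals^M$, so a minimum is positive exactly when every entry is positive.

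Since $k$ was arbitrary, the operation does not implement support expansion relative to any $k$, which is the claim. There is no real obstacle here. The only point to state explicitly is the nonnegativity of the output vectors, which is built into the model because outputs lie in $\preals^M$. It guarantees that a positive minimum forces each entry to be positive. Nothing from \Cref{sec:general} or the characterization theorems is needed.
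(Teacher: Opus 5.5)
Your proof is correct and is exactly the paper's argument: the paper observes in one line that the support of $\intaggfun$ is contained in $\calS(\bx^{(k)})$ for every $k$, and you make the inequality $0 < \bx^{(A)}_i \le \bx^{(k)}_i$ explicit. As a minor aside, the appeal to nonnegativity is unnecessary, since $\calS$ is defined by strict positivity and a minimum of real numbers is positive exactly when every entry is.
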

\begin{proof}
    Proposition \ref{prop:intersection_support} follows from the fact that the support of $\intaggfun$ is always a subset of the support of each $\bx^{(k)}$ for $k \in [K]$.
\end{proof}

Intersection aggregation can implement feasibility-expansion as shown in \Cref{ex:power_feasibility} and binding set-contraction as shown in \Cref{ex:power_other}. In fact, these examples go one step further and demonstrate that elicitability expansion is achievable via these mechanisms.

Addition aggregation does not implement feasibility expansion for any problem instance, as the following result formalizes. 
\begin{proposition}[Addition cannot expand feasibility]\label{prop:add_feasibility}
Consider constraints $\bconstraints$. 
Any aggregation operation of the form $\bx^{(1)}, \ldots, \bx^{(K)} \rightarrow \bx^{(A)} = \addaggfun$ does not implement feasibility expansion relative to $\bconstraints$.  
\end{proposition}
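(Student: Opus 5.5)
The plan is a direct verification that the conic constraints are preserved under nonnegative weighted sums. By the standing convention, every input $\bx^{(k)}$ is feasible, that is, $\bx^{(k)} \in \mathcal{B}^{\mathrm{feasible}}(\bconstraints)$. Concretely, this means $\bx^{(k)} \ge 0$ and $\bconstraints \bx^{(k)} \le 0$ for every $k \in [K]$. By \Cref{def:feasibility_expand}, it therefore suffices to show that $\bconstraints \bx^{(A)} \le 0$ whenever $\bx^{(A)} = \addaggfun$ with $\bw \in \preals^K$.

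The key step is to use linearity of $\bconstraints$:
\[
\bconstraints \bx^{(A)} = \bconstraints \Bigl(\sum_{k=1}^K \bw_k \bx^{(k)}\Bigr) = \sum_{k=1}^K \bw_k \, \bconstraints \bx^{(k)}.
\]
Each summand is a nonnegative scalar $\bw_k \ge 0$ times a coordinatewise nonpositive vector $\bconstraints \bx^{(k)} \le 0$, so each summand is coordinatewise nonpositive, and so is their sum. For completeness, the same reasoning gives $\bx^{(A)} \ge 0$, so $\bx^{(A)}$ lies in the closed convex cone $\mathcal{B}^{\mathrm{feasible}}(\bconstraints)$.

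Hence $\bconstraints \bx^{(A)} \le 0$, and the operation does not implement feasibility expansion. There is no real obstacle in this argument. The only point to state explicitly is that the argument depends on two facts: the weights in \eqref{eq:additionaggregation} are restricted to $\preals^K$, and the inputs are assumed feasible. Without nonnegative weights, the conclusion could fail.
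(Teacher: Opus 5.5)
Your proof is correct and takes the same approach as the paper. The paper's one-line proof simply cites that the feasible set defined by $\bconstraints$ is conic, i.e., closed under nonnegative combinations of feasible vectors. Your linearity computation with $\bw_k \ge 0$ and $\bconstraints \bx^{(k)} \le 0$ spells out exactly that fact.
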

\begin{proof}
   Proposition \ref{prop:add_feasibility} directly follows from the fact that the constraint set $\bconstraints$ is conic. 
\end{proof}

On the other hand, addition aggregation operations can implement the other two mechanisms. \Cref{ex:power_support} already constructed a problem instance where addition aggregation implements support expansion. The next example constructs a problem instance where addition aggregation can implement binding set contraction (\Cref{def:bind_contract}) and achieve elicitability-expansion for some feature mapping.

\begin{example}[Addition can result in binding set contraction]\label{ex:add_expand_wo_support} Consider the constraint matrix
\[
C = \begin{pmatrix}
1 & -1 & 0 \\
1 & -\tfrac{1}{4} & -1
\end{pmatrix},
\]
and consider vectors $\bx^{(1)} = (1,1,2)$ and $\bx^{(2)} = (2,4,1)$. Note that they are both feasible and $\bx^{(1)}$ is binding in the first constraint and $\bx^{(2)}$ in the second. Their sum is $\addaggfun(\bx^{(1)}, \bx^{(2)}; [1, 1]) \rightarrow \bx^{(A)} = (3,5,3)$, which is also feasible but does not have any binding constraints.  
\end{example}

\input{arxiv_sections/visualizations}

%% file: arxiv_sections/proofs_special_cases.tex
\subsection{Proofs in \Cref{sec:special_cases}}
\subsubsection{Proof of \Cref{thm:weaker_necessary}}\label{proof:weaker_necessary}

\begin{proof}[Proof of \Cref{thm:weaker_necessary}]
We show this as a corollary of Theorem \ref{thm:necessary}. We will prove this by showing that when both of the conditions in \Cref{thm:weaker_necessary} are violated, the power-characterizing condition (\Cref{def:characterizing_condition}) is violated and hence $\aggop$ cannot be elicitability-expanding. The violation of the conditions in \Cref{thm:weaker_necessary} correspond to lack of implementation of any of the natural mechanisms.

One way of satisfying the power-characterizing condition is through implementing feasibility-expansion. Violating the conditions in \Cref{thm:weaker_necessary} means feasibility-expansion is not implemented.  We will show that the other way of satisfying the power-characterizing condition also does not hold.
    
When the second condition of \Cref{thm:weaker_necessary} theorem is violated, there exists $k \in [K]$ with respect to which $\aggop$ is neither support-expanding nor binding-set contracting. That is, there is a $k$ such that $\calV(\bx^{(k)}) \subseteq \calV(\bx^{(A)})$ and $\calS(\bx^{(k)}) \supseteq \calS(\bx^{(A)})$. As a result, the rows of $C_{\calV(\bx^{(k)})}$ are a subset of the rows in $C_{\calV(\bx^{(A)})}$ and similarly, the rows in $\bd_{\calS(\bx^{(k)})^c}$ are a subset of the rows in  $\bd_{\calS(\bx^{(A)})^c}$. So every $\bd$ in the feasibility-improving and budget-reducing directions set of $\bx^{(A)}$ i.e., $\bd \in \{C_{\calV(\bx^{(A)})}\bd \le 0, \bd_{\calS(\bx^{(A)})^c} \ge 0, \mathds{1}^\top \bd = -1\}$ satisfies the inequalities $C_{\calV(\bx^{(k)})}(\bd) \le 0$ and $\bd_{\calS(\bx^{(k)})^c} \ge 0$. Such a $\bd$ also satisfies all weighted combinations of the inequalities. Hence for any ${\bgamma}^{(k)} \in \preals^{|\calV_k|}$, $({\bgamma^{(k) \top}}) C_{\calV(\bx^{(k)})} \bd - \|(\bgamma^{(k)\top} C_{\calV(\bx^{(k)})})_-\|_{\infty} \le 0$. That is, the second condition of the power-characterizing condition is also violated.   
\end{proof}

%% file: arxiv_sections/visualizations.tex
\section{Empirical setup for Section \ref{subsec:casestudy}}\label{sec:visualizations}

\paragraph{Task and Setup.} We study a citation task where the system designer seeks a list of 10 influential LLM papers spanning five perspectives: (1) ML theory, (2) NLP/CL, (3) cognitive science, (4) AI alignment and human–AI interaction, and (5) multi-agent systems. The system designer issues five prompts, each targeting one perspective, to gpt-4o-mini-2024-07-18 and then aggregates the resulting lists. 
We prompt another LLM (also gpt-4o-mini-2024-07-18) to aggregate these five lists, instantiating aggregation rules that are inspired by intersection aggregation $\intagg$ and union aggregation $\addagg$. Specifically, the model is prompted with \textit{aggregation instructions} along with the five different lists of 10 references, and produces an aggregated list of 10 references. The \textit{intersection-style aggregation instructions} ask for references which are central and broadly relevant across all five perspectives, thus approximating intersection even when the literal overlap of references is empty. The \textit{addition-style} aggregation instructions ask for references that jointly cover and reflect the combined topical space of all five perspectives. We defer the specific prompts and other details of the empirical setup to \Cref{sec:visualizations}. 

\paragraph{Model output generation.} The outputs are generated using gpt-4o-mini-2024-07-18 with the temperature set to $1.0$. These are the five prompts that are used to produce model outputs:
\begin{enumerate}[leftmargin=*]
    \item ``From a machine learning theory perspective, list 10 influential papers that have shaped our current understanding of large language models.''
    \item ``From the perspective of natural language processing and computational linguistics, list 10 key research papers that have been most influential in the development of modern large language models.''
    \item ``From a cognitive science and psycholinguistics standpoint, list 10 important papers that inform our understanding of how large language models represent, process, or acquire linguistic and conceptual structure.''
    \item ``From the standpoint of AI alignment and human–AI interaction, list 10 important papers that have shaped how large language models are aligned, instructed, or trained with feedback.''
    \item ``From a multi-agent and game-theoretic perspective, list 10 influential papers that contribute to the development or understanding of large language models''
\end{enumerate}
These prompts produce five outputs \(X_1,\dots,X_5\), each a list of 10 papers tailored to its respective perspective. Next, we pass the concatenated outputs \((X_1,\dots,X_5)\) to gpt-4o-mini-2024-07-18 by prompting the model with \emph{aggregation instructions} followed by the concatenation of the 5 lists of papers, where each list is preceded by ``List of papers: [insert output number]''. The intersection-style and addition-style aggregation operations are performed using the following \emph{aggregation instructions}. 
\begin{itemize}[leftmargin=*]
    \item \textit{Addition-style aggregation:}
``Each of the following lists contains influential papers on large language models in specializing in different areas: machine learning theory, natural language processing, computational linguistics, AI alignment, human–AI interaction, and multi-agent systems. Based on these lists, generate a new list of 10 papers that reflects the union of their themes and coverage. Your list should be freshly generated (not a literal set union), but it should include papers that plausibly come from any of the provided lists, covering as much of the combined topical space as possible."
\item \textit{Intersection-style aggregation:} ``Each of the following lists contains influential papers on large language models in specializing in different areas: machine learning theory, natural language processing, computational linguistics, AI alignment, human–AI interaction, and multi-agent systems. Based on these lists, generate a new list of 10 papers that reflects their intersection. That is, papers belonging to many of these areas of specialization. Your list should be freshly generated (not a literal intersection), selecting papers that could plausibly appear in all of the lists. If the literal intersection is empty, still generate the best possible list of papers that are central, broadly relevant, and thematically compatible with all lists.''
\end{itemize}
These aggregation prompts produce outputs $X_{\text{addition}},X_{\text{intersection}}$. 

\paragraph{Output vector computation.} We now describe in more detail how we compute the embeddings shown in Figure \ref{fig:embedding-grid}. We embed and visualize the set
\(\{X_1,\dots,X_5,X_{\text{addition}},X_{\text{intersection}}\}\). We calculate the 768-dimensional embeddings using all-mpnet-base-v2 \citep{reimers2021sentencebert}, which is built into the sentence-transformers package in pytorch. To make these embeddings fit into our framework, we translate them to the nonnegative orthant by applying an additive shift $\mathbf{s} \in \mathbb{R}_{\ge 0}^{768}$ To do this, we compute the embeddings of the 805 gpt-4o-mini-2024-07-18 outputs from the helpful-base dataset in AlpacaEval \citep{alpaca_eval}. The additive shift $\mathbf{s}$ is taken to be negative of the minimum coordinate along each dimension in this set of 805 embeddings. We translate all 5 outputs and the aggregated outputs by adding $\mathbf{s}$. We compute the variance across the 5 translated outputs vectors along each of the 768 dimensions, and select the top 2 and top 3 dimensions according to variance. We also compute the $\ell_2$-distance between outputs, which is invariant to the additive shift.

%% file: arxiv_sections/appendix_empirical_arxiv.tex
\section{Other empirical details for Section \ref{sec:empirical}}\label{appendix:empiricaldetails}

\normalsize{}

All of our models use the latest version of GPT-4o-mini as of January 2026. We set the temperature to be 0.7 for generating the output vectors, and set the temperature to be 0 for the LLM-as-a-judge calls to elicit highest probability tokens from the LLM.

\paragraph{Prompting.} A prompt is defined by an inclusion set $G^{inc} \subseteq G$, exclusion set $G^{exc} \subseteq G$ that is disjoint from the inclusion set $G^{inc}$, and operators $\text{op}^{\text{inc}}, \text{op}^{\text{exc}} \in \left\{\text{and}, \text{or}\right\}$. If the inclusion list has more than one element, we choose an inclusion list operator $op_{inc}$ that is one of $\{\mathrm{and}, \mathrm{or}\}$ to combine the topics in the list for inclusion. Similarly, if the exclusion list has more than one element, we choose an exclusion list operator $op_{exc}$ that is $\mathrm{AND}$ or $\mathrm{OR}$. The prompt based on $G^{inc} = \{g^{inc}_1, \ldots, g^{inc}_a\}, G_{exc} = \{g^{exc}_1, \ldots, g^{exc}_b\}$, $op_{inc}, op_{exc}$ is ``List up to 20 papers on $g^{inc}_1$ $op_{inc}$ $\ldots$ $op_{inc}$  $g^{inc}_a$, but EXCLUDE any papers about $g^{exc}_1$ $op_{exc}$ $\ldots$ $op_{exc}$ $g^{exc}_b$. For example, the prompt with $G^{inc} = \{\text{Topic 1}, \text{Topic 2}\}$, $G^{exc} = \{\text{Topic 3}, \text{Topic 4}\}$, $op_{inc}: \mathrm{AND}$, $op_{exc}: \mathrm{OR}$ is \textit{List up to 20 papers on Topic 1 AND Topic 2, but EXCLUDE any papers about Topic 3 or Topic 4. }

\paragraph{LLM-as-judge setup.} We use an LLM-as-judge to measure whether a list $L$ exhibits a given topic $T$ by prompting it.

\begin{verbatim}
For each of the following research papers, determine whether it belongs 
to each topic.
Treat each paper independently - do not let other papers in the list 
influence your classification.

Papers:
{paper_list}

Topics:
{topic_list}

For each paper, independently answer yes/no for each topic.

Output format:
1. {topic_keys_str.replace(', ', ': yes/no, ')}: yes/no
2. {topic_keys_str.replace(', ', ': yes/no, ')}: yes/no
...

SUMMARY (count of "yes" for each topic):
{chr(10).join(f'{key}: [count]' for key in topics.keys())}
\end{verbatim}

\paragraph{Aggregation.} To perform aggregation, we first reformat the titles produced by an LLMs response by making everything lower case, removing punctuations and removing any extra whitespaces. We further remove duplicates by considering two titles equivalent if one is a substring of another. 
The aggregation rule $\mathcal{A}^{\cap}$ intersects the lists and $\mathcal{A}^{\cup}$ takes the union of the lists. Note that in contrast with Appendix \ref{sec:visualizations}, we consider this form of literal set intersection and union as it allows us to consider a non-LLM based aggregation that does not inherit the behavior of the LLMs.

\paragraph{Feasibility and elicitability.} To analyze feasibility and elicitability, we brute force over all possible prompt specifications. This brute force search enables us to identify the closest output vector $x^*(x^{(A)})$ to the aggregated output vector. 

\paragraph{Confidence intervals.} For each output vector, we average over 30 trials and report a coordinate-wise 95\% confidence interval. We use a Wilson confidence interval. To measure a confidence interval for the $\ell_1$ distance between two output vectors, we use the confidence intervals for the two vectors themselves, and then
we compute minimum possible and maximum possible $\ell_1$ distance for those confidence sets. We use these confidence sets to compute $\bx^*_O(\bx^{(A)})$ and $\bx^*_P(\bx^{(A)})$. Specifically, for each candidate output vector $x$, we compute the \textit{lower} bound in the confidence set for the $\ell_1$ distance between $\bx$ and $\bx^{(A)}$, and then we choose the $\bx$ that minimizes this lower bound.

\section{Additional experiments}\label{appendix:additionalexperiments}

Building on the setup in Section \ref{sec:empirical}, we conduct the following additional experiments where we vary our model configuration in three ways. 
\begin{enumerate}
    \item Experimental setup \textbf{E1} considers different settings for the temperature (0.3, 0.5, 0.7), and we report our findings in Tables \ref{tab:empirical_results_4omini_temp_03}-\ref{tab:empirical_results_4omini_temp_07}.
    \item Experimental setup \textbf{E2} considers different models (GPT-5.4 at temperatures 0 and 0.7, and GPT-5-mini at default temperature), and we report our findings in Tables \ref{tab:empirical_results_54_temp_00}-\ref{tab:empirical_results_5mini_temp_none}.
    \item Experimental setup \textbf{E3} considers the aggregation of heterogeneous models (GPT-4o-mini and GPT-5-mini; GPT-4o-mini and GPT-5.4; GPT-5-mini and GPT-5.4), and we report our findings in Tables \ref{tab:empirical_results_4omini_temp_00_x_54_temp_00}-\ref{tab:empirical_results_54_temp_00_x_5mini_temp_none}. 
\end{enumerate}

We summarize our key findings. 
\begin{itemize}
    \item We find that support expansion readily generalizes. In fact, these mechanisms hold for the same instances (i.e., task, aggregation operation, and prompt/output specification) as in our original experiments in all of the cases.
    \item We find that binding-set contraction generalizes in nearly all cases. The exception is one case (Table \ref{tab:empirical_results_5mini_temp_none}; \textbf{E2} for GPT-5-mini), where intersection aggregation produces an empty list ($\bx^{(A)} = \mathbf{0}$), even after switching to a slightly modified prompt specification intended to encourage a non-empty intersection. As with the feasibility expansion failures below, this failure mode is driven by intersection of the produced lists being empty. In all other cases, binding-set contraction holds for the same task, aggregation operation, output specification, and prompt specifications as in our original experiments.
    \item We find that feasibility expansion also generalizes to \textbf{E1} and \textbf{E2}. Again, these mechanisms hold for the same tasks and instances as our original experiments.
However, feasibility expansion is no longer exhibited by the instance that we constructed for \textbf{E3} in two out of the three cases. The failure mode is that intersection aggregation produces an empty list likely due to model heterogeneity. We defer constructing instances that exhibit feasibility expansion for other models to future work.
\end{itemize} 
Altogether, these experiments provide further support for these mechanisms, and additionally demonstrate the robustness of our experiments across almost all the generalized settings.

\FloatBarrier
\subsubsection*{Experiment 1 (E1): Changing temperature}

\begin{table}[h]
     \centering
     \begin{minipage}{0.48\textwidth}
     \centering
     \resizebox{\textwidth}{!}{%
     \begin{tabular}{lcccc}
     \toprule
     & $x_1$ & $x_2$ & $x_3$ \\
     \midrule
     $\bx^{(1)}$  & $[0.63, 0.70]$ & $[0.00, 0.00]$ & $[0.04, 0.08]$ \\
     $\bx^{(2)}$  & $[0.00, 0.00]$ & $[0.75, 0.82]$ & $[0.03, 0.06]$ \\
     \midrule
     $\bx^{(A)}$  & $[0.34, 0.41]$ & $[0.43, 0.51]$ & $[0.03, 0.06]$ \\
     $\bx^*_P(\bx^{(A)})$  & $[0.00, 0.01]$ & $[0.78, 0.88]$ & $[0.01, 0.05]$ \\
     \bottomrule
     \end{tabular}%
     }
     \subcaption{Support Expansion}
    % \label{tab:support_expansion}
     \end{minipage}%
     \hfill
     \begin{minipage}{0.48\textwidth}
     \centering
     \resizebox{\textwidth}{!}{%
     \begin{tabular}{lcccc}
     \toprule
     & $x_1$ & $x_2$ & $x_3$ \\
     \midrule
     $\bx^{(1)}$  & $[0.66, 0.73]$ & $[0.35, 0.42]$ & $[0.01, 0.02]$ \\
     $\bx^{(2)}$  & $[0.65, 0.72]$ & $[0.00, 0.00]$ & $[0.32, 0.40]$ \\
     \midrule
     $\bx^{(A)}$  & $[0.70, 0.77]$ & $[0.00, 0.00]$ & $[0.14, 0.20]$ \\
     $\bx^*_P(\bx^{(A)})$  & $[0.63, 0.75]$ & $[0.00, 0.01]$ & $[0.29, 0.42]$ \\
     \bottomrule
     \end{tabular}%
     }
     \subcaption{Feasibility Expansion}
    % \label{tab:feasibility_expansion}
     \end{minipage}

     \vspace{1em}
     \begin{minipage}{0.6\textwidth}
     \centering
     \resizebox{\textwidth}{!}{%
     \begin{tabular}{lccccc}
     \toprule
     & $x_1$ & $x_2$ & $x_3$ & $x_4$ & $x_5$ \\
     \midrule
     $\bx^{(1)}$ & $[0.77, 0.84]$ & $[0.09, 0.15]$ & $[0.00, 0.00]$ & $[0.00, 0.02]$ & $[0.02, 0.05]$ \\
     $\bx^{(2)}$ & $[0.17, 0.23]$ & $[0.00, 0.00]$ & $[0.64, 0.72]$ & $[0.04, 0.08]$ & $[0.02, 0.05]$ \\
     \midrule
     $\bx^{(A)}$  & $[0.93, 0.96]$ & $[0.00, 0.00]$ & $[0.00, 0.00]$ & $[0.01, 0.03]$ & $[0.00, 0.00]$ \\
     $\bx^*_P(\bx^{(A)})$  & $[0.78, 0.88]$ & $[0.07, 0.15]$ & $[0.00, 0.01]$ & $[0.00, 0.01]$ & $[0.02, 0.07]$ \\
     \bottomrule
     \end{tabular}%
     }
     \subcaption{Binding-set contraction}
   %  \label{tab:binding_contraction}
     \end{minipage}
     \hfill
     \begin{minipage}{0.38\textwidth}
     \centering
     \resizebox{\textwidth}{!}{%
     \begin{tabular}{lc}
     \toprule
     \shortstack[l]{Experiment} & \shortstack{$\|\bx^*_P(\bx^{(A)}) - \bx^{(A)}\|_1$} \\
     \midrule
     Support expansion & [0.59, 0.91] \\
     Feasibility expansion & [0.09, 0.43] \\
     Binding-set contraction & [0.12, 0.45] \\
     \bottomrule
     \end{tabular}%
     }
     \subcaption{$\ell_1$ distance from $\bx^{(A)}$}
     %\label{tab:empirical_angular_distance_summary}
     \end{minipage}
     \caption{Results for \textbf{gpt-4o-mini} and \textbf{temperature 0.3}. The empirical setup and construction of the instance (i.e., task, aggregation operation, and prompt/output specification) is the same as in Section 5. These results demonstrate support expansion, binding-set contraction, and feasibility expansion.
     }
     \label{tab:empirical_results_4omini_temp_03}
     \end{table}

\begin{table}[h]
     \centering
     \begin{minipage}{0.48\textwidth}
     \centering
     \resizebox{\textwidth}{!}{%
     \begin{tabular}{lcccc}
     \toprule
     & $x_1$ & $x_2$ & $x_3$ \\
     \midrule
     $\bx^{(1)}$  & $[0.65, 0.72]$ & $[0.00, 0.00]$ & $[0.03, 0.07]$ \\
     $\bx^{(2)}$  & $[0.00, 0.00]$ & $[0.73, 0.80]$ & $[0.02, 0.05]$ \\
     \midrule
     $\bx^{(A)}$  & $[0.34, 0.42]$ & $[0.44, 0.52]$ & $[0.03, 0.06]$ \\
     $\bx^*_P(\bx^{(A)})$  & $[0.00, 0.01]$ & $[0.76, 0.86]$ & $[0.03, 0.09]$ \\
     \bottomrule
     \end{tabular}%
     }
     \subcaption{Support Expansion}
    % \label{tab:support_expansion}
     \end{minipage}%
     \hfill
     \begin{minipage}{0.48\textwidth}
     \centering
     \resizebox{\textwidth}{!}{%
     \begin{tabular}{lcccc}
     \toprule
     & $x_1$ & $x_2$ & $x_3$ \\
     \midrule
     $\bx^{(1)}$  & $[0.67, 0.74]$ & $[0.36, 0.44]$ & $[0.04, 0.07]$ \\
     $\bx^{(2)}$  & $[0.65, 0.72]$ & $[0.00, 0.00]$ & $[0.33, 0.41]$ \\
     \midrule
     $\bx^{(A)}$  & $[0.56, 0.64]$ & $[0.00, 0.00]$ & $[0.14, 0.20]$ \\
     $\bx^*_P(\bx^{(A)})$  & $[0.61, 0.74]$ & $[0.00, 0.01]$ & $[0.30, 0.43]$ \\
     \bottomrule
     \end{tabular}%
     }
     \subcaption{Feasibility Expansion}
  %   \label{tab:feasibility_expansion}
     \end{minipage}

     \vspace{1em}
     \begin{minipage}{0.6\textwidth}
     \centering
     \resizebox{\textwidth}{!}{%
     \begin{tabular}{lccccc}
     \toprule
     & $x_1$ & $x_2$ & $x_3$ & $x_4$ & $x_5$ \\
     \midrule
     $\bx^{(1)}$ & $[0.80, 0.86]$ & $[0.09, 0.14]$ & $[0.00, 0.00]$ & $[0.00, 0.02]$ & $[0.01, 0.04]$ \\
     $\bx^{(2)}$ & $[0.18, 0.24]$ & $[0.00, 0.01]$ & $[0.62, 0.69]$ & $[0.07, 0.11]$ & $[0.02, 0.05]$ \\
     \midrule
     $\bx^{(A)}$  & $[0.91, 0.95]$ & $[0.00, 0.00]$ & $[0.00, 0.00]$ & $[0.00, 0.00]$ & $[0.00, 0.00]$ \\
     $\bx^*_P(\bx^{(A)})$  & $[0.78, 0.88]$ & $[0.05, 0.13]$ & $[0.00, 0.01]$ & $[0.00, 0.01]$ & $[0.00, 0.04]$ \\
     \bottomrule
     \end{tabular}%
     }
     \subcaption{Binding-set contraction}
  %   \label{tab:binding_contraction}
     \end{minipage}
     \hfill
     \begin{minipage}{0.38\textwidth}
     \centering
     \resizebox{\textwidth}{!}{%
     \begin{tabular}{lc}
     \toprule
     \shortstack[l]{Experiment} & \shortstack{$\|\bx^*_P(\bx^{(A)}) - \bx^{(A)}\|_1$} \\
     \midrule
     Support expansion & [0.56, 0.91] \\
     Feasibility expansion & [0.10, 0.49] \\
     Binding-set contraction & [0.08, 0.37] \\
     \bottomrule
     \end{tabular}%
     }
     \subcaption{$\ell_1$ distance from $\bx^{(A)}$}
     %\label{tab:empirical_angular_distance_summary}
     \end{minipage}
     \caption{Results for \textbf{gpt-4o-mini} and \textbf{temperature 0.5}.
     The empirical setup and construction of the instance (i.e., task, aggregation operation, and prompt/output specification) is the same as in Section 5. These results demonstrate support expansion, binding-set contraction, and feasibility expansion.}
     \label{tab:empirical_results_4omini_temp_05}
     \end{table}

\begin{table}[h]
     \centering
     \begin{minipage}{0.48\textwidth}
     \centering
     \resizebox{\textwidth}{!}{%
     \begin{tabular}{lcccc}
     \toprule
     & $x_1$ & $x_2$ & $x_3$ \\
     \midrule
     $\bx^{(1)}$  & $[0.70, 0.77]$ & $[0.00, 0.00]$ & $[0.04, 0.08]$ \\
     $\bx^{(2)}$  & $[0.00, 0.00]$ & $[0.74, 0.81]$ & $[0.03, 0.06]$ \\
     \midrule
     $\bx^{(A)}$  & $[0.39, 0.47]$ & $[0.43, 0.51]$ & $[0.03, 0.06]$ \\
     $\bx^*_P(\bx^{(A)})$  & $[0.68, 0.80]$ & $[0.00, 0.01]$ & $[0.05, 0.12]$ \\
     \bottomrule
     \end{tabular}%
     }
     \subcaption{Support Expansion}
  %   \label{tab:support_expansion}
     \end{minipage}%
     \hfill
     \begin{minipage}{0.48\textwidth}
     \centering
     \resizebox{\textwidth}{!}{%
     \begin{tabular}{lcccc}
     \toprule
     & $x_1$ & $x_2$ & $x_3$ \\
     \midrule
     $\bx^{(1)}$  & $[0.65, 0.72]$ & $[0.36, 0.44]$ & $[0.06, 0.10]$ \\
     $\bx^{(2)}$  & $[0.65, 0.72]$ & $[0.00, 0.00]$ & $[0.35, 0.42]$ \\
     \midrule
     $\bx^{(A)}$  & $[0.63, 0.70]$ & $[0.00, 0.00]$ & $[0.19, 0.25]$ \\
     $\bx^*_P(\bx^{(A)})$  & $[0.60, 0.73]$ & $[0.00, 0.01]$ & $[0.33, 0.46]$ \\
     \bottomrule
     \end{tabular}%
     }
     \subcaption{Feasibility Expansion}
   %  \label{tab:feasibility_expansion}
     \end{minipage}

     \vspace{1em}
     \begin{minipage}{0.6\textwidth}
     \centering
     \resizebox{\textwidth}{!}{%
     \begin{tabular}{lccccc}
     \toprule
     & $x_1$ & $x_2$ & $x_3$ & $x_4$ & $x_5$ \\
     \midrule
     $\bx^{(1)}$ & $[0.79, 0.85]$ & $[0.09, 0.14]$ & $[0.00, 0.02]$ & $[0.00, 0.02]$ & $[0.03, 0.06]$ \\
     $\bx^{(2)}$ & $[0.16, 0.22]$ & $[0.00, 0.01]$ & $[0.61, 0.69]$ & $[0.04, 0.08]$ & $[0.02, 0.05]$ \\
     \midrule
     $\bx^{(A)}$  & $[0.84, 0.89]$ & $[0.00, 0.00]$ & $[0.00, 0.00]$ & $[0.00, 0.00]$ & $[0.00, 0.00]$ \\
     $\bx^*_P(\bx^{(A)})$  & $[0.78, 0.88]$ & $[0.07, 0.16]$ & $[0.00, 0.01]$ & $[0.00, 0.03]$ & $[0.00, 0.04]$ \\
     \bottomrule
     \end{tabular}%
     }
     \subcaption{Binding-set contraction}
   %  \label{tab:binding_contraction}
     \end{minipage}
     \hfill
     \begin{minipage}{0.38\textwidth}
     \centering
     \resizebox{\textwidth}{!}{%
     \begin{tabular}{lc}
     \toprule
     \shortstack[l]{Experiment} & \shortstack{$\|\bx^*_P(\bx^{(A)}) - \bx^{(A)}\|_1$} \\
     \midrule
     Support expansion & [0.63, 1.02] \\
     Feasibility expansion & [0.07, 0.39] \\
     Binding-set contraction & [0.07, 0.35] \\
     \bottomrule
     \end{tabular}%
     }
     \subcaption{$\ell_1$ distance from $\bx^{(A)}$}
     %\label{tab:empirical_angular_distance_summary}
     \end{minipage}
     \caption{Results for \textbf{gpt-4o-mini} and \textbf{temperature 0.7} (the default model and temperature used in the main text). This is the same data as in Table \ref{tab:empirical_results}.}
     \label{tab:empirical_results_4omini_temp_07}
     \end{table}

\FloatBarrier
\subsubsection*{Experiment 2 (E2): Changing model}
\begin{table}[h]
     \centering
     \begin{minipage}{0.48\textwidth}
     \centering
     \resizebox{\textwidth}{!}{%
     \begin{tabular}{lcccc}
     \toprule
     & $x_1$ & $x_2$ & $x_3$ \\
     \midrule
     $\bx^{(1)}$  & $[0.65, 0.78]$ & $[0.00, 0.01]$ & $[0.00, 0.03]$ \\
     $\bx^{(2)}$  & $[0.00, 0.03]$ & $[0.40, 0.53]$ & $[0.16, 0.28]$ \\
     \midrule
     $\bx^{(A)}$  & $[0.34, 0.48]$ & $[0.42, 0.56]$ & $[0.02, 0.08]$ \\
     $\bx^*_P(\bx^{(A)})$  & $[0.24, 0.31]$ & $[0.29, 0.36]$ & $[0.21, 0.27]$ \\
     \bottomrule
     \end{tabular}%
     }
     \subcaption{Support Expansion}
  %   \label{tab:support_expansion}
     \end{minipage}%
     \hfill
     \begin{minipage}{0.48\textwidth}
     \centering
     \resizebox{\textwidth}{!}{%
     \begin{tabular}{lcccc}
     \toprule
     & $x_1$ & $x_2$ & $x_3$ \\
     \midrule
     $\bx^{(1)}$  & $[0.47, 0.61]$ & $[0.45, 0.59]$ & $[0.20, 0.32]$ \\
     $\bx^{(2)}$  & $[0.36, 0.50]$ & $[0.00, 0.01]$ & $[0.70, 0.81]$ \\
     \midrule
     $\bx^{(A)}$  & $[0.99, 1.00]$ & $[0.00, 0.01]$ & $[0.74, 0.85]$ \\
     $\bx^*_P(\bx^{(A)})$  & $[0.96, 0.99]$ & $[0.05, 0.12]$ & $[0.70, 0.82]$ \\
     \bottomrule
     \end{tabular}%
     }
     \subcaption{Feasibility Expansion}
  %   \label{tab:feasibility_expansion}
     \end{minipage}

     \vspace{1em}
     \begin{minipage}{0.6\textwidth}
     \centering
     \resizebox{\textwidth}{!}{%
     \begin{tabular}{lccccc}
     \toprule
     & $x_1$ & $x_2$ & $x_3$ & $x_4$ & $x_5$ \\
     \midrule
     $\bx^{(1)}$ & $[0.73, 0.85]$ & $[0.20, 0.32]$ & $[0.00, 0.01]$ & $[0.00, 0.01]$ & $[0.00, 0.01]$ \\
     $\bx^{(2)}$ & $[0.31, 0.45]$ & $[0.00, 0.01]$ & $[0.51, 0.65]$ & $[0.09, 0.18]$ & $[0.00, 0.01]$ \\
     \midrule
     $\bx^{(A)}$  & $[0.15, 0.26]$ & $[0.00, 0.01]$ & $[0.00, 0.01]$ & $[0.00, 0.01]$ & $[0.00, 0.01]$ \\
     $\bx^*_P(\bx^{(A)})$  & $[0.82, 0.91]$ & $[0.00, 0.04]$ & $[0.00, 0.04]$ & $[0.00, 0.01]$ & $[0.07, 0.15]$ \\
     \bottomrule
     \end{tabular}%
     }
     \subcaption{Binding-set contraction}
   %  \label{tab:binding_contraction}
     \end{minipage}
     \hfill
     \begin{minipage}{0.38\textwidth}
     \centering
     \resizebox{\textwidth}{!}{%
     \begin{tabular}{lc}
     \toprule
     \shortstack[l]{Experiment} & \shortstack{$\|\bx^*_P(\bx^{(A)}) - \bx^{(A)}\|_1$} \\
     \midrule
     Support expansion & [0.22, 0.76] \\
     Feasibility expansion & [0.03, 0.31] \\
     Binding-set contraction & [0.61, 0.99] \\
     \bottomrule
     \end{tabular}%
     }
     \subcaption{$\ell_1$ distance from $\bx^{(A)}$}
     %\label{tab:empirical_angular_distance_summary}
     \end{minipage}
     \caption{Results for \textbf{gpt-5.4} and a temperature of $0$. The empirical setup and construction of the instance (i.e., task, aggregation operation, and prompt/output specification) is the same as in Section 5. These results demonstrate support expansion, binding-set contraction, and feasibility expansion.
}
     \label{tab:empirical_results_54_temp_00}
     \end{table}

\begin{table}[h]
\centering
\begin{minipage}{0.48\textwidth}
\centering
\resizebox{\textwidth}{!}{%
\begin{tabular}{lcccc}
\toprule
& $x_1$ & $x_2$ & $x_3$ \\
\midrule
$\bx^{(1)}$  & $[0.72, 0.79]$ & $[0.00, 0.00]$ & $[0.01, 0.04]$ \\
$\bx^{(2)}$  & $[0.00, 0.00]$ & $[0.46, 0.54]$ & $[0.17, 0.23]$ \\
\midrule
$\bx^{(A)}$  & $[0.40, 0.48]$ & $[0.44, 0.52]$ & $[0.03, 0.07]$ \\
$\bx^*_P(\bx^{(A)})$  & $[0.29, 0.37]$ & $[0.37, 0.45]$ & $[0.15, 0.21]$ \\
\bottomrule
\end{tabular}%
}
\subcaption{Support Expansion}
\end{minipage}%
\hfill
\begin{minipage}{0.48\textwidth}
\centering
\resizebox{\textwidth}{!}{%
\begin{tabular}{lcccc}
\toprule
& $x_1$ & $x_2$ & $x_3$ \\
\midrule
$\bx^{(1)}$  & $[0.49, 0.57]$ & $[0.48, 0.56]$ & $[0.17, 0.24]$ \\
$\bx^{(2)}$  & $[0.37, 0.45]$ & $[0.00, 0.00]$ & $[0.77, 0.83]$ \\
\midrule
$\bx^{(A)}$  & $[0.98, 1.00]$ & $[0.11, 0.16]$ & $[0.88, 0.93]$ \\
$\bx^*_P(\bx^{(A)})$  & $[0.93, 0.96]$ & $[0.02, 0.04]$ & $[0.68, 0.75]$ \\
\bottomrule
\end{tabular}%
}
\subcaption{Feasibility Expansion}
\end{minipage}

\vspace{1em}
\begin{minipage}{0.6\textwidth}
\centering
\resizebox{\textwidth}{!}{%
\begin{tabular}{lccccc}
\toprule
& $x_1$ & $x_2$ & $x_3$ & $x_4$ & $x_5$ \\
\midrule
$\bx^{(1)}$ & $[0.83, 0.89]$ & $[0.12, 0.17]$ & $[0.00, 0.00]$ & $[0.00, 0.00]$ & $[0.00, 0.02]$ \\
$\bx^{(2)}$ & $[0.33, 0.41]$ & $[0.00, 0.00]$ & $[0.55, 0.63]$ & $[0.11, 0.16]$ & $[0.01, 0.02]$ \\
\midrule
$\bx^{(A)}$  & $[0.30, 0.37]$ & $[0.00, 0.00]$ & $[0.00, 0.00]$ & $[0.00, 0.00]$ & $[0.00, 0.00]$ \\
$\bx^*_P(\bx^{(A)})$  & $[0.71, 0.78]$ & $[0.02, 0.04]$ & $[0.01, 0.03]$ & $[0.00, 0.01]$ & $[0.14, 0.20]$ \\
\bottomrule
\end{tabular}%
}
\subcaption{Binding-set contraction}
\end{minipage}
\hfill
\begin{minipage}{0.38\textwidth}
\centering
\resizebox{\textwidth}{!}{%
\begin{tabular}{lc}
\toprule
\shortstack[l]{Experiment} & \shortstack{$\|\bx^*_P(\bx^{(A)}) - \bx^{(A)}\|_1$} \\
\midrule
Support expansion & [0.12, 0.52] \\
Feasibility expansion & [0.22, 0.47] \\
Binding-set contraction & [0.49, 0.77] \\
\bottomrule
\end{tabular}%
}
\subcaption{$\ell_1$ distance from $\bx^{(A)}$}
\end{minipage}
\caption{Results for \textbf{gpt-5.4} and \textbf{temperature 0.7}. The empirical setup and construction of the instance (i.e., task, aggregation operation, and prompt/output specification) is the same as in Section \ref{sec:empirical}. These results demonstrate support expansion, binding-set contraction, and feasibility expansion.}
\label{tab:empirical_results_54_temp_07}
\end{table}

\begin{table}[h]
     \centering
     \begin{minipage}{0.48\textwidth}
     \centering
     \resizebox{\textwidth}{!}{%
     \begin{tabular}{lcccc}
     \toprule
     & $x_1$ & $x_2$ & $x_3$ \\
     \midrule
     $\bx^{(1)}$  & $[0.83, 0.89]$ & $[0.00, 0.00]$ & $[0.05, 0.09]$ \\
     $\bx^{(2)}$  & $[0.00, 0.00]$ & $[0.40, 0.48]$ & $[0.26, 0.33]$ \\
     \midrule
     $\bx^{(A)}$  & $[0.41, 0.49]$ & $[0.40, 0.48]$ & $[0.12, 0.18]$ \\
     $\bx^*_P(\bx^{(A)})$  & $[0.36, 0.49]$ & $[0.13, 0.23]$ & $[0.34, 0.47]$ \\
     \bottomrule
     \end{tabular}%
     }
     \subcaption{Support Expansion}
   %  \label{tab:support_expansion}
     \end{minipage}%
     \hfill
     \begin{minipage}{0.48\textwidth}
     \centering
     \resizebox{\textwidth}{!}{%
     \begin{tabular}{lcccc}
     \toprule
     & $x_1$ & $x_2$ & $x_3$ \\
     \midrule
     $\bx^{(1)}$  & $[0.33, 0.40]$ & $[0.71, 0.78]$ & $[0.01, 0.03]$ \\
     $\bx^{(2)}$  & $[0.32, 0.40]$ & $[0.00, 0.02]$ & $[0.79, 0.85]$ \\
     \midrule
     $\bx^{(A)}$  & $[0.02, 0.05]$ & $[0.02, 0.05]$ & $[0.02, 0.05]$ \\
     $\bx^*_P(\bx^{(A)})$  & $[0.00, 0.01]$ & $[0.04, 0.10]$ & $[0.01, 0.04]$ \\
     \bottomrule
     \end{tabular}%
     }
     \subcaption{Feasibility Expansion}
  %  \label{tab:feasibility_expansion}
     \end{minipage}

     \vspace{1em}
     \begin{minipage}{0.6\textwidth}
     \centering
     \resizebox{\textwidth}{!}{%
     \begin{tabular}{lccccc}
     \toprule
     & $x_1$ & $x_2$ & $x_3$ & $x_4$ & $x_5$ \\
     \midrule
     $\bx^{(1)}$ & $[0.60, 0.68]$ & $[0.28, 0.35]$ & $[0.03, 0.06]$ & $[0.00, 0.00]$ & $[0.08, 0.13]$ \\
     $\bx^{(2)}$ & $[0.25, 0.32]$ & $[0.00, 0.01]$ & $[0.66, 0.74]$ & $[0.02, 0.05]$ & $[0.01, 0.03]$ \\
     \midrule
     $\bx^{(A)}$  & $[0.00, 0.00]$ & $[0.00, 0.00]$ & $[0.00, 0.00]$ & $[0.00, 0.00]$ & $[0.00, 0.00]$ \\
     $\bx^*_P(\bx^{(A)})$  & $[0.00, 0.00]$ & $[0.00, 0.00]$ & $[0.00, 0.00]$ & $[0.00, 0.00]$ & $[0.00, 0.00]$ \\
     \bottomrule
     \end{tabular}%
     }
     \subcaption{Binding-set contraction}
   %  \label{tab:binding_contraction}
     \end{minipage}
     \hfill
     \begin{minipage}{0.38\textwidth}
     \centering
     \resizebox{\textwidth}{!}{%
     \begin{tabular}{lc}
     \toprule
     \shortstack[l]{Experiment} & \shortstack{$\|\bx^*_P(\bx^{(A)}) - \bx^{(A)}\|_1$} \\
     \midrule
     Support expansion & [0.32, 0.83] \\
     Feasibility expansion & [0.01, 0.18] \\
     Binding-set contraction & [0.00, 0.00] \\
     \bottomrule
     \end{tabular}%
     }
     \subcaption{$\ell_1$ distance from $\bx^{(A)}$}
     %\label{tab:empirical_angular_distance_summary}
     \end{minipage}
     \caption{Results for \textbf{gpt-5-mini} with the default temperature. The empirical setup and construction of the instance (i.e., task, aggregation operation, and output specification) is the same as in Section 5. For support expansion and feasibility expansion, we use the same prompt specification as in Section \ref{sec:empirical}. For binding-set contraction, the prompts used to generate $x^{(1)}$ and $x^{(2)}$ slightly differ: we set $G_1^{\text{inc}} = \left\{\mathcal{T}_1^P\right\}, G_1^{\text{exc}} = \left\{\mathcal{T}_2^P\right\}$, and we set $G_2^{\text{inc}} = \left\{\mathcal{T}_1^P, \mathcal{T}_2^P\right\}, \text{op}_2^{\text{inc}} = \left\{\text{or}\right\}$. These results demonstrate support expansion and feasibility expansion (modestly), but not binding-set contraction: intersection aggregation produces an empty list ($\bx^{(A)} = \mathbf{0}$), even with the modified prompt specifications described above.}
     \label{tab:empirical_results_5mini_temp_none}
     \end{table}

\FloatBarrier
\subsubsection*{Experiment 3 (E3): Combining heterogeneous models}

\begin{table}[h]
\centering
\begin{minipage}{0.48\textwidth}
\centering
\resizebox{\textwidth}{!}{%
\begin{tabular}{lccc}
\toprule
& $x_{1}$ & $x_{2}$ & $x_{3}$ \\
\midrule
$\bx^{(1)}$  & $[0.62, 0.71]$ & $[0.00, 0.01]$ & $[0.01, 0.04]$ \\
$\bx^{(2)}$  & $[0.00, 0.01]$ & $[0.80, 0.87]$ & $[0.01, 0.03]$ \\
\midrule
$\bx^{(A)}$  & $[0.29, 0.38]$ & $[0.45, 0.54]$ & $[0.01, 0.03]$ \\
$\bx^*_P(\bx^{(A)})$  & $[0.24, 0.31]$ & $[0.29, 0.36]$ & $[0.21, 0.27]$ \\
\bottomrule
\end{tabular}%
}
\subcaption{Support Expansion}
%\label{tab:support_expansion}
\end{minipage}%
\hfill
\begin{minipage}{0.48\textwidth}
\centering
\resizebox{\textwidth}{!}{%
\begin{tabular}{lccc}
\toprule
& $x_{1}$ & $x_{2}$ & $x_{3}$ \\
\midrule
$\bx^{(1)}$  & $[0.66, 0.75]$ & $[0.32, 0.42]$ & $[0.01, 0.04]$ \\
$\bx^{(2)}$  & $[0.37, 0.47]$ & $[0.00, 0.01]$ & $[0.72, 0.80]$ \\
\midrule
$\bx^{(A)}$  & $[0.00, 0.01]$ & $[0.00, 0.01]$ & $[0.00, 0.01]$ \\
$\bx^*_P(\bx^{(A)})$  & $[0.01, 0.04]$ & $[0.00, 0.01]$ & $[0.00, 0.01]$ \\
\bottomrule
\end{tabular}%
}
\subcaption{Feasibility Expansion}
%\label{tab:feasibility_expansion}
\end{minipage}

\vspace{1em}
\begin{minipage}{0.6\textwidth}
\centering
\resizebox{\textwidth}{!}{%
\begin{tabular}{lccccc}
\toprule
& $x_{1}$ & $x_{2}$ & $x_{3}$ & $x_{4}$ & $x_{5}$ \\
\midrule
$\bx^{(1)}$  & $[0.80, 0.87]$ & $[0.08, 0.14]$ & $[0.00, 0.01]$ & $[0.00, 0.01]$ & $[0.00, 0.02]$ \\
$\bx^{(2)}$  & $[0.30, 0.39]$ & $[0.00, 0.01]$ & $[0.56, 0.66]$ & $[0.09, 0.16]$ & $[0.00, 0.02]$ \\
\midrule
$\bx^{(A)}$  & $[0.21, 0.29]$ & $[0.00, 0.01]$ & $[0.00, 0.01]$ & $[0.00, 0.01]$ & $[0.00, 0.01]$ \\
$\bx^*_P(\bx^{(A)})$  & $[0.02, 0.08]$ & $[0.00, 0.01]$ & $[0.01, 0.04]$ & $[0.00, 0.03]$ & $[0.09, 0.18]$ \\
\bottomrule
\end{tabular}%
}
\subcaption{Binding-set contraction}
%\label{tab:binding_contraction}
\end{minipage}
\hfill
\begin{minipage}{0.38\textwidth}
\centering
\resizebox{\textwidth}{!}{%
\begin{tabular}{lc}
\toprule
\shortstack[l]{Experiment} & \shortstack{$\|\bx^*_P(\bx^{(A)}) - \bx^{(A)}\|_1$} \\
\midrule
Support expansion & [0.26, 0.66] \\
Feasibility expansion & [0.00, 0.07] \\
Binding-set contraction & [0.21, 0.54] \\
\bottomrule
\end{tabular}%
}
\subcaption{$\ell_1$ distance from $\bx^{(A)}$}
%\label{tab:empirical_angular_distance_summary}
\end{minipage}

\caption{Results for aggregating two different models: \textbf{gpt-4o-mini} (temp 0) and \textbf{gpt-5.4} (temp 0). {Support expansion and binding set contraction seems to work. The empirical setup and construction of the instance (i.e., task, aggregation operation, and prompt/output specification) is the same as in Section 5. $\bx^*_P(\bx^{(A)})$ is chosen via brute force search over prompt topics and over the two models.  These results demonstrate support expansion, binding-set contraction. Feasibility expansion does not appear to occur for this instance due to intersection resulting in empty lists.}
}
\label{tab:empirical_results_4omini_temp_00_x_54_temp_00}
\end{table}

\begin{table}[h]
\centering
\begin{minipage}{0.48\textwidth}
\centering
\resizebox{\textwidth}{!}{%
\begin{tabular}{lccc}
\toprule
& $x_{1}$ & $x_{2}$ & $x_{3}$ \\
\midrule
$\bx^{(1)}$  & $[0.70, 0.78]$ & $[0.00, 0.01]$ & $[0.04, 0.09]$ \\
$\bx^{(2)}$  & $[0.00, 0.01]$ & $[0.76, 0.83]$ & $[0.01, 0.04]$ \\
\midrule
$\bx^{(A)}$  & $[0.32, 0.41]$ & $[0.45, 0.55]$ & $[0.03, 0.08]$ \\
$\bx^*_P(\bx^{(A)})$  & $[0.00, 0.01]$ & $[0.42, 0.55]$ & $[0.21, 0.33]$ \\
\bottomrule
\end{tabular}%
}
\subcaption{Support Expansion}
%\label{tab:support_expansion}
\end{minipage}%
\hfill
\begin{minipage}{0.48\textwidth}
\centering
\resizebox{\textwidth}{!}{%
\begin{tabular}{lccc}
\toprule
& $x_{1}$ & $x_{2}$ & $x_{3}$ \\
\midrule
$\bx^{(1)}$  & $[0.66, 0.75]$ & $[0.31, 0.41]$ & $[0.01, 0.05]$ \\
$\bx^{(2)}$  & $[0.30, 0.40]$ & $[0.00, 0.02]$ & $[0.74, 0.82]$ \\
\midrule
$\bx^{(A)}$  & $[0.00, 0.01]$ & $[0.00, 0.01]$ & $[0.00, 0.01]$ \\
$\bx^*_P(\bx^{(A)})$  & $[0.01, 0.04]$ & $[0.00, 0.01]$ & $[0.00, 0.01]$ \\
\bottomrule
\end{tabular}%
}
\subcaption{Feasibility Expansion}
%\label{tab:feasibility_expansion}
\end{minipage}

\vspace{1em}
\begin{minipage}{0.6\textwidth}
\centering
\resizebox{\textwidth}{!}{%
\begin{tabular}{lccccc}
\toprule
& $x_{1}$ & $x_{2}$ & $x_{3}$ & $x_{4}$ & $x_{5}$ \\
\midrule
$\bx^{(1)}$  & $[0.59, 0.69]$ & $[0.28, 0.37]$ & $[0.01, 0.04]$ & $[0.00, 0.01]$ & $[0.07, 0.12]$ \\
$\bx^{(2)}$  & $[0.15, 0.22]$ & $[0.00, 0.01]$ & $[0.68, 0.76]$ & $[0.05, 0.10]$ & $[0.00, 0.02]$ \\
\midrule
$\bx^{(A)}$  & $[0.99, 1.00]$ & $[0.00, 0.01]$ & $[0.00, 0.01]$ & $[0.00, 0.01]$ & $[0.00, 0.01]$ \\
$\bx^*_P(\bx^{(A)})$  & $[0.78, 0.88]$ & $[0.07, 0.16]$ & $[0.00, 0.01]$ & $[0.00, 0.03]$ & $[0.00, 0.04]$ \\
\bottomrule
\end{tabular}%
}
\subcaption{Binding-set contraction}
%\label{tab:binding_contraction}
\end{minipage}
\hfill
\begin{minipage}{0.38\textwidth}
\centering
\resizebox{\textwidth}{!}{%
\begin{tabular}{lc}
\toprule
\shortstack[l]{Experiment} & \shortstack{$\|\bx^*_P(\bx^{(A)}) - \bx^{(A)}\|_1$} \\
\midrule
Support expansion & [0.43, 0.84] \\
Feasibility expansion & [0.00, 0.07] \\
Binding-set contraction & [0.18, 0.46] \\
\bottomrule
\end{tabular}%
}
\subcaption{$\ell_1$ distance from $\bx^{(A)}$}
%\label{tab:empirical_angular_distance_summary}
\end{minipage}
\caption{Results for aggregating two different models: \textbf{gpt-4o-mini} (temp 0) and \textbf{gpt-5-mini} (temp default).  The empirical setup and construction of the instance (i.e., task, aggregation operation, and prompt/output specification) is the same as in Section 5. $\bx^*_P(\bx^{(A)})$ is chosen via brute force search over prompt topics and over the two models. These results demonstrate support expansion, binding-set contraction. Feasibility expansion does not appear to occur for this instance due to intersection resulting in empty lists.
}
\label{tab:empirical_results_4omini_temp_00_x_5mini_temp_none}
\end{table}

\begin{table}[h]
\centering
\begin{minipage}{0.48\textwidth}
\centering
\resizebox{\textwidth}{!}{%
\begin{tabular}{lccc}
\toprule
& $x_{1}$ & $x_{2}$ & $x_{3}$ \\
\midrule
$\bx^{(1)}$  & $[0.70, 0.78]$ & $[0.00, 0.01]$ & $[0.04, 0.09]$ \\
$\bx^{(2)}$  & $[0.00, 0.01]$ & $[0.36, 0.45]$ & $[0.15, 0.23]$ \\
\midrule
$\bx^{(A)}$  & $[0.32, 0.41]$ & $[0.45, 0.54]$ & $[0.05, 0.10]$ \\
$\bx^*_P(\bx^{(A)})$  & $[0.24, 0.31]$ & $[0.29, 0.36]$ & $[0.21, 0.27]$ \\
\bottomrule
\end{tabular}%
}
\subcaption{Support Expansion}
% \label{tab:support_expansion}
\end{minipage}%
\hfill
\begin{minipage}{0.48\textwidth}
\centering
\resizebox{\textwidth}{!}{%
\begin{tabular}{lccc}
\toprule
& $x_{1}$ & $x_{2}$ & $x_{3}$ \\
\midrule
$\bx^{(1)}$  & $[0.34, 0.43]$ & $[0.72, 0.80]$ & $[0.01, 0.03]$ \\
$\bx^{(2)}$  & $[0.39, 0.48]$ & $[0.00, 0.01]$ & $[0.69, 0.77]$ \\
\midrule
$\bx^{(A)}$  & $[0.30, 0.40]$ & $[0.21, 0.29]$ & $[0.21, 0.29]$ \\
$\bx^*_P(\bx^{(A)})$  & $[0.41, 0.55]$ & $[0.25, 0.37]$ & $[0.36, 0.50]$ \\
\bottomrule
\end{tabular}%
}
\subcaption{Feasibility Expansion}
% \label{tab:feasibility_expansion}
\end{minipage}

\vspace{1em}
\begin{minipage}{0.6\textwidth}
\centering
\resizebox{\textwidth}{!}{%
\begin{tabular}{lccccc}
\toprule
& $x_{1}$ & $x_{2}$ & $x_{3}$ & $x_{4}$ & $x_{5}$ \\
\midrule
$\bx^{(1)}$  & $[0.62, 0.71]$ & $[0.25, 0.34]$ & $[0.02, 0.05]$ & $[0.00, 0.01]$ & $[0.05, 0.10]$ \\
$\bx^{(2)}$  & $[0.32, 0.41]$ & $[0.00, 0.01]$ & $[0.55, 0.64]$ & $[0.09, 0.16]$ & $[0.00, 0.01]$ \\
\midrule
$\bx^{(A)}$  & $[0.16, 0.24]$ & $[0.00, 0.01]$ & $[0.00, 0.01]$ & $[0.00, 0.01]$ & $[0.00, 0.01]$ \\
$\bx^*_P(\bx^{(A)})$  & $[0.15, 0.26]$ & $[0.00, 0.01]$ & $[0.00, 0.01]$ & $[0.00, 0.01]$ & $[0.41, 0.55]$ \\
\bottomrule
\end{tabular}%
}
\subcaption{Binding-set contraction}
% \label{tab:binding_contraction}
\end{minipage}
\hfill
\begin{minipage}{0.38\textwidth}
\centering
\resizebox{\textwidth}{!}{%
\begin{tabular}{lc}
\toprule
\shortstack[l]{Experiment} & \shortstack{$\|\bx^*_P(\bx^{(A)}) - \bx^{(A)}\|_1$} \\
\midrule
Support expansion & [0.20, 0.66] \\
Feasibility expansion & [0.08, 0.70] \\
Binding-set contraction & [0.40, 0.69] \\
\bottomrule
\end{tabular}%
}
\subcaption{$\ell_1$ distance from $\bx^{(A)}$}
% \label{tab:empirical_angular_distance_summary}
\end{minipage}
\caption{Results for aggregating two different models:  \textbf{gpt-5.4} (temp 0) and \textbf{gpt-5-mini} (temp default). The empirical setup and construction of the instance (i.e., task, aggregation operation, and prompt/output specification) is the same as in Section 5. $\bx^*_P(\bx^{(A)})$ is chosen via brute force search over prompt topics and over the two models. These results demonstrate support expansion, binding-set contraction, and feasibility expansion.
}
\label{tab:empirical_results_54_temp_00_x_5mini_temp_none}
\end{table}